\documentclass[runningheads]{llncs}

\usepackage{amsmath,amssymb}
\usepackage{mathtools}
\usepackage{microtype}
\usepackage{algorithm}
\usepackage{algpseudocode}
\usepackage{longtable}
\usepackage{float}
\usepackage{graphicx}
\usepackage{multirow}
\usepackage{xcolor}
\usepackage{tikz}
\usetikzlibrary{arrows.meta,positioning,calc,fit,backgrounds}
\usepackage{hyperref}
\makeatletter
\let\Moose@oldappendix\appendix
\renewcommand{\appendix}{%
  \Moose@oldappendix
  \renewcommand{\theHsection}{\Alph{section}}%
  \renewcommand{\theHsubsection}{\theHsection.\arabic{subsection}}%
  \renewcommand{\theHsubsubsection}{\theHsubsection.\arabic{subsubsection}}%
}
\makeatother
\usepackage{cleveref}
\usepackage{todonotes}

\newcommand{\EL}{\mathcal{EL}}
\newcommand{\ELpp}{$\EL^{++}$}
\newcommand{\ALC}{$\mathcal{ALC}$}
\newcommand{\ALCI}{$\mathcal{ALCI}$}
\newcommand{\BOT}{\bot}
\newcommand{\TOP}{\top}
\newcommand{\sub}{\sqsubseteq}
\newcommand{\sym}[1]{\mathsf{#1}}
\newcommand{\Sub}{\mathsf{Sub}}
\newcommand{\Onto}{\mathcal{O}}
\newcommand{\sat}{\mathsf{Sat}}
\newcommand{\WMC}{\mathsf{WMC}}
\newcommand{\Sig}{\mathsf{Sig}}
\newcommand{\selAx}{\mathsf{sel}}
\newcommand{\disponte}{\mathsf{P}}
\newcommand{\disponteRat}{\mathsf{P}^{\mathbb{Q}}}
\newcommand{\compileSat}{\mathsf{cmp}}
\newcommand{\wmcRat}{\mathsf{WMC}^{\mathbb{Q}}}
\newcommand{\fragWit}{\mathsf{OWL2ELFragment}}
\newcommand{\NOTE}[1]{}

\begin{document}

\title{Moose: Latent concept learning with reasoning-shortcut awareness in $\EL^{++}$}
\titlerunning{Moose}

\author{Olga Mashkova\inst{1}\orcidID{0000-0002-4916-1660}, Asaad Mohammedsaleh\inst{1}\orcidID{0009-0007-3160-8819}, Fernando Zhapa-Camacho\inst{1}\orcidID{0000-0002-0710-2259} and Robert Hoehndorf\inst{1}\orcidID{0000-0001-8149-5890}}
\authorrunning{Mashkova et al.}

\institute{Computer Science Program, Computer, Electrical, and
  Mathematical Sciences \& Engineering Division, King Abdullah
  University of Science and Technology, Thuwal 23955, Saudi Arabia\\
\email{\{first\_name.last\_name\}@kaust.edu.sa}}

\maketitle

% =========================================================================
\begin{abstract}
  The OWL 2 EL profile is used in some of the largest production
  ontologies, including the Gene Ontology and SNOMED~CT. Existing
  neuro-symbolic (NeSy) learning methods accept propositional theories
  or Datalog, and
  reasoning-shortcut (RS) awareness has not been investigated in
  ontology settings. We present
  \textsc{Moose}, a method that compiles an \ELpp\ TBox and finite
  ABox to a Sentential Decision Diagram (SDD). The SDD acts as a
  differentiable weighted-model-counting layer, and we add closure
  clauses outside the \ELpp\ profile on declared exhaustive families
  to overcome the limited expressivity of \ELpp\ under partial
  supervision. We show termination, soundness, completeness, and
  polynomial intermediate sizes, and validate the proofs in Lean.
  We then define the first formal partial-supervision
  latent-concept-learning task over an OWL EL ontology, i.e.,
  learning per-individual classifiers for latent concepts from
  observed ABox literals, and evaluate \textsc{Moose} on
  MNIST-with-ontology and Pizza\"iolo. \textsc{Moose} improves over
  propositional-NeSy, fuzzy-logic, and ontology embedding baselines,
  and presents the first reasoning-shortcut analysis in an OWL EL
  setting.

\keywords{OWL 2 EL \and neuro-symbolic learning \and knowledge
compilation \and weighted model counting \and reasoning shortcuts.}
\end{abstract}
% =========================================================================

\section{Introduction}\label{sec:intro}

The OWL~2 EL profile underpins some of the largest and most
widely-used ontologies, such as the Gene Ontology with $\sim$50K
classes and $\sim$8M annotations \cite{Ashburner2000,GO2023},
SNOMED~CT \cite{Donnelly2006}, and most of the OBO Foundry ontologies
\cite{Smith2007OBOFoundry,Jackson2021OBOFoundry}, because EL trades
expressivity for polynomial-time reasoning on subsumption, instance
checking, and consistency \cite{Baader2005,Kazakov2014}.
Neuro-symbolic (NeSy) learning combines neural perception with
symbolic constraints; several families coexist. Fuzzy / t-norm methods
replace propositional truth with continuous operators on $[0,1]$
\cite{Badreddine2022LTN}; embedding methods score entities and
relations in a continuous geometry
\cite{Kulmanov2019,Chen2021,Chen2025Survey};
\emph{knowledge-compilation} methods compile the constraint to an
arithmetic circuit whose weighted model count (WMC) is differentiable
in the inputs \cite{Manhaeve2018,Xu2018,Pryor2023,Ahmed2022SPL}. The
knowledge-compilation family retains an exact WMC of the constraint as
its training signal under partial supervision.

The task we address, \emph{ABox-supervised latent concept learning},
takes the following form. Each training instance supplies one
perceptual input per named individual (e.g.\ an image of a digit)
together with a partial set of ABox literals over an \emph{observable}
signature; the truth values of atoms over a separate \emph{latent}
signature are never directly supervised. The objective is to learn
per-individual classifiers for the truth values of latent concept
atoms. A WMC layer over a circuit compiled from the ontology supplies
the training signal: evaluated with the perception's per-atom outputs
and clamped on the observed evidence, the layer scores how compatible
the perception's predictions are with the ontology, and the gradient
steers the perception toward configurations that the constraints
admit. When several latent configurations satisfy the same evidence,
no method can recover the true labels from supervision alone;
per-atom-independent predictors fail to recognize this and collapse to
high confidence on a single arbitrary configuration rather than
spreading mass over the satisfying ones
\cite{Manhaeve2018,Marconato2023,vanKrieken2024Independence}.

Existing knowledge-compilation NeSy frameworks accept propositional
formulas, Datalog programs, or finite-domain rules
\cite{Manhaeve2018,Xu2018,Pryor2023,Ahmed2022SPL}, but not OWL
ontologies directly. Two recent works target description logics: a
domino-style reduction of \ALCI{} to a probabilistic circuit
\cite{Lazzari2026} that treats the circuit as a regulariser under full
label supervision, and a fuzzy approximation of \ELpp\ via
Goguen-implication semantics \cite{Zhao2025DFELpp} that targets
knowledge-base completion without exact WMC or compilation soundness;
neither performs partial-supervision concept learning over OWL EL.
Embedding methods such as ELEmbeddings and OWL2Vec$^*$
\cite{Kulmanov2019,Chen2021} map ontology structure to a continuous
geometry (via geometric model-theoretic constraints or graph walks
respectively), trading exact entailment for differentiability. A
separate body of work on reasoning shortcuts (RS)
\cite{Marconato2023,Marconato2024BEARS,vanKrieken2024Independence,vanKrieken2025RS,vanKrieken2025NeSyDM,Bortolotti2024RSBench}
formally proves that conditional independence among predicted concepts
is incompatible with RS-awareness, and develops mitigation strategies
(BEARS ensembles, neuro-symbolic diffusion); these results are stated
entirely in propositional or Datalog NeSy. OWL 2 EL underpins many
production ontologies, so the RS phenomena documented in propositional
settings translate directly into uncertainty-modeling failures in
deployed ontology-driven systems.  Translations of OWL EL into
other formalisms are well established: it is Datalog-rewritable
\cite{Krotzsch2010ELDatalog,Carral2019Datalog}, and probabilistic
\cite{Ceylan2014,Ceylan2017} and fuzzy
\cite{Zhao2025DFELpp,Bobillo2011} variants have been studied at
length. However, a rewriting yields a reasoning procedure rather than a
learning signal, and the probabilistic and fuzzy variants relax or
re-specify the model-theoretic semantics rather than compiling it.

Here we present \textsc{Moose}, a method that compiles an
\ELpp\ ontology and a finite ABox domain into a Sentential Decision
Diagram \cite{Darwiche2011,Choi2013} whose models coincide with
ABox interpretations satisfying the entailments of the input
ontology, paired with a learning framework that uses the
differentiable circuit as the sole supervision channel for latent
concept atoms and provides the first RS analysis in an OWL EL setting.
Our contributions are:

\begin{enumerate}
\item \textbf{End-to-end OWL~2 EL compilation with full proofs.} We
  compile the EL profile, including role chains
  $R_1 \circ R_2 \sub S$ and role hierarchies (the constructs that
  distinguish SNOMED~CT and the Gene Ontology from a propositional
  or \ALCI{} setting), via ELK-style saturation \cite{Kazakov2014} to
  an SDD supporting WMC. 
The four formal contributions are a
  verified SDD encoding (\Cref{thm:encoding}), the rational
  DISPONTE \cite{Riguzzi2015}
  distribution-semantics correspondence (\Cref{thm:end-to-end}), and
  SCC-compositional factorizations at the Sat and WMC levels
  (\Cref{thm:scc-sat,thm:scc-wmc}).  All four are mechanized in
  Lean~4 (no \textsc{Moose}-specific axioms;
  \Cref{app:proofs,app:lean-index}), along with the supporting Lean
  libraries for ELK saturation, soundness/completeness on \ELpp\
  \cite{Kazakov2014}, and SDD knowledge compilation, which to our
  knowledge are the first such formalizations
  (\Cref{app:elk-mechanization}).  The closure-augmented variant is
  correct (\Cref{thm:closure-correct}) and inference is linear in
  the SDD (\Cref{thm:inference-complexity}).
\item \textbf{ABox-supervised latent concept learning under partial
  axiom observation.} We pose a learning task in which a subset of
  ABox literals over the observable signature is given as evidence
  (with truth values written $\mathsf{true}$ and
  $\mathsf{false}$, so that they are not confused with the
  \ELpp\ concepts $\TOP$ and $\BOT$), and the goal is to learn
  per-individual classifiers for the truth values of latent concept
  atoms (role atoms are observed or marginalized, never classified),
  to our knowledge the first such formulation over an OWL EL
  ontology, generalizing DeepProbLog \cite{Manhaeve2018} and Semantic
  Loss \cite{Xu2018} to description logic constraints with role
  chains and role hierarchies.
\item \textbf{Reasoning-shortcut analysis transposed to OWL EL.} We
  compute family-argmax accuracy, expected calibration error (ECE),
  and the RS-consistency rate under \textsc{Moose} and two plug-in
  mitigations (\textsc{Moose}+BEARS \cite{Marconato2024BEARS} and
  \textsc{Moose}+NeSyDM \cite{vanKrieken2025NeSyDM}), and isolate a
  calibration-vs-accuracy trade-off: BEARS leads on family-argmax
  accuracy via ensemble diversification on RS-suspect inputs, NeSyDM
  leads on calibration (ECE) under symbolic ambiguity.
\item \textbf{A benchmark suite.}  A single MNIST-with-ontology
  benchmark covering three supervision regimes (atomic, relational,
  and role-chain), together with a fourth experiment on the
  Pizza\"iolo synthetic-image dataset \cite{Pizzaiolo2024} that
  transfers the method to a real expert-authored OWL EL ontology.
\end{enumerate}

% =========================================================================
\section{Background and related work}\label{sec:bg}

\subsection{Preliminaries: OWL EL, knowledge compilation,
  partial-supervision NeSy}\label{sec:bg-prelim}

\paragraph{The OWL 2 EL profile.}
\ELpp\ \cite{Baader2005} is the description logic basis of OWL~2~EL
(syntax and semantics in \Cref{tab:elpp}).
Concepts are built from atomic names, $\TOP$, $\BOT$, conjunction
$C \sqcap D$, and existential restriction $\exists R.C$; an ontology
$\Onto$ contains GCIs $C \sub D$, role inclusions $R \sub S$, and role
chains $R_1 \circ R_2 \sub S$, with subsumption, instance checking,
and consistency decidable in time polynomial in $|\Onto|$. ELK
\cite{Kazakov2014} realizes this bound through a consequence-based
saturation closure under ten completion rules
\cite{TenaCucala2019Consequence}. \ELpp\ is moreover
Datalog-rewritable: consequence-based reasoning can be recast as the
evaluation of a Datalog program
\cite{Krotzsch2010ELDatalog,Carral2019Datalog}, and Stage~2 of our
pipeline (\S\ref{sec:pipeline}) uses precisely this view, treating the
ELK saturation as a monotone Datalog program. Such a rewriting yields
a reasoning procedure, not a differentiable one. Probabilistic \ELpp\
\cite{Ceylan2014,Ceylan2017,GutierrezBasulto2011}
attaches weights via Bayesian-network encodings; fuzzy \ELpp\
\cite{Zhao2025DFELpp,Bobillo2011} replaces model-theoretic
semantics with a t-norm-based approximation.

\paragraph{Knowledge compilation, WMC, and SDDs.}
For a propositional formula $\varphi$ and literal weights
$w(\ell) \ge 0$, the weighted model count is
$\WMC(\varphi; w) = \sum_{M \models \varphi}\prod_{\ell\in M}
w(\ell)$ \cite{chavira2008probabilistic}. With arbitrary
non-negative weights $\WMC$ is a generic algebraic quantity; it
acquires a probabilistic reading precisely when the weights are
\emph{per-variable normalized}, i.e.\ $w(X) + w(\neg X) = 1$ for
every variable $X$. Under that condition the weights specify a
product distribution over truth assignments, and
$\WMC(\varphi; w)$ is exactly the probability that a sample from
this distribution satisfies $\varphi$: it lies in $[0,1]$ and
$\WMC(\varphi; w) + \WMC(\neg\varphi; w) = 1$
\cite{chavira2008probabilistic}. \textsc{Moose} enforces this
normalization by construction (\S\ref{sec:methods-task},
\Cref{tab:atom-weight}), and \Cref{thm:end-to-end} establishes the
resulting WMC/distribution-semantics identity for the compiled
circuit. When the per-variable weights are produced by a neural
network, $\WMC$ is differentiable in the network's outputs.
Knowledge compilation
translates $\varphi$ to a circuit whose structural properties
(smoothness, decomposability, determinism) make $\WMC$ linear in
circuit size and reduce probabilistic queries to circuit traversals
\cite{Darwiche2002}. A Sentential Decision Diagram (SDD)
\cite{Darwiche2011,Choi2013} is one such circuit and supports
polynomial-time conjunction, disjunction, conditioning, and weighted
model counting; exact definitions are in \Cref{app:proofs}, and
\Cref{app:worked} shows a small example.

\paragraph{Knowledge-compilation NeSy and partial supervision.}
A predictor composes a neural concept extractor
$p_\theta(\mathbf{c} \mid \mathbf{x})$ with a symbolic constraint
$\varphi$ and trains by maximizing the marginal
\begin{equation}\label{eq:nesy-marginal}
   p_\theta(\mathbf{y} \mid \mathbf{x}) =
   \sum_{\mathbf{c}}
   p_\theta(\mathbf{c} \mid \mathbf{x})\,
   \mathbf{1}[\varphi(\mathbf{c}) = \mathbf{y}],
\end{equation}
the standard objective across this family
\cite{Manhaeve2018,Xu2018,Ahmed2022SPL,Pryor2023,vanKrieken2025NeSyDM}.
DeepProbLog \cite{Manhaeve2018} grounds $\varphi$ as probabilistic
Datalog with neural-network annotated facts; Semantic Loss
\cite{Xu2018} adds $-\log\WMC(\varphi)$ on propositional constraints;
Semantic Probabilistic Layers \cite{Ahmed2022SPL} combine exact
probabilistic inference with logical constraints in a single tractable
circuit; Scallop \cite{Li2023Scallop} compiles differentiable Datalog
with provenance semirings; NeurASP \cite{Yang2020NeurASP} embeds
neural perception inside answer-set programs; and A-NeSI
\cite{vanKrieken2023ANeSI} amortizes the intractable WMC marginal with
a neural surrogate. A \emph{partial-supervision} instance reveals
labels for a subset of $\mathbf{y}$-atoms only; the latent concept
vector $\mathbf{c}$ is recovered through the WMC gradient
(\Cref{def:problem}). None of the existing frameworks accepts an OWL
ontology directly: the user must hand-translate existentials, role
hierarchies, and role chains, losing the soundness, completeness, and
polynomial-time saturation guarantees of consequence-based EL
reasoning \cite{Kazakov2014} in the process.

\subsection{The independence assumption and reasoning
  shortcuts}\label{sec:bg-rs}

Knowledge-compilation NeSy predictors almost universally factorize
$p_\theta(\mathbf{c} \mid \mathbf{x}) = \prod_i p_\theta(c_i \mid
\mathbf{x})$. Marconato et~al.\ \cite{Marconato2023} show that this
can attain optimal log-likelihood while learning unintended concept
semantics, i.e., a \emph{reasoning shortcut} (RS), and identify four
root causes (knowledge structure, ground-truth concept distribution,
objective, extractor architecture); RSBench
\cite{Bortolotti2024RSBench} quantifies RS rates across NeSy
architectures. \cite{vanKrieken2024Independence,vanKrieken2025RS}
prove that the independence factorization is incompatible with
\emph{RS-awareness}: placing non-trivial probability mass on every
constraint-consistent latent assignment. Two mitigations are
available: BEARS \cite{Marconato2024BEARS} keeps the independent
extractor and trains a $K$-encoder ensemble whose members commit to
distinct shortcut explanations; NeSyDM \cite{vanKrieken2025NeSyDM}
replaces independence with a masked discrete-diffusion concept
distribution. Every published RS result is propositional, Datalog, or
finite-domain relational; no OWL fragment has been studied through
this lens. Related work on description logic compilation
\cite{Lazzari2026,Zhao2025DFELpp}, refinement
\cite{Daniele2023ILR,Aspis2022Embed2Sym}, and ontology embeddings
\cite{Kulmanov2019,Chen2021,Chen2025Survey} is reviewed in
\Cref{app:related}.

% =========================================================================
\section{Methods}\label{sec:methods}

\begin{figure}[t]
\centering
\includegraphics[width=\linewidth]{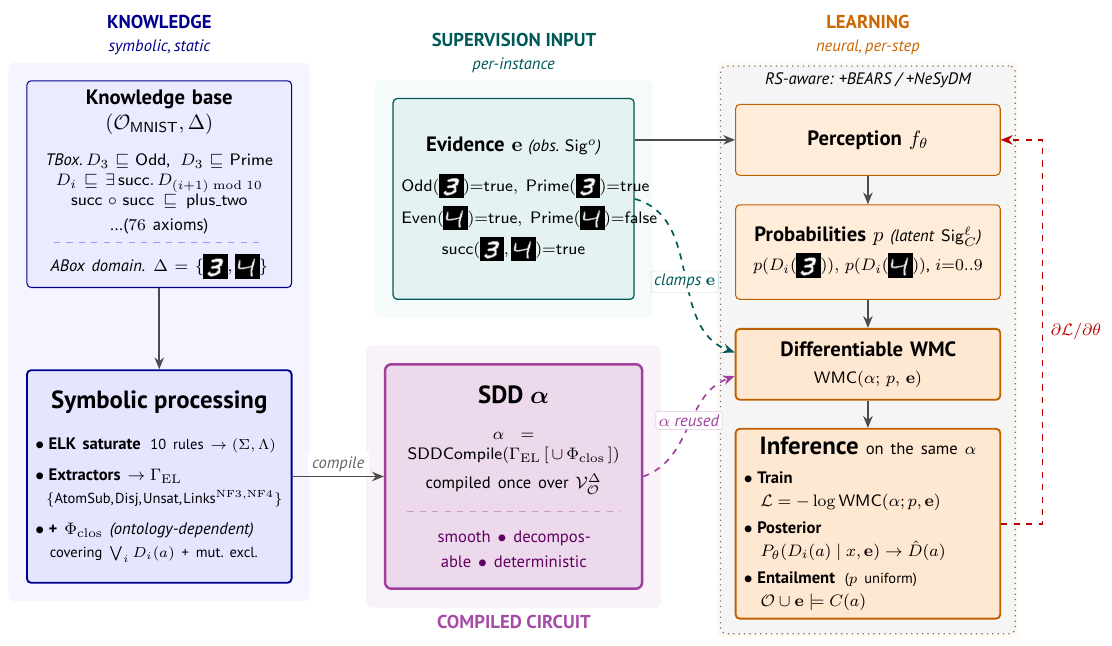}
\caption{End-to-end \textsc{Moose} workflow on
  $\Onto_{\textsf{MNIST}}$ at $\Delta{=}\{a,b\}$. The TBox is compiled
  once via ELK saturation $\to$ shape-aware extractors
  $\to \Gamma_{\mathrm{EL}}$ ($+\Phi_{\mathrm{clos}}$ when supplied)
  $\to$ SDD $\alpha$. The differentiable WMC layer evaluates
  $\WMC(\alpha; p_\theta(\mathbf{x}), \mathbf{e})$ for the training
  loss, the conditional posterior, and the entailment query.}
\label{fig:overview}
\end{figure}

\subsection{Task and ground vocabulary}\label{sec:methods-task}

Given an \ELpp\ ontology $\Onto$ and a finite ABox domain
$\Delta = \{a_1,\dots,a_m\}$, the \emph{ground vocabulary} is
$\mathcal{V}_\Onto^\Delta := \{C(a) : C\in\Sig_C, a\in\Delta\} \cup
\{R(a,b) : R\in\Sig_R, a,b\in\Delta\}$. The modeller partitions each
signature into \emph{observable} and \emph{latent} names,
$\Sig_C = \Sig^o_C \uplus \Sig^\ell_C$ and
$\Sig_R = \Sig^o_R \uplus \Sig^\ell_R$. A training instance
$(\Delta, \mathbf{x}, \mathbf{e})$ comprises a per-individual input
tuple $\mathbf{x}=(x_a)_{a\in\Delta}$ with each $x_a\in\mathcal{X}$
and a partial set of ABox literals $\mathbf{e}$ over the observable
ground vocabulary (Definition~\ref{def:problem},
\Cref{app:atom-weight}). We learn a weight-shared per-individual
classifier $f_\theta : \mathcal{X}\to[0,1]^{|\Sig^\ell_C|}$ that
approximates the marginal of each latent atom under
$\Onto\cup\mathbf{e}$. The differentiable WMC layer evaluates
$\WMC(\alpha; w, \mathbf{e})$ under the literal-weight map $w$ that
assigns $f_\theta(x_a)$ to each latent concept atom $C(a)$, its
observed truth value to each observed atom, $\tfrac12$ to each
unobserved observable and to each latent role atom, and $1-w(X)$ to
each negative literal $\neg X$, where $X$ ranges over the ground
atoms of $\mathcal{V}_\Onto^\Delta$ (\Cref{tab:atom-weight}).
Because $w(X) + w(\neg X) = 1$ for every $X$ by construction, the
map is per-variable normalized: it defines a product distribution
over the ground atoms whose per-atom marginal is $w$, and
$\WMC(\alpha; w, \mathbf{e})$ is the probability under this
distribution that a sample satisfies $\alpha$ and the evidence
$\mathbf{e}$ (\S\ref{sec:bg-prelim}; \Cref{thm:end-to-end}).
This product form encodes the per-atom \emph{independence}
factorization $p_\theta(\mathbf{c}\mid\mathbf{x}) = \prod_i
p_\theta(c_i \mid \mathbf{x})$ shared by knowledge-compilation NeSy
predictors (\S\ref{sec:bg-rs}). We make the assumption explicit
here because it is the object of study rather than an incidental
choice: it is exactly the factorization that induces reasoning
shortcuts, which contribution~3 quantifies via
$\mathrm{RS}_\text{cons}$ and which the BEARS and NeSyDM variants
(\S\ref{sec:methods-train}) are designed to mitigate.

\begin{table}[t]
\centering\footnotesize\setlength{\tabcolsep}{4pt}
\caption{Literal-weight map $w$; the two weights of each variable sum
to $1$.}\label{tab:atom-weight}
\begin{tabular}{@{}llll@{}}
\hline
Literal & Weight & Literal & Weight \\
\hline
$C(a)$, $C \in \Sig^\ell_C$ & $f_\theta(x_a)[C]$
  & $C(a)$, $C \in \Sig^o_C$ unobs. & $\tfrac12$ \\
$C(a) \in \mathbf{e}$ & $1$
  & $R(a,b)$, $R \in \Sig^o_R$ obs. & $1$ or $0$ from $\mathbf{e}$ \\
$\neg C(a) \in \mathbf{e}$ & $0$
  & $R(a,b)$ latent or unobs. & $\tfrac12$ \\
\multicolumn{4}{@{}l@{}}{$\neg X$ for any atom $X$: $1 - {}$weight of $X$} \\
\hline
\end{tabular}
\end{table}

\subsection{Compilation algorithm}\label{sec:pipeline}

The algorithm (full pseudocode in \Cref{alg:pipeline},
\Cref{app:algorithm}) compiles an \ELpp\ ontology and a finite ABox
domain to an SDD over $\mathcal{V}_\Onto^\Delta$ in seven stages; the
SDD is built once at startup and reused across every training step.
Stage~1 runs the ELK saturation, which derives all entailed
subsumptions and role links and records the grounded rule instances
that justify each derivation. Stages~2--5 build a propositional
encoding of the full saturation closure, including deeply nested
existential derivations and cyclic dependency chains, via Clark
completion, strongly-connected-component (SCC) detection,
time-stamped unrolling, and conjunctive normal form (CNF) conversion.
Stages~6--7 ground the saturation onto the finite ABox domain and
compile the resulting clauses into the SDD that the differentiable
WMC layer evaluates.

The two groups of stages operate at different abstraction levels and
serve complementary purposes. Stages~2--5 produce a TBox-level
propositional theory whose variables are concept-level atoms such as
$x_{C \sub D}$ and $x_{E \xrightarrow{R} C}$, including derived
complex concepts (e.g.\ $C \sub \exists R.\exists S.D$); the input
axiom variables $\alpha_i$ in this theory are free, so the theory
supports \emph{probabilistic ontology reasoning}: assigning a weight
$p_i$ to each $\alpha_i$ and computing the WMC gives
$P(\text{query entailed})$. Stages~6--7 operate on the ground
vocabulary $\mathcal{V}_\Onto^\Delta$ whose variables are ABox atoms
$C(a)$ and $R(a,b)$; for \emph{NeSy learning} all ontology axioms are
certain and the WMC literal weights come from a neural network's
per-atom outputs (\Cref{tab:atom-weight}), so the input-axiom tracking
of Stages~2--5 is not consumed.

In the NeSy learning use case, Stages~6--7 return to the Stage~1
saturation output and extract the subset relevant for ABox grounding:
subsumptions, disjointness, and role links between named concepts.
Deeply nested existential derivations that Stages~2--5 process,
including those that create cyclic SCCs requiring time-stamped
unrolling, have no ABox-level counterpart: the complex concepts
produced during saturation (e.g.\ $\exists R.C$ or
$\exists R.\exists S.D$) do not lie in the named signature
$\Sig_C(\Onto)$, and the shape-aware extractors of Stage~6 correctly
filter them out. Stages~2--5 still serve a diagnostic and validation
role: the Clark completion, SCC structure, and CNF size characterize
the TBox complexity and verify that the Stage~6 filter captures every
ABox-relevant consequence.

We summarize each stage below.

\paragraph{Stage 1: ELK saturation.}
\textsf{Saturate}$(\Onto, E)$ runs the ten ELK completion rules
\cite{Kazakov2014} and returns the saturation pair
$(\Sigma, \Lambda)$, where $\Sigma$ is the set of derived subsumptions
$C \sub D$ and $\Lambda$ the set of derived role links
$E \xrightarrow{R} C$, sound and complete with respect to $\Onto$
\cite{Kazakov2014}. We write
$\textsf{Sat}(\Onto) := \Sigma \cup \Lambda$ for the saturation
closure. For each derived atom the saturation records all grounded
rule instances that can derive it (there may be multiple
justifications), and these form the input to Stage~2. When the
ontology contains existential restrictions or role chains, the
saturation may derive non-atomic concepts (e.g.\ $\exists R.C$,
$\exists R.\exists S.D$); these are internal to the completion and are
consumed by Stages~2--5 but filtered out by the shape-aware extractors
of Stage~6.  Algorithmic details are in \Cref{app:algorithm}, and
\Cref{app:worked} works one axiom end-to-end through the
algorithm.

\paragraph{Stages 2--5: TBox-level encoding.}
The saturation of Stage~1 can be viewed as a monotone Datalog program
\cite{Clark1978}. Stages~2--5 turn it into a propositional TBox
theory: Clark completion \cite{Clark1978} replaces each derived
atom's justifications with a biconditional, Tarjan's algorithm
\cite{Tarjan1972} isolates cyclic strongly-connected components,
these are broken by SCC-local time-stamped unrolling, and the
Tseitin transformation \cite{Tseitin1983} yields an equisatisfiable
CNF. With all input-axiom variables true this theory has
$\textsf{Sat}(\Onto)$ as its unique model; when the axioms carry
weights it supports probabilistic ontology reasoning. For NeSy
learning these stages are diagnostic: the deeply nested and cyclic
derivations they handle lie outside the named signature and are
filtered by the Stage~6 extractors, so the ABox grounding of
Stages~6--7 consumes only the Stage-1 saturation. The construction
and its equations are given in \Cref{app:algorithm}.

\paragraph{Stage 6: shape-aware extractors and ABox grounding.}
Stage~6 extractors return to the Stage~1 saturation output
$(\Sigma, \Lambda)$ and select the subset that maps to ground ABox
atoms over $\mathcal{V}_\Onto^\Delta$.  \textsf{AtomSub} emits
$\neg A(a) \vee B(a)$ for every subsumption $A \sqsubseteq B$ in
$\Sigma$ where both $A$ and $B$ are named concepts in $\Sig_C(\Onto)$;
\textsf{Disj} emits $\bigvee_i \neg A_i(a)$ for $n$-ary atomic
disjointness $A_1 \sqcap \dots \sqcap A_n \sqsubseteq \bot$ (flattened
from any parenthesization); \textsf{Unsat} emits $\neg A(a)$ for
atomic $A \sqsubseteq \bot$; \textsf{Links} emits the NF3-forward
clause $\neg E(a) \vee \neg R(a,b) \vee C(b)$ per saturation-derived
atomic link $(E, R, C) \in \Lambda$ (i.e.\ $E \sqsubseteq \exists R.C$
with $E, C \in \Sig_C(\Onto)$), plus an NF4-reverse clause
$\neg R(a,b) \vee \neg C(b) \vee E(a)$ when
$L(E, R) := \{C : (E, R, C) \in \Lambda\}$ is a singleton (so the NF4
direction is deterministic). The union of the clauses produced by these
extractors is the propositional Horn program $\Gamma$ over
$\mathcal{V}_\Onto^\Delta$; the saturation closure is polynomial by
\Cref{thm:circuit-size} and grounding multiplies it by at most
$|\Delta|^2$ (\Cref{app:circuit-size-proof}).

\paragraph{Stage 7: SDD compilation.}
$\Gamma$ is compiled bottom-up into an SDD $\alpha$ that is smooth,
decomposable, and deterministic \cite{Choi2013,Darwiche2002}.
The role-pinning caveat applies to the \textsf{Links} clause:
ELK derives $E \sqsubseteq \exists R.C$ (existential, ``some
witness''); \textsc{Moose} encodes the propositional clause
$E(a) \wedge R(a,b) \to C(b)$, which is universal on the named
pair. Soundness depends on $\Delta$ being closed and on role
atoms over $\Sig^o_R$ being pinned by evidence (a
\textsc{Moose}-specific encoding choice for the
finite-named-domain regime, not an ELK theorem).
The
differentiable WMC layer traverses $\alpha$ and evaluates
$\WMC(\alpha; w)$ by the standard smooth/decomposable/deterministic
recursion (\Cref{eq:sdd-wmc} in \Cref{app:algorithm}),
with literal weights $w$ read from \Cref{tab:atom-weight}.
Decomposability factors the product at each decision node; determinism
makes the prime/sub pairs mutually exclusive, so the sum is plain
rather than an inclusion--exclusion. Conditioning on $\mathbf{e}$
clamps the literal weight of every observed atom to $0$ or $1$.

\subsection{Closure axioms outside the EL profile}\label{sec:closure}

OWL EL is open-world and lacks both right-hand-side disjunction and
cardinality constructors \cite{Baader2005}, so the covering axiom
$\TOP\sqsubseteq D_0\sqcup\dots\sqcup D_{K-1}$ on an exhaustive family
is unstatable in EL. Under partial supervision the WMC gradient on
unobserved members of such a family vanishes at every all-False
extension (e.g.\ on MNIST, observing $\textsf{Even}(a)$ without
closure leaves the digit posterior uninformative). We therefore
optionally append a finite set of \emph{closure axioms}
$\Phi_{\textsf{clos}}$ to $\Gamma_{\textsf{EL}}$ on each
modeller-declared exhaustive family $\mathcal{F}$: pairwise
disjointness, the covering clause $\bigvee_i D_i(a)$, and (when each
$D_i$ carries a distinguishing property profile $\pi_i$) profile-keyed
reverse implications. We write these three jointly manipulated
components as $\Phi_{\textsf{clos}} =
\Phi_{\textsf{mutex}} \cup \Phi_{\textsf{cover}} \cup
\Phi_{\textsf{profile}}$. Two of the three are outside the EL profile but
are standard OWL-DL machinery; \textsc{Moose}'s contribution is the
ELK-via-SDD infrastructure that absorbs $\Phi_{\textsf{clos}}$ at the
propositional level, so the same compilation yields
$\alpha_{\textsf{EL}}$ or $\alpha_{\textsf{clos}}$ with no method
change.  This absorption is free at the CNF level
($\Phi_{\textsf{clos}}$ adds at most $\binom{K}{2}m + (K{+}1)m$
clauses per declared family: $\binom{K}{2}m$ mutex, $m$ covering, and
up to $Km$ profile-keyed reverse implications), but the polynomial
saturation-closure bound of \Cref{thm:circuit-size} applies to the EL
Horn fragment only and does not transfer to
$\Gamma_{\textsf{EL}} \cup \Phi_{\textsf{clos}}$: covering
disjunctions can raise the primal-graph treewidth, and SDD compilation
is worst-case exponential in $|\Gamma|$ with a $O(|\Gamma| \cdot 2^w)$
bound under treewidth $w$ (\Cref{app:treewidth}); inference remains
$O(|\alpha|)$ in the SDD node count regardless
(\Cref{thm:inference-complexity}). Whether closure is needed is
ontology- and regime-dependent: a forward-only EL hierarchy under
factorized perception leaves the all-false latent assignment
consistent with any property evidence, so the WMC gradient is
uninformative and closure restores a non-trivial gradient (an
architectural softmax head encoding mutex+covering, as DeepProbLog
uses via $\textsf{nn}(\cdot)$, achieves the same effect by other
means); when forward subsumption alone pins the latent vector, no
closure is added. The choice is orthogonal to RS-awareness,
which targets the cross-atom factorization at the perception output
\cite{Marconato2023,vanKrieken2024Independence} and is implemented by
the BEARS / NeSyDM wrappers of \S\ref{sec:methods-train}.
\Cref{app:assumptions} summarizes the assumptions
introduced by each pipeline component and what each concedes against
open-world semantics.

\subsection{Inference and RS-aware variants}\label{sec:methods-train}

The differentiable WMC layer supports three inference operations on
the same SDD (training loss, conditional posterior, and entailment
query), varying only literal weights and evidence. The training
loss is
$L(\theta;\mathbf{x},\mathbf{e}) = -\log\WMC(\alpha;
p_\theta(\mathbf{x}),\mathbf{e})$ \cite{Xu2018}, which coincides with
the DeepProbLog marginal loss on this Horn theory by
\Cref{prop:dpl-equiv}. The conditional posterior
$P_\theta(C(a)\mid\mathbf{x},\mathbf{e})$ is the ratio of two WMCs
with $C(a)$ clamped vs.\ left free; the perception-free entailment
query $\Onto\cup\mathbf{e}\models C(a)$ holds iff
$\WMC(\alpha;w_{1/2},\mathbf{e}\cup\{C(a){:=}\mathsf{false}\}) = 0$ (the
$\tfrac12$-weighted SDD acts as a satisfiability oracle). All three
run in $O(|\alpha|)$ (\Cref{thm:inference-complexity},
\Cref{eq:loss,eq:posterior,eq:entail} in \Cref{app:inference-eqs}).
The compilation algorithm is independent of the perception's output
distribution, so RS-aware methods (\S\ref{sec:bg-rs}) plug in by
replacing $p_\theta(\mathbf{x})$ in $w$ without modifying $\alpha$.
\textsc{Moose}+BEARS \cite{Marconato2024BEARS} trains a $K$-encoder
ensemble diversified by Kullback--Leibler (KL) divergence against the
running ensemble average; \textsc{Moose}+NeSyDM
\cite{vanKrieken2025NeSyDM} replaces
$\prod_i p_\theta(c_i\mid\mathbf{x})$ with a masked discrete-diffusion
$q_\theta$, evaluated with both the REINFORCE leave-one-out (RLOO)
estimator \cite{Kool2019RLOO} and an exact-WMC gradient estimator. Wrapper losses with
hyperparameters are in \Cref{app:wrapper-losses}.

% =========================================================================
\section{Correctness and complexity}\label{sec:correctness}

This section states the four theorems that constitute the formal
contribution of \textsc{Moose}: a verified SDD encoding of the
ELK-derived saturation, the unconditional rational DISPONTE
correspondence, and the SCC-compositional factorizations at the Sat
and WMC levels.  Each theorem is mechanized in Lean~4; the
infrastructure on which they rest (ELK soundness/completeness on
\ELpp\ and polynomial-time Sat decidability \cite{Kazakov2014}) is
reused unchanged from the ELK literature and re-mechanized in our Lean
library (\Cref{app:elk-mechanization}).  Proofs, Lean theorem names,
and audit-surface details are deferred to
\Cref{app:proofs,app:lean-index}.

\paragraph{Notation.}
$\Onto$ ranges over \ELpp{} ontologies (OWL~2~EL minus datatype
properties and concrete domains).  $\sat(\Onto, C, D)$ holds iff ELK
derives $C \sub D$ from $\Onto$ \cite{Kazakov2014}.  A
\emph{world} $M : \Onto \to \{0,1\}$ selects the sub-ontology
$\selAx(\Onto, M) := \{\alpha \in \Onto : M(\alpha) = 1\}$
(\emph{sel} short for \emph{selectedAxioms} in the Lean library).
For a weight $w : \Onto \times \{0,1\} \to \mathbb{Q}$, the rational
DISPONTE marginal \cite{Riguzzi2015} is
\[
   \disponteRat(\Onto, C, D, w)
   \;:=\;
   \sum_{M : \Onto \to \{0,1\}}
     \mathbf{1}[\,\sat(\selAx(\Onto, M), C, D)\,]
     \cdot \prod_{\alpha \in \Onto} w(\alpha, M(\alpha));
\]
$\disponte(\Onto, C, D, w)$ denotes the natural-valued analogue when
$w$ takes values in $\mathbb{N}$.  $\compileSat(\Onto, C, D)$
(\emph{cmp} short for \emph{compileSat}) is the verified
Shannon-tree SDD produced by the algorithm of \S\ref{sec:pipeline},
and $\wmcRat(t, w)$ is its rational weighted model count.

\begin{theorem}[Verified SDD encoding]\label{thm:encoding}
  For every \ELpp{} ontology $\Onto$ and concepts $C, D$, there exists
  an SDD tree $t$ such that
  \begin{enumerate}
  \item $\mathrm{model}(t, M) \iff \sat(\selAx(\Onto, M), C, D)$ for
    every world $M : \Onto \to \{0, 1\}$;
  \item for every weight $w$, $\WMC(t, w) = \disponte(\Onto, C, D, w)$;
  \item $|t| \,=\, 2^{|\Onto| + 1} - 1$.
  \end{enumerate}
\end{theorem}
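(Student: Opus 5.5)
The plan is to build $t$ as the complete Shannon decision tree over the axiom variables of $\Onto$, taken in a fixed enumeration $\alpha_1,\dots,\alpha_n$ with $n=|\Onto|$; this is the tree $\compileSat(\Onto,C,D)$. Define $\mathrm{shannon}(k, \rho)$ recursively on a partial world $\rho$ fixing $\alpha_1,\dots,\alpha_k$. If $k=n$, $\rho$ is a total world $M$, and we emit the constant leaf $\top$ or $\bot$ according to the Boolean value of $\sat(\selAx(\Onto,M),C,D)$. This value is computable because ELK saturation is a decidable (indeed polynomial) procedure, and we reuse the mechanized decidability from \Cref{app:elk-mechanization}. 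If $k<n$, emit a decision node with the two elements $(\alpha_{k+1}, \mathrm{shannon}(k{+}1,\rho[\alpha_{k+1}\mapsto 1]))$ and $(\neg\alpha_{k+1}, \mathrm{shannon}(k{+}1,\rho[\alpha_{k+1}\mapsto 0]))$. The primes $\alpha_{k+1},\neg\alpha_{k+1}$ are exhaustive and mutually exclusive, and each sub mentions only later variables. This gives partition, determinism and decomposability with respect to a right-linear vtree, so $t$ is a valid SDD.

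For item 1, I would prove by induction on $n-k$ a generalized statement: for every partial world $\rho$ on the first $k$ variables and every total $M$ extending $\rho$, $\mathrm{model}(\mathrm{shannon}(k,\rho),M)$ holds iff $\sat(\selAx(\Onto,M),C,D)$. In the base case the extension is $\rho$ itself and the leaf was chosen accordingly. In the step, exactly one prime is satisfied by $M$, namely the one agreeing with $M(\alpha_{k+1})$. The semantics of a decision node reduces to the corresponding sub, and we apply the induction hypothesis with the extended $\rho$. Taking $k=0$ gives item 1. Here $\sat$ is only ever evaluated on sub-ontologies, so no monotonicity or other property of ELK is needed beyond decidability.

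For item 2, I would again induct, now on the WMC recursion. At a decision node, the WMC is $w(\alpha_{k+1},1)\cdot\WMC(\text{sub}_1)+w(\alpha_{k+1},0)\cdot\WMC(\text{sub}_0)$, using decomposability to factor each prime–sub product and determinism to add without overlap. The generalized claim is that $\WMC(\mathrm{shannon}(k,\rho),w)$ equals the sum, over total extensions $M$ of $\rho$, of $\mathbf{1}[\sat(\selAx(\Onto,M),C,D)]\cdot\prod_{i>k} w(\alpha_i,M(\alpha_i))$. The step splits this sum according to the value of $M(\alpha_{k+1})$ and factors out the weight of $\alpha_{k+1}$. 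At $k=0$ the sum ranges over all worlds and the product over all axioms, which is exactly $\disponte(\Onto,C,D,w)$. I expect this to be the main obstacle, particularly in Lean. The difficulty is the bookkeeping that identifies a sum over functions $\Onto\to\{0,1\}$ with an iterated binary sum over an enumeration: one needs a bijection between total worlds and pairs (value at $\alpha_{k+1}$, world on the rest), together with commutativity of the finite product. I would first state the lemma for lists of axioms, which makes the recursion structural, and then transport the result to the finset-indexed product using the nodup enumeration of $\Onto$. The proof is semiring-generic, so the same argument covers the $\mathbb{N}$-valued statement here and the $\mathbb{Q}$-valued one used in \Cref{thm:end-to-end}.

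Item 3 is a straightforward induction. Counting nodes so that a decision node with two elements contributes $1$ plus the sizes of its two subs (primes as literals are not counted separately), the size $s(j)$ of $\mathrm{shannon}(n-j,\cdot)$ satisfies $s(0)=1$ and $s(j)=2s(j-1)+1$, hence $s(j)=2^{j+1}-1$. Taking $j=n$ gives $|t|=2^{|\Onto|+1}-1$. The tree is deliberately left unreduced, with no merging of isomorphic leaves, so that the size is exact rather than an upper bound. The only care needed is to match the paper's precise size convention in the Lean definition.
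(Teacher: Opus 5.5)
Your proposal is correct and follows the paper's proof: the witness is the unreduced Shannon tree $\compileSat(\Onto,C,D)$ over the axiom variables, with conjunct~(1) by induction through the decision nodes, conjunct~(2) by the Shannon-recursive unfolding of $\WMC$ over the $2^{|\Onto|}$ leaves, and conjunct~(3) by the same node count (a node at depth $k$ has $2^{|\Onto|-k+1}-1$ descendants, which is your $s(j)=2^{j+1}-1$ with $j=|\Onto|-k$). Your generalization over partial worlds and your plan to prove the sum identity on lists and then transport it by a permutation argument match how the mechanization handles the world-enumeration bookkeeping (cf.\ the permutation step in \Cref{lem:disponte-rat}).
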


\noindent
\Cref{thm:encoding} is the formal contract of
\textsc{Moose}'s compilation step: the SDD's models are exactly the
worlds whose selected sub-ontology ELK-entails $C \sub D$, the SDD's
WMC is the DISPONTE marginal under arbitrary weights, and the
worst-case size is the explicit $2^{|\Onto|+1}-1$ bound of the Shannon
expansion (\Cref{app:treewidth}).  Lean proof: see \Cref{app:proofs}.

\begin{theorem}[Rational DISPONTE correspondence]\label{thm:end-to-end}
  For every \ELpp{} ontology $\Onto$, every concepts $C, D$, and every
  rational weight $w$,
  \begin{equation}\label{eq:end-to-end}
    \wmcRat\!\bigl(\compileSat(\Onto, C, D),\, w\bigr)
    \;=\;
    \disponteRat(\Onto, C, D, w).
  \end{equation}
\end{theorem}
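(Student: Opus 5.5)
The plan is to prove \Cref{thm:end-to-end} by structural induction on the Shannon-tree construction underlying $\compileSat$, mirroring the argument for part~2 of \Cref{thm:encoding} but over $\mathbb{Q}$ instead of $\mathbb{N}$. Fix an enumeration $\alpha_1,\dots,\alpha_n$ of $\Onto$. The tree $\compileSat(\Onto,C,D)$ branches on $\alpha_1$ at the root, on $\alpha_2$ at depth one, and so on. Each leaf at depth $n$ is labelled by the constant $\mathbf{1}[\sat(\selAx(\Onto,M),C,D)]$, where $M$ is the world determined by the root-to-leaf path. This accounts for the $2^{n+1}-1$ nodes in part~3 of \Cref{thm:encoding}.

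I would generalize the statement to partial assignments so that the induction goes through. For a partial world $\rho$ fixing $\alpha_1,\dots,\alpha_k$, let $t_\rho$ be the subtree reached by following $\rho$. The claim is
\[
\wmcRat(t_\rho,w)=\sum_{M\supseteq\rho}\mathbf{1}[\sat(\selAx(\Onto,M),C,D)]\prod_{i>k}w(\alpha_i,M(\alpha_i)).
\]
The proof is by downward induction on $k$. For $k=n$ both sides equal the leaf constant: the empty product is $1$ and exactly one world extends $\rho$. For the inductive step, the rational WMC of a decision node on $\alpha_{k+1}$ is $w(\alpha_{k+1},1)\cdot\wmcRat(t_{\rho,1},w)+w(\alpha_{k+1},0)\cdot\wmcRat(t_{\rho,0},w)$. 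Determinism makes this a plain sum, and decomposability makes it a product with the literal weight. Substituting the inductive hypothesis and distributing the scalar through the finite sum splits the worlds extending $\rho$ according to $M(\alpha_{k+1})$. Only commutative-semiring identities are used, so nothing requires nonnegativity or normalization of $w$; this is why the result is unconditional over $\mathbb{Q}$. Taking $k=0$ gives \eqref{eq:end-to-end}.

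An alternative route is to reuse the model characterization in part~1 of \Cref{thm:encoding} together with a general lemma: for any smooth Shannon tree over the variable set $\Onto$, the rational WMC equals $\sum_{M\models t}\prod_\alpha w(\alpha,M(\alpha))$. Replacing $\mathrm{model}(t,M)$ by $\sat(\selAx(\Onto,M),C,D)$ via part~1 then closes the proof. I would prefer this route because it isolates the semiring lemma, which is independent of ELK, from the ontology-specific content.

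I expect the main obstacle to be the bookkeeping of the sum over worlds $M:\Onto\to\{0,1\}$ as a finite sum. It has to be reindexed as a sum over $\{0,1\}\times(\text{worlds on the remaining axioms})$ at each decision node, and the product $\prod_{\alpha\in\Onto}$ has to be split into the head factor times the tail. This step needs $\Onto$ to be treated as a finite set with a fixed order and no duplicate axioms. It also needs the SDD's variable order to match that enumeration, i.e.\ smoothness: every variable appears on every path, so no factor $w(\alpha,0)+w(\alpha,1)$ is silently assumed to equal $1$. I would handle this first by proving the reindexing lemma, that a sum over functions on $\{\alpha\}\uplus S$ equals the double sum over a value for $\alpha$ and a function on $S$. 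After that the induction above is mechanical.
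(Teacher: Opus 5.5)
Your proposal is correct and follows essentially the same argument as the paper: unfold the Shannon recursion of $\compileSat$ into a sum over total axiom assignments weighted by $\prod_{\alpha} w(\alpha, M(\alpha))$, using only commutative-semiring identities, so no normalization of $w$ is needed. The only difference is bookkeeping: the paper first obtains a sum indexed by the tree's own enumeration of axiom-extensions and then matches it to the DISPONTE world enumeration by a global multiset/permutation lemma, whereas your partial-world invariant does that reindexing locally at each decision node.
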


\noindent
The identity holds unconditionally on rational weights: no
distributional assumption is required.  It is the formal warrant of the
standard entailment-as-WMC-zero phrasing of probabilistic DLs
\cite{Riguzzi2015} (at the uniform prior $w \equiv \tfrac{1}{2}$,
$\wmcRat = 0$ iff no world's sub-ontology entails $C \sub D$), and
lifts that phrasing from the propositional encoding to the verified
compiled circuit.  Lean proof: see \Cref{app:proofs}.

\begin{theorem}[SCC compositional theorem, Sat level]\label{thm:scc-sat}
  Let $\Onto = \Onto_1 \uplus \Onto_2$ with disjoint signatures, both
  components nominal-free and range-chain-safe, $\Onto_2$ consistent
  ($\neg\,\sat(\Onto_2, \TOP, \BOT)$), and $C, D$ nominal-free with $C,
  D$ in the signature of $\Onto_1$.  Then
  \[
    \sat(\Onto_1 \uplus \Onto_2, C, D) \;\iff\; \sat(\Onto_1, C, D).
  \]
\end{theorem}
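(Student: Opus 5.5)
The plan is to prove the two directions separately. The backward direction ($\Leftarrow$) is monotonicity of ELK saturation: every rule instance that fires on $\Onto_1$ also fires on $\Onto_1 \uplus \Onto_2$. The side conditions of the completion rules only ask that certain axioms be present in the ontology and certain conclusions already be derived. Adding axioms preserves both, so a derivation of $C \sub D$ from $\Onto_1$ is also a derivation from $\Onto_1 \uplus \Onto_2$. This is a routine induction on the derivation.

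The forward direction ($\Rightarrow$) is the real content. I would prove it by a \emph{separation invariant} on the saturation of $\Onto_1 \uplus \Onto_2$, by induction on derivation length. Every derived subsumption $X \sub Y$ and every derived link $X \xrightarrow{R} Y$ falls into one of two cases.

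\begin{enumerate}
\item \textbf{Case (a).} All names in $X$ lie in $\Sig(\Onto_1)$ or are $\TOP$. Then $Y$ and $R$ lie in $\Sig(\Onto_1) \cup \{\TOP,\BOT\}$, and the conclusion is already derivable from $\Onto_1$ alone.
\item \textbf{Case (b).} All names in $X$ lie in $\Sig(\Onto_2)$. Then the conclusion mentions only $\Onto_2$-names, $\TOP$ or $\BOT$, and is derivable from $\Onto_2$.
\end{enumerate}

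Since $C$ is in the signature of $\Onto_1$, instantiating the invariant at $C \sub D$ gives $\sat(\Onto_1, C, D)$.

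For the induction, I would go through the ten ELK rules (init, $\sqcap^\pm$, $\exists^\pm$, role hierarchy and chain composition, $\bot$-propagation, $\TOP$). A decomposition rule stays within the signature of its premise. A rule that uses a told axiom $A \sub B$ from $\Onto_i$ can only fire when $A$ occurs in a conclusion about $X$. By the invariant, that conclusion's signature fixes $i$. For the binary rules ($\sqcap^+$, $\exists^-$, chain), both premises share the context $X$. They therefore fall in the same case, and the conclusion inherits it. Nominal-freeness is used here: nominals would let conclusions migrate between contexts across components. Range-chain-safety lets me treat range axioms and chains without cross-context propagation.

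The main obstacle is the shared symbols $\TOP$ and $\BOT$. First, an axiom $\TOP \sub A$ in $\Onto_2$ fires in every context, including $\Onto_1$-contexts. Such a context would then derive $X \sub A$ with $A \in \Sig(\Onto_2)$, which breaks case (a). Second, $\BOT$ can propagate backwards along links.

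I would handle this by weakening case (a). It allows additional conclusions $X \sub Y$ where $Y$ is derivable from $\TOP$ in $\Onto_2$, that is $\sat(\Onto_2, \TOP, Y)$. Such conclusions never feed back into an $\Onto_1$-conclusion, because $\Onto_1$-axioms never mention $Y$. So the projection of an $\Onto_1$-context onto $\Sig(\Onto_1)$ stays derivable from $\Onto_1$.

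The dangerous case is $Y = \BOT$, and this is exactly where the hypothesis $\neg\sat(\Onto_2, \TOP, \BOT)$ is needed: it excludes $\BOT$ from the $\TOP$-consequences of $\Onto_2$. Then $\BOT$ can enter an $\Onto_1$-context only through $\Onto_1$-axioms. For $D = \BOT$ I also need that $\BOT$-propagation along $\exists^-$-links into $\Onto_1$-contexts starts only from $\Onto_1$-contexts. This holds because $\Onto_2$-successor contexts of an $\Onto_1$-context can only arise from $\Onto_2$-fillers reached via $\Onto_1$-roles, and the disjoint signatures rule that out.

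I expect the $\TOP$-sharing bookkeeping to be the longest part of the case analysis.
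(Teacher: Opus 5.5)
Your backward direction matches the paper (monotonicity of $\sat$). The forward direction is a legitimate alternative strategy, but the separation invariant you state is false in two places, and one of your justifications is wrong.

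\textbf{First problem: $\Onto_2$-successor contexts do arise.} Take $\Onto_2 = \{\TOP \sub A,\ A \sub \exists S.B\}$. In the union every $\Onto_1$-context $X$ derives $X \sub A$, $X \sub \exists S.B$ and the link $X \xrightarrow{S} B$. So $\Onto_2$-successor contexts of $\Onto_1$-contexts \emph{do} arise, through an $\Onto_2$-role, by exactly the $\TOP$-sharing you identified. Your weakened case (a) admits the extra subsumptions but not these links. Disjointness does not exclude them. What stops $\BOT$ from flowing back along them is consistency: $\sat(\Onto_2,B,\BOT)$ together with $\TOP \sub A \sub \exists S.B$ already gives $\sat(\Onto_2,\TOP,\BOT)$.

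\textbf{Second problem: the sharing is symmetric.} An axiom $\TOP \sub A'$ of $\Onto_1$ fires in every $\Onto_2$-context, including $B$ above, so case (b) fails as stated. If $\Onto_1$ is inconsistent, $\BOT$ reaches $\Onto_2$-contexts and returns along $\Onto_2$-links into $\Onto_1$-contexts. That is harmless, since then $\sat(\Onto_1,X,\BOT)$ holds anyway, but the invariant must permit it.

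\textbf{A corrected invariant.} What goes through rule by rule is a two-sided invariant with a $\BOT$ escape. Roughly, a conclusion in a context $X$ over $\Sig(\Onto_i)$ satisfies one of the following:
either it is over $\Sig(\Onto_i)$ and derivable from $\Onto_i$ in context $X$;
or it is over $\Sig(\Onto_j)$, $j \neq i$, and derivable from $\Onto_j$ in context $\TOP$;
or else $\sat(\Onto_i,X,\BOT) \lor \sat(\Onto_j,\TOP,\BOT)$ holds.
At $C \sub D$ this is the syntactic counterpart of the paper's intermediate lemma $\sat(\Onto_1 \cup \Onto_2, C, D) \iff \sat(\Onto_1, C, D) \lor \sat(\Onto_2, \TOP, \BOT)$, whose second disjunct the consistency hypothesis discharges. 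This invariant also presupposes that every derived concept is single-signature, i.e.\ that built conjunctions and existentials are restricted to $\Sub(\Onto)$ and the query. If $\sat$ may assemble mixed concepts such as $A_1 \sqcap A_2$, you need a projection-based formulation instead.

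\textbf{Comparison with the paper.} The paper's forward direction is semantic and avoids this bookkeeping entirely. With $\Onto_2$ consistent, its canonical model $\mathcal{I}^{\mathrm{can}}(\Onto_2)$ is non-empty. For an arbitrary $\mathcal{I}_1 \models \Onto_1$, the paper forms the product interpretation over $\Delta_1$ and the domain of $\mathcal{I}^{\mathrm{can}}(\Onto_2)$, reading $\Onto_1$-names on the first coordinate and $\Onto_2$-names on the second. Disjointness makes the evaluation of single-signature concepts factor, so the product models the union. Soundness on the union gives $C \sub D$ there, which projects back to $\mathcal{I}_1$. Completeness on $\Onto_1$ then yields $\sat(\Onto_1,C,D)$.

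In that argument the $\TOP$-sharing is absorbed by the product, and the $\BOT$ issue becomes non-emptiness of a model, with no case analysis over rules. The price is reliance on completeness and on the canonical model. The formalization provides these only on the nominal-free, range-chain-safe fragment, which is why those hypotheses appear in the statement.

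Your route uses no completeness, so range-chain-safety does no real work in it; your sentence about it is not an actual use, and the hypothesis might be droppable. But your route is tied to the exact rule set of the formalized calculus. For instance, a reflexive $\Onto_2$-role puts self-links $X \xrightarrow{S} X$ on $\Onto_1$-contexts, and ranges and $\mathsf{Self}$ each need their own case.
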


\noindent
The result formalizes the intuition that an inferentially irrelevant
SCC cannot affect Sat-derivability inside the relevant component,
provided the irrelevant component is itself consistent. SCC-wise
compilation follows: each SCC can be saturated and compiled in
isolation, and the joint posterior recovered by combination.
Range-chain safety rules out interactions between role chains and
range axioms that would block the SCC factorization. It holds
vacuously when neither component has range axioms (the case for both
experimental ontologies); for chain-and-range ontologies, mechanized
syntactic range-elimination \cite{Baader2008ELFurther} reduces to this
case under a side condition that is automatically satisfied by
reflexive role-inclusion propagation. Lean proofs, theorem names, and
side-condition details: \Cref{app:proofs,app:lean-index}.

\begin{theorem}[Per-SCC posterior equivalence]\label{thm:scc-wmc}
Under the hypotheses of \Cref{thm:scc-sat}, for any uniform per-axiom
prior $w \equiv c$ with $c > 0$, the per-SCC and joint DISPONTE
posteriors coincide. In the unnormalized counting case $w \equiv 1$,
\begin{equation}\label{eq:scc-wmc-counting}
   \disponteRat(\Onto_1 \uplus \Onto_2, C, D, w)
   \;=\;
   \disponteRat(\Onto_1, C, D, w) \cdot 2^{|\Onto_2|}.
\end{equation}
\end{theorem}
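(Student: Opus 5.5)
The plan is to reduce the WMC identity to \Cref{thm:scc-sat} applied world by world, and then factor the sum over worlds as a product over the two components.

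\textbf{Splitting worlds.} Since $\Onto = \Onto_1 \uplus \Onto_2$ is a disjoint union of axiom sets, a world $M : \Onto \to \{0,1\}$ corresponds bijectively to a pair $(M_1, M_2)$ with $M_i : \Onto_i \to \{0,1\}$. Moreover
\[
  \selAx(\Onto, M) = \selAx(\Onto_1, M_1) \uplus \selAx(\Onto_2, M_2),
\]
and the weight product factors as $\prod_{\alpha\in\Onto_1} w(\alpha, M_1(\alpha)) \cdot \prod_{\alpha\in\Onto_2} w(\alpha, M_2(\alpha))$.

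\textbf{Pointwise reduction.} For each pair I would invoke \Cref{thm:scc-sat} with $\selAx(\Onto_1, M_1)$ and $\selAx(\Onto_2, M_2)$ in the roles of $\Onto_1$ and $\Onto_2$. Three of the hypotheses are inherited by sub-ontologies, because they are syntactic and monotone under axiom removal:
\begin{itemize}
  \item signature disjointness;
  \item nominal-freeness;
  \item range-chain safety, which I expect to be downward closed as a restriction on co-occurring axioms; this needs checking against its definition.
\end{itemize}
Consistency is inherited by soundness of ELK and monotonicity of entailment: if $\selAx(\Onto_2, M_2) \models \TOP\sub\BOT$ then $\Onto_2 \models \TOP \sub \BOT$, contradicting the hypothesis. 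One caveat is that $C, D$ must lie in the signature of the selected sub-ontology of $\Onto_1$. I would resolve this by taking ``signature'' relative to the fixed partition of names rather than the axioms actually selected, or by noting that $C, D$ avoid $\Sig(\Onto_2)$, which is all the proof of \Cref{thm:scc-sat} should use.

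The conclusion is that the indicator $\mathbf{1}[\sat(\selAx(\Onto,M),C,D)]$ equals $\mathbf{1}[\sat(\selAx(\Onto_1,M_1),C,D)]$, independently of $M_2$.

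\textbf{Factoring.} Substituting into the definition of $\disponteRat$ and applying Fubini over the finite product of world spaces gives
\[
  \disponteRat(\Onto, C, D, w) = \disponteRat(\Onto_1, C, D, w) \cdot \prod_{\alpha\in\Onto_2}\bigl(w(\alpha,0)+w(\alpha,1)\bigr).
\]
For $w \equiv c$ the second factor is $(2c)^{|\Onto_2|}$; with $c = 1$ this is exactly \eqref{eq:scc-wmc-counting}. For the posterior claim, the same factorization applies both to the numerator (conditioned on $C\sub D$) and to the normalizer $\sum_M \prod w = (2c)^{|\Onto|}$. The common factor $(2c)^{|\Onto_2|}$ is nonzero because $c>0$, so it cancels and the joint posterior equals the $\Onto_1$ posterior.

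\textbf{Main obstacle.} The hard step is the pointwise use of \Cref{thm:scc-sat}: verifying that every side condition, especially range-chain safety and the signature condition on $C, D$, transfers to arbitrary sub-ontologies. If range-chain safety is not downward closed, my fallback is to strengthen the hypothesis to hereditary safety, which holds vacuously when there are no range axioms. The summation manipulation itself is routine; in Lean it would use \texttt{Fintype.sum\_prod\_type} after an equivalence between worlds on the disjoint union and pairs of worlds.
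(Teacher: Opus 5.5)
Your proposal is correct and follows the paper's route. You reduce world by world through the Sat-level SCC factorization; the paper handles your signature caveat exactly as you suggest, via a generalized form in which $\Onto_1,\Onto_2$ fix the signature while the selected sub-ontologies are the analyzed sides. You then factor the sum over worlds, and your factor $\prod_{\alpha\in\Onto_2}(w(\alpha,0)+w(\alpha,1))$ mildly generalizes the paper's $w\equiv 1$ closed form.

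One small repair is needed. Consistency of $\selAx(\Onto_2,M_2)$ follows directly from monotonicity of $\sat$ in the ontology. Your soundness-plus-monotonicity-of-entailment argument would additionally need ELK completeness on $\Onto_2$ to contradict $\neg\,\sat(\Onto_2,\TOP,\BOT)$.
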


\noindent
The multiplicative $2^{|\Onto_2|}$ from the irrelevant SCC cancels
under posterior normalization, and the same cancellation extends the
identity to any $c > 0$ after factoring out $c^{|\Onto_1|+|\Onto_2|}$.
This is the WMC-level analogue of \Cref{thm:scc-sat}: per-SCC
compilation is posterior-faithful. Lean proof:
\Cref{app:proofs,app:lean-index}.

\medskip\noindent
The closure-augmented variant of the SDD (which materializes the
per-individual partition constraints on declared exhaustive families)
satisfies an analogous correctness theorem
(\Cref{thm:closure-correct}, appendix).  Inference is linear in
$|\compileSat(\Onto, C, D)|$ \cite{Choi2013,Darwiche2002}
(\Cref{thm:inference-complexity}, appendix).  All proofs and the
DeepProbLog equivalence \Cref{prop:dpl-equiv} are in
\Cref{app:proofs}; the implementation index in
\Cref{app:lean-index} cross-references every paper claim with its
formalized counterpart.

% =========================================================================
\section{Research questions and experiments}\label{sec:emp}

We evaluate \textsc{Moose} on two benchmarks: the MNIST-with-ontology
benchmark we developed for this work (\Cref{ex:mnist}), instantiated
under three supervision regimes (atomic property literals, relational
$\exists R.C$ evidence, and role-chain evidence); and the Pizza\"iolo
dataset \cite{Pizzaiolo2024} of $4{,}800$ synthetic pizza images
generated to conform to the published OWL pizza ontology
\cite{PizzaOntology}, which lets us test transfer to a third-party
ontology and a different image distribution. Within each benchmark, the experiments
share the ontology and differ only in the observable signature, the
domain size $|\Delta|$, and the supplied evidence.

We address five research questions. \textbf{RQ1--RQ3} test latent
digit recovery on $\Onto_{\textsf{MNIST}}$ as the supervision signal
grows from a single individual with unary property literals (RQ1), to
a pair of individuals connected by an observed $r(a,b)$ role literal
under an $\exists R.C$ axiom (RQ2), to a pair connected by observed
role literals only through an ELK-derived role-chain consequence
(RQ3). \textbf{RQ4} tests transfer to the pre-existing third-party
OWL pizza ontology, restricted to its EL fragment (Pizza\"iolo,
\S\ref{exp:pizzaiolo}). \textbf{RQ5}
asks which RS-mitigation regime works under which conditions: with
encoder, ontology, and compiled SDDs fixed, we vary only the
output-distribution treatment: an independent per-atom baseline
(no logical structure), \textsc{Moose} on the EL theory alone (the
DPL surface, \Cref{prop:dpl-equiv}), \textsc{Moose} with closure
$\Phi_{\textsf{clos}}$, and the BEARS / NeSyDM wrappers, across
factorized relational, tied symmetric, and high-arity disjunctive
ambiguity. The independent baseline (binary cross-entropy,
BCE, on observed atoms) is a
lower-bound reference that cannot propagate evidence through
subsumption, disjointness, or role axioms and so cannot beat
chance on latent atoms
\cite{Marconato2023,vanKrieken2024Independence}.

\subsection{The MNIST ontology and supervision regimes}\label{sec:mnist}

A single OWL EL ontology $\Onto_{\textsf{MNIST}}$ ($14$ concepts, $2$
roles, $76$ axioms; full listing in \Cref{app:mnist-onto}) is used in
all three MNIST experiments; the experiments differ only in $|\Delta|$
and the evidence regime, never in the TBox. ELK saturation derives the
chain consequence
$D_i \sqsubseteq \exists\textsf{plus\_two}.\, D_{(i+2)\bmod 10}$ from
the axiom
$\textsf{succ}\circ \textsf{succ}\sqsubseteq\textsf{plus\_two}$ at
compile time. The compiled SDD has $14m + 2m^2$ ground atoms. A small
convolutional neural network (CNN) $f_\theta$ ($2$ conv blocks + $2$
fully-connected layers, weight-shared across individuals;
\Cref{app:perception}) maps each image to per-atom probabilities;
training minimizes
$\mathcal{L}(\theta) = -\log\WMC(\alpha_{\text{clos}};
p_\theta(\mathbf{x}), \mathbf{e})$ with Adam, batch~$32$, $5$ seeds;
per-regime learning rates and epoch counts are in \Cref{app:hp}.
EL-only ablations omit $\Phi_{\textsf{clos}}$ and train against
$\alpha_{\text{EL}}$, recovering the DPL loss
(\Cref{prop:dpl-equiv}). \textbf{Exp.~1 (atomic, RQ1)} uses
$\Delta=\{a\}$ with $n_\text{obs}\in\{1,2,3\}$ unary literals over
$\Sig^o_C=\{\textsf{Even},\textsf{Odd},\textsf{Prime},
\textsf{Composite}\}$. \textbf{Exp.~2 (relational, RQ2)} uses
$\Delta=\{a,b\}$ with the role atom $\textsf{succ}(a,b)$ asserted
(digits satisfy $\textsf{digit}(b)=(\textsf{digit}(a){+}1)\bmod 10$)
plus $n_\text{obs}\in\{1,2,3\}$ primality literals.  \textbf{Exp.~3
  (role chain, RQ3)} flips the asserted role to
$\textsf{plus\_two}(a,c)$ and widens the unary pool to parity and
primality; the only chain from $a$ to $c$ is the NF7-derived
consequence above. The digit family is declared exhaustive at every
individual; $\Phi_{\textsf{clos}}$ supplies pairwise disjointness,
covering, and reverse-implication clauses. Some property profiles
uniquely identify the latent digit (e.g.\
$\textsf{Even}(a) \wedge \textsf{Prime}(a)$ pins $D_2$), while
others leave it ambiguous (e.g.\ $\textsf{Even}(a)$ alone admits
$\{D_0, D_2, D_4, D_6, D_8\}$); RS$_\text{cons}$ on the ambiguous
cases measures confident commitment to a wrong
digit.\label{exp:mnist-atomic}\label{exp:mnist-rel}

\subsection{Experiment 4: Pizza\"iolo (RQ4)}\label{exp:pizzaiolo}

Pizza\"iolo
\cite{Pizzaiolo2024} provides $4{,}800$ synthetic pizza images.  The
canonical pizza ontology contains non-EL constructs (universal
restrictions on \textsf{hasTopping}, complement-based definitions such
as
$\textsf{VegetarianPizza} \equiv \textsf{Pizza} \sqcap \neg \exists
\textsf{hasTopping}.\textsf{MeatTopping} \sqcap \neg \exists
\textsf{hasTopping}.\textsf{FishTopping}$, and cardinality on
\textsf{NumberedPizza}); we use only its EL-expressible fragment,
restating the four-pizza recipes and property classes as conjunctive
subsumption and disjointness axioms, and treating classes whose
canonical definitions fall outside EL (e.g.\ \textsf{VegetarianPizza})
as atomic concepts whose truth values come from the dataset labels
rather than from non-EL closure.  The method targets the four-pizza
subset
$\{\textsf{Mushroom}, \textsf{Cajun}, \textsf{Capricciosa},
\textsf{FourSeasons}\}$ over $16$ topping concepts, with conjunctive
per-pizza axioms (e.g.\
$\textsf{Capricciosa} \sqsubseteq \textsf{Ham} \sqcap \textsf{Anchovy}
\sqcap \textsf{Olive} \sqcap \textsf{Peperonata}$), pairwise pizza
disjointness, and disjointness axioms
$P \sqcap T \sqsubseteq \bot$ for every topping $T$ outside $P$'s
recipe. EL forward subsumption plus these disjointness axioms pin
the topping vector once pizza identity is observed; no
out-of-profile closure is added.  \textbf{Track A} reveals pizza identity; toppings are
latent. We test distribution shift via an OOD split: the four training
pizzas all carry $\textsf{Anchovy}$ and $\textsf{Olive}$ together, so
a network can score well on the in-distribution split by coupling
the two atoms; we hold out five tie-breaker pizzas
($\textsf{Fiorentina}$, $\textsf{Giardiniera}$, $\textsf{LaReine}$,
$\textsf{Soho}$, $\textsf{Veneziana}$) whose recipes break this
co-occurrence.  \textbf{Track B} reveals one of four property-class
atoms ($\textsf{NonVegetarianPizza}$, $\textsf{SpicyPizza}$,
$\textsf{RealFrenchPizza}$, $\textsf{VegetarianPizza}$); revealing
$\textsf{NonVegetarianPizza}$ leaves
$\{\textsf{Cajun},\textsf{Capricciosa},\textsf{FourSeasons}\}$
indistinguishable (a 3-way RS over pizza identity).  \textbf{Track C}
(the $\textsf{is\_spicy}$ task in the codebase) adds four axioms
$T_i \sqsubseteq \textsf{SpicyTopping}$ for
$T_i \in \{\textsf{Jalapeno},\textsf{Peperonata},
\textsf{PeperoniSausage},\textsf{Prawn}\}$ together with
$\textsf{SpicyTopping} \equiv \textsf{SpicyPizza}$ (two GCIs; here
$\textsf{SpicyTopping}$ acts as a label-class denoting ``pizza with
a spicy topping'').  Symbolic supervision $\textsf{SpicyPizza}(a)$
forces $\textsf{SpicyTopping}(a)$ but the EL theory leaves the witness
ambiguous over $2^4{=}16$ topping subsets (the multi-witness regime
BEARS/NeSyDM target). We train the perception CNN from scratch with
batch size~$32$, $5$ seeds, and $60$ epochs on Tracks~A and~B
(Track~C uses $15$ epochs); per-method hyperparameters and the
NeSyDM $(\gamma_c, \gamma_h)$ sweep results are in \Cref{app:hp}.

\subsection{Baselines, metrics, and results}\label{sec:results}

We compare \textsc{Moose} against seven baselines.
\textbf{Independent} (BCE on observed atoms) is the NeSy-free
lower bound. \textbf{Semantic Loss} \cite{Xu2018} shares the
$-\log\WMC$ objective but compiles only the directly-stated
NF1/NF2 atomic axioms, dropping NF3/NF4/NF7 and ELK saturation,
isolating the EL-aware method of \S\ref{sec:pipeline}.
\textbf{DeepProbLog} \cite{Manhaeve2018} hand-codes each MNIST
regime as a ProbLog program with one $\textsf{nn}(\cdot)$ digit
directive; by \Cref{prop:dpl-equiv} DPL and \textsc{Moose} compute
the same loss on the same Horn theory, though that directive
is an annotated disjunction, so DPL is not closure-free.
\textbf{\textsc{Moose}+BEARS} \cite{Marconato2024BEARS} keeps the
factorized perception and $\alpha_{\text{clos}}$ but replaces the
single encoder with $K{=}5$ diversified encoders.
\textbf{\textsc{Moose}+NeSyDM} \cite{vanKrieken2025NeSyDM}
replaces the factorized extractor with a masked-diffusion concept
distribution; we evaluate both RLOO and exact-WMC gradient
estimators. \textbf{LTN} \cite{Badreddine2022LTN} substitutes a
fuzzy product T-norm; \textbf{ELEmbeddings}
\cite{Kulmanov2019,mOWL2023} embeds the EL ontology into ball
geometries with a classifier head; both are detailed in
\Cref{app:non-wmc-baselines}.

\textbf{Metrics.} The headline tables
\Cref{tab:headline-acc,tab:headline-ece} report
the regime's principal accuracy and ECE, the expected
calibration error binned by predicted confidence
\cite{Guo2017Calibration}. The two ontologies do not admit the same
accuracy metric. On MNIST the digit family is declared exhaustive and
we report $\text{Acc}_F$, \emph{family-argmax} accuracy: per
individual, the argmax of the WMC posterior over that family. On
Pizza\"iolo we report $\text{Acc}_C$, per-atom accuracy on the latent
slice, since Track~C declares no exhaustive family and the decode
degenerates on the other two (\Cref{app:pizza-argmax}). The full per-regime
breakdown (\Cref{tab:headline-full},
\Cref{app:full-results}) additionally reports the negative
log-likelihood (NLL), the mean per-atom Bernoulli loss on the latent
signature; the RS-consistency rate RS$_\text{cons} =
\mathrm{Conf}\cdot(1{-}\mathrm{Acc}_F)$ on the latent family-argmax
(higher = more confident commitment to a wrong shortcut;
on Pizza\"iolo the complement uses $\text{Acc}_C$, and on Track~C it
is not applicable);
per-atom
accuracy $\text{Acc}_\text{atom}$ on the latent signature; and the
macro-averaged per-concept F1, $\text{F1}_\text{macro}$.
A paired significance analysis of the headline comparisons
over $20$ seeds is given in \Cref{app:significance}, and a
learning-rate sensitivity analysis in \Cref{app:lr-sensitivity}.

\providecommand{\revision}[1]{#1}
\begin{table}[t]
\caption{\revision{Principal accuracy} (\%, mean $\pm$ s.d.\
  over $5$ seeds. MNIST Experiments~1--3 (\Cref{sec:mnist}),
  Pizza\"iolo Tracks~A--C (\Cref{exp:pizzaiolo}); Track~A is the
  out-of-distribution split.
  \revision{Metric: $\text{Acc}_F$ (family-argmax) on MNIST,
  $\text{Acc}_C$ (per-atom, latent slice) on Pizza\"iolo; compare only
  within a column (\Cref{app:pizza-argmax}). Operative theory:
  $\Gamma_{\textsf{EL}}{+}\Phi_{\textsf{clos}}$ for the \textsc{Moose}
  rows on MNIST, $\Gamma_{\textsf{EL}}$ elsewhere, none for
  Independent (\S\ref{sec:closure}).}
  Bold = column-best; \revision{$^\dagger$ = highest but not
  significant (\Cref{app:significance}); DeepProbLog on Pizza\"iolo is
  not a separate run (\Cref{app:full-results}).}
  Full $\text{Acc}_\text{atom}$, NLL, RS$_\text{cons}$, and
  $\text{F1}_\text{macro}$ in
  \Cref{tab:headline-full}.}\label{tab:headline-acc}
\centering\scriptsize\setlength{\tabcolsep}{3pt}
\begin{tabular*}{\linewidth}{@{\extracolsep{\fill}}lcccccc}
\hline
       & \multicolumn{3}{c}{MNIST} & \multicolumn{3}{c}{Pizza\"iolo} \\
Method & Exp\,1 & Exp\,2 & Exp\,3 & Pz\,A & Pz\,B & Pz\,C \\
\hline
Independent & 13.2{\scriptsize\,$\pm$\,7.9} & \phantom{0}8.8{\scriptsize\,$\pm$\,4.5} & 12.2{\scriptsize\,$\pm$\,7.3} & 21.7{\scriptsize\,$\pm$\,9.7} & 25.0{\scriptsize\,$\pm$\,0.0} & 48.9{\scriptsize\,$\pm$\,5.6} \\
DeepProbLog & 42.1{\scriptsize\,$\pm$\,1.3} & 38.9{\scriptsize\,$\pm$\,3.2} & 59.6{\scriptsize\,$\pm$\,13.6} & 68.1{\scriptsize\,$\pm$\,0.3} & 84.5{\scriptsize\,$\pm$\,2.1} & 81.0{\scriptsize\,$\pm$\,3.0} \\
LTN & 25.9{\scriptsize\,$\pm$\,6.5} & 14.5{\scriptsize\,$\pm$\,4.9} & 11.0{\scriptsize\,$\pm$\,7.9} & 22.1{\scriptsize\,$\pm$\,5.2} & 32.5{\scriptsize\,$\pm$\,20.9} & 84.8{\scriptsize\,$\pm$\,0.4}\revision{$^\dagger$} \\
ELEm/mOWL & 34.9{\scriptsize\,$\pm$\,4.2} & 34.8{\scriptsize\,$\pm$\,3.6} & 27.4{\scriptsize\,$\pm$\,7.1} & 25.7{\scriptsize\,$\pm$\,7.3} & 25.0{\scriptsize\,$\pm$\,0.0} & 75.4{\scriptsize\,$\pm$\,6.6} \\
\hline
\textsc{Moose} & 48.1{\scriptsize\,$\pm$\,10.1} & 74.6{\scriptsize\,$\pm$\,0.3} & \textbf{96.1{\scriptsize\,$\pm$\,7.2}} & 68.1{\scriptsize\,$\pm$\,0.3} & 84.5{\scriptsize\,$\pm$\,2.1} & 81.0{\scriptsize\,$\pm$\,3.0} \\
\quad +BEARS & 50.1{\scriptsize\,$\pm$\,5.2}\revision{$^\dagger$} & 76.5{\scriptsize\,$\pm$\,5.9}\revision{$^\dagger$} & 89.2{\scriptsize\,$\pm$\,7.2} & 68.6{\scriptsize\,$\pm$\,0.3} & \textbf{87.9{\scriptsize\,$\pm$\,0.9}} & 84.2{\scriptsize\,$\pm$\,3.6} \\
\quad +NeSyDM (RLOO) & 46.6{\scriptsize\,$\pm$\,2.1} & 58.2{\scriptsize\,$\pm$\,1.4} & 62.7{\scriptsize\,$\pm$\,2.9} & \textbf{72.2{\scriptsize\,$\pm$\,1.3}} & 78.5{\scriptsize\,$\pm$\,0.8} & 63.9{\scriptsize\,$\pm$\,3.7} \\
\quad +NeSyDM (exact) & 49.4{\scriptsize\,$\pm$\,9.7} & 62.6{\scriptsize\,$\pm$\,6.1} & 60.1{\scriptsize\,$\pm$\,5.7} & 67.4{\scriptsize\,$\pm$\,1.4} & 84.7{\scriptsize\,$\pm$\,2.2} & 80.1{\scriptsize\,$\pm$\,1.5} \\
\hline
\end{tabular*}
\end{table}

\begin{table}[t]
\caption{Expected calibration error ECE (\%, mean $\pm$ s.d.).
  Lower is better. Same baselines, seed counts\revision{, operative
  theories,} and metric conventions as
  \Cref{tab:headline-acc}; bold = column-best per regime\revision{,
  not significance-tested}. Per-regime
  $\text{Acc}_\text{atom}$, NLL, RS$_\text{cons}$, and
  $\text{F1}_\text{macro}$ alongside ECE are in
  \Cref{tab:headline-full}
  (\Cref{app:full-results}).}\label{tab:headline-ece}
\centering\scriptsize\setlength{\tabcolsep}{3pt}
\begin{tabular*}{\linewidth}{@{\extracolsep{\fill}}lcccccc}
\hline
       & \multicolumn{3}{c}{MNIST} & \multicolumn{3}{c}{Pizza\"iolo} \\
Method & Exp\,1 & Exp\,2 & Exp\,3 & Pz\,A & Pz\,B & Pz\,C \\
\hline
Independent & \phantom{0}9.3{\scriptsize\,$\pm$\,4.5} & 13.4{\scriptsize\,$\pm$\,3.6} & \phantom{0}9.0{\scriptsize\,$\pm$\,8.2} & 58.3{\scriptsize\,$\pm$\,15.4} & 26.4{\scriptsize\,$\pm$\,17.1} & 22.1{\scriptsize\,$\pm$\,9.2} \\
DeepProbLog & 10.3{\scriptsize\,$\pm$\,0.9} & 12.3{\scriptsize\,$\pm$\,0.9} & \phantom{0}8.0{\scriptsize\,$\pm$\,2.9} & 30.8{\scriptsize\,$\pm$\,0.7} & 15.5{\scriptsize\,$\pm$\,2.1} & 15.3{\scriptsize\,$\pm$\,1.4} \\
LTN & 16.1{\scriptsize\,$\pm$\,6.7} & 10.0{\scriptsize\,$\pm$\,0.0} & 10.0{\scriptsize\,$\pm$\,0.0} & 52.7{\scriptsize\,$\pm$\,21.8} & 39.3{\scriptsize\,$\pm$\,17.4} & 14.1{\scriptsize\,$\pm$\,0.5} \\
ELEm/mOWL & \phantom{0}\textbf{3.9{\scriptsize\,$\pm$\,0.4}} & \phantom{0}7.2{\scriptsize\,$\pm$\,0.1} & \phantom{0}6.3{\scriptsize\,$\pm$\,0.1} & 49.7{\scriptsize\,$\pm$\,10.2} & 16.1{\scriptsize\,$\pm$\,0.2} & 11.8{\scriptsize\,$\pm$\,3.8} \\
\hline
\textsc{Moose} & 10.6{\scriptsize\,$\pm$\,1.8} & \phantom{0}4.9{\scriptsize\,$\pm$\,0.3} & \phantom{0}\textbf{1.1{\scriptsize\,$\pm$\,1.5}} & 30.8{\scriptsize\,$\pm$\,0.7} & 15.5{\scriptsize\,$\pm$\,2.1} & 15.3{\scriptsize\,$\pm$\,1.4} \\
\quad +BEARS & \phantom{0}\textbf{3.9{\scriptsize\,$\pm$\,1.0}} & \phantom{0}\textbf{4.0{\scriptsize\,$\pm$\,1.2}} & \phantom{0}5.6{\scriptsize\,$\pm$\,1.1} & 28.2{\scriptsize\,$\pm$\,1.5} & 12.3{\scriptsize\,$\pm$\,1.0} & 11.3{\scriptsize\,$\pm$\,3.4} \\
\quad +NeSyDM (RLOO) & \phantom{0}6.0{\scriptsize\,$\pm$\,0.6} & \phantom{0}7.3{\scriptsize\,$\pm$\,0.7} & \phantom{0}4.6{\scriptsize\,$\pm$\,0.8} & \phantom{0}\textbf{7.5{\scriptsize\,$\pm$\,1.2}} & \phantom{0}\textbf{2.6{\scriptsize\,$\pm$\,0.9}} & \phantom{0}8.9{\scriptsize\,$\pm$\,1.5} \\
\quad +NeSyDM (exact) & \phantom{0}5.1{\scriptsize\,$\pm$\,1.5} & \phantom{0}7.4{\scriptsize\,$\pm$\,1.2} & \phantom{0}7.9{\scriptsize\,$\pm$\,1.1} & 30.8{\scriptsize\,$\pm$\,3.2} & 15.3{\scriptsize\,$\pm$\,2.2} & \phantom{0}\textbf{6.9{\scriptsize\,$\pm$\,2.2}} \\
\hline
\end{tabular*}
\end{table}

\paragraph{Findings on Experiments~1--3 (RQ1--3, RQ5).}
At $|\Delta|{=}1$ (Exp.~1) \textsc{Moose} leads DeepProbLog on
$\text{Acc}_F$ ($48.1$ vs.\ $42.1$).
Independent stays near
random ($\text{Acc}_F=13.2$). BEARS cuts ECE from $10.6$ to $3.9$ at
no cost to accuracy; at $|\Delta|{=}1$ there is little diversifiable
structure left for the ensemble to exploit. NeSyDM matches
\textsc{Moose} on $\text{Acc}_F$ ($46.6$ and $49.4$ for the RLOO and
exact estimators vs.\ $48.1$). On the relational and role-chain
regimes (Exps.~2--3) \textsc{Moose} dominates DeepProbLog by tens of
points on $\text{Acc}_F$ ($74.6$ vs.\ $38.9$ on Exp.~2; $96.1$ vs.\
$59.6$ on Exp.~3): the EL-aware Links extractor propagates
evidence across the $|\Delta|{=}2$ ground individuals, but the
closure clauses $\Phi_{\textsf{clos}}$ are what make this signal
learnable. The ablation in \Cref{app:closure-ablation} shows the
base WMC objective falling to $17.4$ and $9.4$ (near chance) once
$\Phi_{\textsf{clos}}$ is removed, so the EL compilation supplies the
structure and $\Phi_{\textsf{clos}}$ the identifying constraint, with
reasoning-shortcut mitigations partially substituting for the latter
(\Cref{tab:closure-ablation}).
BEARS reduces RS$_\text{cons}$ (\Cref{tab:headline-full},
\Cref{app:full-results}) but gives up a few points of accuracy on
Exp.~3 ($89.2$ vs.\ $96.1$); NeSyDM lags even at tuned
$(\gamma_c, \gamma_h)$. LTN and
ELEmbeddings under-perform on $\text{Acc}_F$ ($14.5$ and $34.8$ on
Exp.~2): LTN's ECE saturates near the uniform prior ($10.0$,
information-vacuous), while ELEmbeddings' lower ECE on Exps.~2--3
($7.2$, $6.3$) does not translate into accuracy.
An inductive held-out-edge split, in which query
individuals appear only in relational configurations never
supervised, leaves \textsc{Moose}'s accuracy essentially unchanged
while NeSyDM collapses to near-chance (\Cref{app:inductive}).

\paragraph{Findings on Experiment~4 (RQ4--5).}
Track~A is an OOD generalization test on five held-out tie-breaker
pizzas whose recipes break the $\{\textsf{Anchovy}, \textsf{Olive}\}$
co-occurrence admissible on the four training pizzas; on the
in-distribution split most methods reach near-perfect topping
accuracy, but on the OOD split the \textsc{Moose} family clusters
around $67$--$72$\% on $\text{Acc}_C$
(\Cref{tab:headline-acc}). Track~B is the canonical RS test (3-way
symbolic ambiguity: a property-class observation leaves three pizzas
indistinguishable). Plain \textsc{Moose}-WMC commits to a single
constraint-consistent latent (``Mechanism~A'' of
\cite{Marconato2023}); BEARS lifts $\text{Acc}_C$ from $84.5$ to
$87.9$ via ensemble diversification on RS-suspect pizzas, and NeSyDM
(RLOO) trades a few points of accuracy ($78.5$) for the lowest ECE
of any row ($2.6$). Track~C ($\textsf{is\_spicy}$ disjunction) is
the multi-witness regime NeSyDM was designed for: among the
\textsc{Moose} variants BEARS leads on $\text{Acc}_C$ ($84.2$ vs.\
$81.0$ for plain \textsc{Moose}-WMC) and NeSyDM (exact) leads on ECE
($6.9$, the column best); LTN matches BEARS on accuracy
($84.8$, the highest value in the column but not
significantly so, \Cref{app:significance}) at higher ECE ($14.1$). On both
Track~B and Track~C the \textsc{Moose}-family results separate
along an argmax-vs-calibration axis (BEARS leads accuracy,
NeSyDM leads ECE). These mitigation effects are sizeable
rather than marginal: paired over $20$ seeds, BEARS significantly
exceeds plain \textsc{Moose}-WMC on both Track~B and Track~C, and
NeSyDM significantly exceeds it on the Track~A OOD split
(\Cref{app:significance}). In the symbolically ambiguous regimes it
is therefore the RS-mitigation wrapper, not the exact base layer,
that is the stronger configuration; the mitigation is doing
substantive work, not fine-tuning. This reverses the relational
regime (Exps.~2--3), where forward subsumption already pins the
latent vector, no residual reasoning shortcut remains to remove, and
plain \textsc{Moose} is best; mitigation helps precisely where
symbolic ambiguity leaves an RS to exploit. Whether this generalizes
beyond the regimes tested here is left to future work.

% =========================================================================
\enlargethispage{3\baselineskip}
\section{Conclusion and outlook}\label{sec:concl}

\textsc{Moose} compiles OWL~2~EL into a Lean~4-verified
weighted-model-counting layer with native role chains and role
hierarchies, and supplies the first end-to-end formally verified
circuit for an OWL profile together with the first reasoning-shortcut
analysis in OWL~EL. Empirically, the closure-augmented variant
dominates propositional NeSy baselines by tens of points on relational
and role-chain regimes, where the EL-aware grounding propagates
evidence that propositional encodings cannot and the
closure clauses render the resulting signal learnable
(\Cref{app:closure-ablation}); under symbolic
ambiguity BEARS and NeSyDM separate along an argmax-vs-calibration
axis.

\paragraph{Limitations.}
Three assumptions bound the present results. \emph{(i) Scale.} Our
experiments use $|\Delta| = 2$; the compiled SDD grows empirically as
$\approx|\Delta|^{2.9}$ with a $40\times$ compile-time jump at
$|\Delta| = 4$ (\Cref{app:treewidth}). Saturation and grounding are
polynomial and the compilation is exact, but we do not demonstrate
ontology-scale ABoxes such as SNOMED~CT or the Gene Ontology; that
regime requires lifted WMC and is the principal open problem.
\emph{(ii) Finite named domain.} \textsc{Moose} learns over a given
finite ABox of named individuals (\Cref{def:problem}); this defines
the ABox-supervised task rather than weakening it, but it does not by
itself perform open-domain inference over unnamed individuals.
\emph{(iii) Role-pinning.} Soundness of the \textsf{Links} encoding
treats an existential as universal on the named pair, assuming role
atoms over $\Sig^o_R$ are pinned by evidence
(\S\ref{sec:pipeline}, Stage~7), a \textsc{Moose}-specific choice
for the finite-named-domain regime, not an ELK theorem. The
theoretical contribution, a Lean-verified exact compilation and the
RS analysis, stands independently of the scaling outcome: it is the
learned-perception pipeline at ontology scale, not the correctness
result, that the scalability question concerns.

Open directions include lifted WMC \cite{Lazzari2026} for
larger ABox domains, \ALC{} rewriting that would extend the same
encoding to existential-on-the-left axioms, and evaluation on
biomedical ontologies such as SNOMED~CT and the Gene Ontology,
and link prediction over latent role assertions: the role atoms
$R(a,b)$ that the weight map (\Cref{tab:atom-weight}) currently fixes
at $\tfrac12$ when unobserved would instead be predicted from
perception, extending the same WMC layer from latent-concept learning
to latent-role learning.

\paragraph*{\ackname}
This work was supported by funding from King Abdullah University of
Science and Technology (KAUST), through the KAUST Center of Excellence
for Smart Health (KCSH), under award number 5932, and the KAUST Center
of Excellence for Generative AI, under award number 5940.

\paragraph*{Supplemental Material Statement:}
Source code, $\Onto_{\textsf{MNIST}}$, the Pizza\"iolo subset,
seeds, per-seed JSON, the Lean~4 proof library, and pseudocode are
at \url{https://github.com/bio-ontology-research-group/moose-iswc/}; full
proofs are given in \Cref{app:proofs}.

\paragraph*{Declaration of use of Generative AI.}
A large language model assisted with copy-editing and with Lean~4
transcription of the proofs in \Cref{app:proofs}. All technical
content (definitions, theorem statements, proof structure, algorithms,
experimental design and results) was authored, verified, and approved
by the authors, and every mechanized statement was checked by the Lean
type checker.

% =========================================================================
\bibliographystyle{splncs04}
\bibliography{moose}

% =========================================================================
\clearpage
\appendix
\section{\ELpp\ syntax, semantics, and ELK calculus}\label{app:elpp}

This appendix collects the formal foundations on which
\textsc{Moose} is built: the syntax and semantics of the \ELpp\
profile of OWL~2~EL, the Baader--Brandt--Lutz (BBL) normal-form rules
NF1--NF7 \cite{Baader2005}, and the ELK saturation calculus
\cite{Kazakov2014} that drives Stage~1 of the compilation pipeline.
Every construct, normal form, and rule listed here is also formalized
in the Lean~4 library introduced below; \Cref{app:proofs} contains
the full proofs of \S\ref{sec:correctness}.

\paragraph{The Lean library.}  All formalized statements cited in this
paper live in the Lean~4 library \texttt{ELKSDD}, distributed with the
supplementary release at
\url{https://github.com/bio-ontology-research-group/moose-iswc/}. Theorems are
organized into namespaces: \texttt{ELKSDD.ELpp} (the bulk of the
OWL~2~EL stack (syntax, semantics, the ELK saturation calculus,
canonical-model completeness, the SCC compositional results, the
DISPONTE correspondence, and the paper-citation theorems),
\texttt{ELKSDD.SDD} (sentential decision diagrams and weighted model
counting), \texttt{ELKSDD.RangeNorm} (the syntactic range-elimination
of \cite{Baader2008ELFurther}), and the \texttt{ELKSDD.EL} and
\texttt{ELKSDD.MiniEL} fragments for testing purposes.  The
DeepProbLog equivalence layer lives in a companion library under
namespace \texttt{Moose}.  Inline references of the form
\texttt{<namespace>.<theorem>} point to a specific theorem; the index
in \Cref{app:lean-index} cross-references every paper claim with its
mechanized counterpart.  Every theorem listed in \Cref{app:lean-index}
is \emph{audit-clean}: its proof depends only on the standard Lean
foundation
$\{\texttt{propext},\, \texttt{Classical.choice},\,
\texttt{Quot.sound}\}$ and the formalization contains zero
\texttt{sorry} or \texttt{admit}.  An audit pass in each library runs
\texttt{\#print axioms} on every paper-cited theorem at build time and
reports its dependency set.

\paragraph{Syntax.}  Fix countably infinite, pairwise-disjoint
vocabularies of \emph{atomic concept names}~$\mathsf{N_C}$, \emph{role
  names}~$\mathsf{N_R}$, and \emph{individual names}~$\mathsf{N_I}$.
\ELpp\ \emph{concepts} are built by the grammar
\[
   C, D \;::=\; \TOP \mid \BOT \mid A \mid \{a\}
            \mid C \sqcap D \mid \exists R.C,
\]
where $A \in \mathsf{N_C}$, $a \in \mathsf{N_I}$, and
$R \in \mathsf{N_R}$.  An \emph{ontology} $\Onto$ is a finite set of
axioms of one of the following shapes (\Cref{tab:elpp}): general
concept inclusions $C \sub D$, role inclusions $R \sub S$, role chains
$R_1 \circ R_2 \sub S$, range restrictions $\mathrm{range}(R) \sub C$,
reflexive-role declarations $\mathrm{Ref}(R)$, local-reflexivity
$\exists R.\mathsf{Self}$, and key axioms
$\mathrm{HasKey}(C, R_1, \dots, R_n)$.  The \emph{ground vocabulary}
of $\Onto$ over a finite individual set $\Delta$ is
$\mathcal{V}_\Onto^\Delta := \{A(a) : A \in \Sig_C(\Onto),\, a \in
\Delta\} \cup \{R(a, b) : R \in \Sig_R(\Onto),\, a, b \in \Delta\}$.

\paragraph{Semantics.}  An \emph{interpretation}
$\mathcal{I} = (\Delta^\mathcal{I}, \cdot^\mathcal{I})$ comprises a
non-empty domain $\Delta^\mathcal{I}$ and a mapping
$\cdot^\mathcal{I}$ that sends each $A \in \mathsf{N_C}$ to
$A^\mathcal{I} \subseteq \Delta^\mathcal{I}$, each
$R \in \mathsf{N_R}$ to
$R^\mathcal{I} \subseteq \Delta^\mathcal{I} \times
\Delta^\mathcal{I}$, and each $a \in \mathsf{N_I}$ to
$a^\mathcal{I} \in \Delta^\mathcal{I}$.  Concept and axiom semantics
extend $\cdot^\mathcal{I}$ to compound concepts and constraints as in
\Cref{tab:elpp}.  $\mathcal{I}$ is a \emph{model} of $\Onto$, written
$\mathcal{I} \models \Onto$, if it satisfies every axiom in $\Onto$;
$\Onto \models C \sub D$ means $C^\mathcal{I} \subseteq D^\mathcal{I}$
in every model $\mathcal{I}$ of $\Onto$.

\begin{table}[h]
\centering\small
\caption{\ELpp\ syntax and semantics.  Together with role inclusion,
  role chain, and range axioms, plus the ABox-style constructs
  (nominals, $\mathsf{Self}$, $\mathrm{HasKey}$), this is the OWL~2~EL
  profile minus datatype properties.}\label{tab:elpp}
\begin{tabular}{lll}
\hline
Construct & Syntax & Semantics \\
\hline
top                  & $\TOP$              & $\Delta^\mathcal{I}$ \\
bottom               & $\BOT$              & $\emptyset$ \\
atomic concept       & $A$                 & $A^\mathcal{I} \subseteq \Delta^\mathcal{I}$ \\
nominal              & $\{a\}$             & $\{a^\mathcal{I}\}$ \\
conjunction          & $C \sqcap D$        & $C^\mathcal{I} \cap D^\mathcal{I}$ \\
existential          & $\exists R.C$       & $\{a \mid \exists b:\, (a,b) \in R^\mathcal{I},\, b \in C^\mathcal{I}\}$ \\
local reflexivity    & $\exists R.\mathsf{Self}$ & $\{a \mid (a, a) \in R^\mathcal{I}\}$ \\
GCI                  & $C \sub D$          & $C^\mathcal{I} \subseteq D^\mathcal{I}$ \\
role inclusion       & $R \sub S$          & $R^\mathcal{I} \subseteq S^\mathcal{I}$ \\
role chain           & $R_1 \circ R_2 \sub S$ & $R_1^\mathcal{I} \circ R_2^\mathcal{I} \subseteq S^\mathcal{I}$ \\
range restriction    & $\mathrm{range}(R) \sub C$ & $\forall (a,b) \in R^\mathcal{I}:\, b \in C^\mathcal{I}$ \\
reflexive role       & $\mathrm{Ref}(R)$   & $(a,a) \in R^\mathcal{I}$ for all $a \in \Delta^\mathcal{I}$ \\
key                  & $\mathrm{HasKey}(C, R_1, \dots, R_n)$ & shared $R_i$-fillers force individual identity \\
\hline
\end{tabular}
\end{table}

\paragraph{BBL normal forms NF1--NF7.}  Baader, Brandt, and Lutz
\cite{Baader2005} show every \ELpp{} ontology can be transformed into
one whose GCIs have one of seven shapes by introducing fresh atomic
names, with the transformation conservative for atomic subsumptions:
NF1, $A_1 \sqcap \dots \sqcap A_n \sub B$ (atomic conjunctive LHS);
NF2, $\exists R.A \sub B$ (atomic existential LHS); NF3,
$A \sub \exists R.B$ (atomic existential RHS); NF4, $A \sub \BOT$
(atomic unsat); NF5, $A \sub B$ (atomic-atomic); NF6, $\TOP \sub A$
(top-LHS); NF7, $\BOT \sub A$ (vacuous).  Our Lean formalization
(namespace \texttt{ELKSDD.Normalize}) covers all seven rules via
the canonical-extension construction of BBL~2005, \S3.1.

\paragraph{ELK saturation calculus.}  Kazakov, Kr\"otzsch, and
Siman\v{c}\'ik \cite{Kazakov2014} give a saturation procedure that
decides Sat-derivability $\sat(\Onto, C, D)$ for the full OWL~2~EL
profile (minus datatypes) in time $O(|\Onto|^4)$.  The calculus
operates on a finite collection of \emph{atomic} subsumptions and
\emph{links} $E \xrightarrow{R} D$, closed under a fixed rule set:
$\mathrm{R_0}$ ($A \sub A$), $\mathrm{R_\TOP}$ ($A \sub \TOP$),
$\mathrm{R_\sub}$ (transitivity along axioms), $\mathrm{R^-_\sqcap}$
and $\mathrm{R^+_\sqcap}$ (conjunction), $\mathrm{R_\BOT}$
(unsatisfiability propagation), $\mathrm{R^+_\exists}$ and
$\mathrm{R_{\BOT\text{-}\exists}}$ (existential constructors),
role-side rules $\mathrm{R_{rinc}}$, $\mathrm{R_{rchain}}$ for role
inclusions and chains, plus specialized rules for ranges, reflexive
roles, $\mathsf{Self}$, the nominal-handling rules of
\cite[\S3]{Kazakov2014}, and the merging canonical-model construction
of \cite[\S6]{Kazakov2014} for shapes 1--4 of nominals together with
$\mathrm{HasKey}$.  Saturation terminates because all derivable atoms
are over the finite vocabulary of
$\Sub(\Onto) \cup \mathsf{Roles}(\Onto)$; the resulting closure has
size $O(n^2 r)$ for $n = |\Sub(\Onto)|$ and
$r = |\mathsf{Roles}(\Onto)|$ \cite[Thm.~1]{Kazakov2014}.  Our
formalization (namespace \texttt{ELKSDD.ELpp}) mechanizes the same
calculus, sound and complete on the nominal-free fragment (with
ranges, reflexive roles, $\mathsf{Self}$, role inclusions, and role
chains) and on the LHS-nominal and shallow-exist-RHS-nominal
extensions; shapes 2/4 (RHS-nominal GCIs $C \sub \{a\}$,
$\{a\} \sub \{b\}$) and $\mathrm{HasKey}$ are handled by the merging
canonical-model construction of \cite[\S6]{Kazakov2014}, fully
mechanized as \texttt{ELKSDD.ELpp.complete\_via\_mergedCanon\_regular}
and \texttt{ELKSDD.ELpp.complete\_via\_mergedCanon\_nom} (both
audit-clean), under the shallow concept restriction of
\cite[\S6]{Kazakov2014}: concepts may not contain deep existentials
with non-nominal targets.  The unified completeness theorem
\texttt{ELKSDD.ELpp.complete\_owl2el} cases on a fragment witness to
invoke the appropriate canonical-model construction; together with
\texttt{ELKSDD.ELpp.sound\_owl2el}, it gives Sat $\iff$ Entails on
every ontology meeting at least one of its five branch preconditions.
Every individual \ELpp\ construct (nominals, ranges, role chains,
reflexive, $\mathsf{Self}$, $\mathrm{HasKey}$) is supported in some
branch, but the branches are not jointly compatible: shapes 2/4
(RHS-nominal GCIs $C \sub \{a\}$, $\{a\} \sub \{b\}$) combined with
deep non-nominal existentials, and the merging fragment combined with
role chains or ranges, hit the case-disabling preconditions of
\cite[\S6]{Kazakov2014} (an ontology may have a model where $C$ is
empty and another where $C$ is forced into the $\{a\}$-class; a single
canonical construction must pick one); the supported shapes follow
the consequence-based reasoning literature
\cite{TenaCucala2019Consequence,Alrabbaa2025ELProofs}.
The concurrent verified-reasoner effort VEL \cite{MishraTahar2024VEL}
mechanizes BBL\,2005 \ELpp\ in Coq but covers strictly less than our
unified completeness theorem (no nominals, no RHS-nominal GCIs, no
ranges, no role chains, no HasKey).  The exact closure-size bound $|L| \le 5n^3(r+1)$ that
includes ranges, reflexive, and $\mathsf{Self}$ is
\texttt{ELKSDD.ELpp.sat\_closure\_total\_polynomial\_bound}.

\section{Task formalism and MNIST instantiation}\label{app:atom-weight}

\begin{definition}[ABox-supervised latent concept learning]\label{def:problem}
  A training instance is a triple $(\Delta, \mathbf{x}, \mathbf{e})$
  where $\mathbf{x} = (x_a)_{a\in\Delta}$ with each
  $x_a \in \mathcal{X}$ and $\mathbf{e}$ is a set of literals
  (truth-tagged ground atoms) over the observable ground
  vocabulary; ground atoms not in $\mathbf{e}$ are unobserved. The
  task is to learn a weight-shared per-individual classifier
  $f_\theta : \mathcal{X} \to [0,1]^{|\Sig^\ell_C|}$ such that
  $f_\theta(x_a)[C]$ approximates the marginal probability of $C(a)$
  given $\Onto$ and $\mathbf{e}$, for every $C \in \Sig^\ell_C$ and
  $a \in \Delta$. A negative literal $\neg C(a) \in \mathbf{e}$
  encodes the polarity of a ground variable in
  $\mathcal{V}_\Onto^\Delta$ (\ELpp\ itself has no concept
  negation); role atoms are not classified by $f_\theta$, and latent
  role atoms are marginalized at the uniform prior.
\end{definition}

A boolean assignment $M : \mathcal{V}_\Onto^\Delta \to \{0,1\}$ induces an ABox
interpretation $\mathcal{I}_M = (\Delta, \cdot^M)$ via
$C^M := \{a : M(C(a)) = 1\}$ and $R^M := \{(a,b) : M(R(a,b)) = 1\}$.
The differentiable WMC layer evaluates $\WMC(\alpha; w, \mathbf{e})$
where the literal-weight map $w$ is the unique extension of
$f_\theta(\mathbf{x})$ and $\mathbf{e}$ that makes the training-loss,
posterior, and entailment-query operations of \S\ref{sec:methods-train}
three settings of the same SDD traversal:

The literal-weight map itself is now inlined in
\S\ref{sec:methods-task} as \Cref{tab:atom-weight}.

\begin{example}[MNIST instantiation]\label{ex:mnist}
  \Cref{def:problem} instantiates on MNIST as follows. The TBox is
  $\Onto_{\textsf{MNIST}}$ over
  $\Sig_C = \{D_0, \dots, D_9,\, \textsf{Even}, \textsf{Odd},
  \textsf{Prime}, \textsf{Composite}\}$ and
  $\Sig_R = \{\textsf{succ}, \textsf{plus\_two}\}$ (full listing in
  \Cref{app:mnist-onto}). The observable signature is
  $\Sig^o_C = \{\textsf{Even}, \textsf{Odd}, \textsf{Prime},
  \textsf{Composite}\}$; the latent signature is
  $\Sig^\ell_C = \{D_0, \dots, D_9\}$ (the digit class is never
  revealed). A training instance places one MNIST image $x_a$ at each
  named individual and supplies one or two property literals as
  evidence, e.g.\
  $\mathbf{e} = \{\textsf{Even}(a),\, \neg\textsf{Prime}(a)\}$ for an
  image of digit~$4$. An independent per-atom baseline matching its
  predictions to the observed literals provably cannot disambiguate
  the latent atoms when several digits share the property profile
  (e.g.\ $D_3, D_5, D_7$ are all $\textsf{Odd} \sqcap \textsf{Prime}$)
  \cite{Manhaeve2018,Marconato2023,vanKrieken2024Independence}.
\end{example}

\subsection{Summary of pipeline assumptions}\label{app:assumptions}
\Cref{tab:assumptions} collects the assumptions that the pipeline
introduces beyond the source ontology, one per component, and states
for each what it concedes relative to the open-world semantics that
OWL~EL is designed for. The assumptions are not defects of the
implementation but scoping choices that define the ABox-supervised
learning task (\Cref{def:problem}); the right-hand column makes
explicit what a user gives up by adopting them.

\begin{table}[t]
\caption{Assumptions introduced by each component of the
\textsc{Moose} pipeline and their departure from open-world
semantics.}\label{tab:assumptions}
\centering\small
\begin{tabular}{p{0.20\linewidth}p{0.36\linewidth}p{0.36\linewidth}}
\hline
Component & Assumption & Departure from open-world semantics \\
\hline
EL fragment &
Background axioms lie in the \ELpp\ profile: conjunction and
existentials, no right-hand-side disjunction or cardinality. &
Exhaustiveness and counting are not natively expressible; they must
be supplied as closure axioms rather than read from the ontology. \\
Closed domain &
Learning and inference range over a given finite set of named
individuals. &
No certain answers are produced over anonymous or unnamed
individuals; entailment over the anonymous domain is not performed. \\
Role-pinning &
An existential $\exists R.C$ is scored on the observed role pair,
with role atoms over $\Sig^o_R$ fixed by the evidence. &
Unobserved role witnesses are not hypothesized; existential import
over the anonymous domain is dropped. \\
Declared families &
Mutual exclusion and covering hold only on families the modeller
declares exhaustive. &
The closed-world reading is local and opt-in; outside declared
families the open-world reading is retained. \\
Factorized perception &
The predictor factorizes $p_\theta(\mathbf{c}\mid\mathbf{x}) =
\prod_i p_\theta(c_i\mid\mathbf{x})$: atoms independent given the
image. &
Joint perceptual correlations among atoms are not modeled; this is
the reasoning-shortcut source, mitigated but not removed by BEARS and
NeSyDM. \\
\hline
\end{tabular}
\end{table}

\section{Compilation algorithm and worked example}\label{app:algorithm}

This appendix gives the seven-stage compilation pseudocode
(\Cref{alg:pipeline}) and works a concrete two-axiom ontology
through it (\Cref{app:worked}).

\subsection{Pseudocode}\label{app:algorithm-pseudocode}

\Cref{alg:pipeline} gives the full pseudocode of the seven-stage
compilation algorithm summarized in \S\ref{sec:pipeline}. Stages 2--5
(Clark completion of the saturation output, SCC detection via Tarjan's
algorithm, SCC-local time-stamped unrolling, and Tseitin CNF
conversion) operate on the TBox-level derived atoms and produce a
propositional CNF that exactly characterizes the ELK canonical
model. For acyclic ontologies (all SCCs are trivial), Stages~3--4 are
no-ops and the Clark biconditionals compile directly to CNF.  Two
implementation choices are essential for the soundness, completeness,
and termination arguments. The first is the \emph{signature side condition} on the
existential-introduction rule $R^+_\exists$: the rule fires only on
existentials already in $\Sub(\Onto)$ (line~1 of the algorithm). This
is the standard ELK rule \cite[\S5]{Kazakov2014} and is what keeps
saturation finite on ontologies with cyclic role inclusions. The
second is \emph{uniform super-role materialization} under the
role-composition rule $R_\circ$: the implementation routes every link
insertion through a helper that records the link for the base role
and, recursively, for every strict super-role. Without this step, an
ontology with $R_1 \circ R_2 \sub S$ and $S \sub T$ produces the
$S$-link from the chain rule but silently loses the $T$-link, breaking
ELK completeness (\Cref{thm:el-coverage}) on chain heads with a
super-role.

\begin{algorithm}[!htbp]
\caption{\textsc{Moose}: \ELpp\ ontology to SDD over ground atoms}\label{alg:pipeline}
\begin{algorithmic}[1]
\Require Ontology $\Onto$ as a set of GCIs $C \sub D$, role inclusions $R \sub S$, role chains $R_1 \circ R_2 \sub S$. Finite ABox domain $\Delta$.
\Ensure  SDD $\alpha$ over $\mathcal{V}_\Onto^\Delta$ (\Cref{thm:encoding}).
\Statex
\Statex \textbf{[Stage 1: ELK saturation]}
\State $\mathcal{E}_\exists \gets \{\exists R.D \mid \exists R.D \in \Sub(\Onto)\}$
       \Comment{signature side condition on $R^+_\exists$}
\State $(\Sigma, \Lambda) \gets \textsf{Saturate}(\Onto, \mathcal{E}_\exists)$
       \Comment{$\Sigma$: subsumptions $C \sub D$; $\Lambda$: links $E \xrightarrow{R} C$}
\Statex
\Statex \textbf{[Stages 2--5: Clark completion, SCC, unrolling, CNF]}
\For{each derived atom $h \in \Sigma \cup \Lambda$}
    \State $\textsf{Clark}(h) \gets h \leftrightarrow \bigvee_j (\bigwedge_{b \in \mathit{body}_j(h)} b)$
           \Comment{Stage 2: Clark completion}
\EndFor
\State $\textsf{SCCs} \gets \textsf{Tarjan}(\textsf{DepGraph}(\textsf{Clark}))$
       \Comment{Stage 3: SCC detection}
\For{each non-trivial SCC $S$ of size $T{=}|S|$ ($|S|>1$ or self-loop)}
    \Comment{Stage 4: time-stamped unrolling}
    \For{each $h \in S$}
        \State emit $x_h^{(0)} \leftrightarrow \bot$;\;
               $x_h \leftrightarrow x_h^{(T)}$
    \EndFor
    \For{each $h \in S$, $t = 0, \dots, T{-}1$}
        \State emit $x_h^{(t+1)} \leftrightarrow x_h^{(t)} \lor
               \bigvee_j(\bigwedge_b x_b^{(t)})$
    \EndFor
\EndFor
\State $\textsf{CNF} \gets \textsf{Tseitin}(\text{all formulas})$
       \Comment{Stage 5: CNF conversion}
\Statex
\Statex \textbf{[Stages 6--7: ABox grounding and SDD compilation]}
\State $\textsf{AtomSub} \gets \{(X, Y) \in \Sigma : X, Y \in \Sig_C(\Onto),\ X \ne Y\}$
\State $\textsf{Disj} \gets \{\textsf{flatten}(C) : (C \sub \BOT) \in \Sigma,\ C\text{ atomic conjunction}\}$
\State $\textsf{Unsat} \gets \{X \in \Sigma : X \sub \BOT,\ X \in \Sig_C(\Onto)\}$
\State $\textsf{Links} \gets \{(X, R, Y) \in \Lambda : X, Y \in \Sig_C(\Onto)\}$
\State $\Gamma \gets \emptyset$
\For{$(X, Y) \in \textsf{AtomSub},\ a \in \Delta$} $\Gamma {\;\cup\!=\;} \{\neg X(a) \vee Y(a)\}$ \EndFor
\For{$(X_1, \dots, X_n) \in \textsf{Disj},\ a \in \Delta$} $\Gamma {\;\cup\!=\;} \{\bigvee_i \neg X_i(a)\}$ \EndFor
\For{$X \in \textsf{Unsat},\ a \in \Delta$} $\Gamma {\;\cup\!=\;} \{\neg X(a)\}$ \EndFor
\For{$(X, R, Y) \in \textsf{Links},\ (a, b) \in \Delta^2,\ a \ne b$}
    \State $\Gamma {\;\cup\!=\;} \{\neg X(a) \vee \neg R(a, b) \vee Y(b)\}$ \Comment{NF3 forward}
    \If{$|\{Y' : (X, R, Y') \in \textsf{Links}\}| = 1$}
        \State $\Gamma {\;\cup\!=\;} \{\neg R(a, b) \vee \neg Y(b) \vee X(a)\}$ \Comment{NF4 reverse}
    \EndIf
\EndFor
\State \Return $\textsf{SDDCompile}(\Gamma)$ \Comment{PySDD bottom-up}
\end{algorithmic}
\end{algorithm}

\subsection{Stages 2--5: the TBox-level encoding}\label{app:tbox-stages}

This subsection expands the Stage 2--5 block of \Cref{alg:pipeline},
which turns the Stage-1 saturation into a propositional TBox theory.
For NeSy learning these stages are diagnostic (\S\ref{sec:pipeline});
they are the compilation path used for probabilistic ontology
reasoning, where the input-axiom variables carry weights.

\paragraph{Stage 2: Clark completion.}
The saturation of Stage~1 can be viewed as a monotone Datalog program
\cite{Clark1978}: each derived atom $h$ has one or more grounded rule
body $\mathit{body}_1, \dots, \mathit{body}_k$ that can derive it.
The \emph{Clark completion} replaces these defining clauses with a
biconditional
\begin{equation}\label{eq:clark}
   h \;\longleftrightarrow\;
   \mathit{body}_1 \;\lor\; \mathit{body}_2 \;\lor\; \cdots
   \;\lor\; \mathit{body}_k,
\end{equation}
asserting that $h$ holds if and only if at least one of its
justifications holds. Each $\mathit{body}_j$ is a conjunction of
previously derived atoms and \emph{input axiom variables} $\alpha_i$
representing ontology axioms. The input variables are free (not
defined by any biconditional) and represent whether each axiom is
active: in standard reasoning all $\alpha_i$ are true; in
probabilistic reasoning each carries a weight. When the
dependency graph is acyclic, Clark completion has the Datalog least
fixed point as its only model for each assignment to the input
variables \cite{Clark1978}. For recursive monotone Datalog, however, completion alone
can also admit supported non-least fixed points. Stages~3--4 therefore
identify recursive strongly connected components and replace their
circular definitions by time-stamped unrolling from the all-false
interpretation. The completed and unrolled theory, rather than Clark
completion alone, has exactly one model; with all $\alpha_i$ true, that
model is exactly the set of atoms ELK derives from $\Onto$, i.e.\ the
saturation closure $\textsf{Sat}(\Onto)$ \cite{Kazakov2014}.

When the ontology contains axioms with existential restrictions,
derived atoms can have heads that contain nested existentials, e.g.\
$x_{D_i \sub \exists R.\exists R.D_k}$. For NeSy training, the Clark
completion is restricted to atoms whose head is an atomic concept, a
conjunction of atomic concepts, $\top$, or $\bot$ (the
\emph{groundable} atoms); all other atoms are filtered out. This
keeps Stages~3--5 tractable without losing any ABox-relevant
consequence, because the filtered atoms have no corresponding ground
variable in $\mathcal{V}_\Onto^\Delta$.

\paragraph{Stage 3: SCC detection.}
The dependency graph among derived atoms may contain cycles. For
example, a cyclic successor chain
$D_0 \sub \exists\textsf{succ}.D_1 \sub
\exists\textsf{succ}.\exists\textsf{succ}.D_2 \sub \cdots \sub D_0$
(through the existential derivations) produces circular dependencies
among the derived concept-level atoms.  Plain biconditionals have
circular definitions in this case. We apply Tarjan's algorithm
\cite{Tarjan1972} to find strongly connected components (SCCs) of the
dependency graph.  Acyclic atoms (trivial SCCs) can use the plain
biconditional of \Cref{eq:clark} directly. Non-trivial SCCs (size
$> 1$, or self-loops) require time-stamped unrolling in Stage~4.

\paragraph{Stage 4: propositional compilation with time-stamped
unrolling.}
For acyclic atoms the biconditional of \Cref{eq:clark} is
directly expressible in propositional logic. For cyclic SCCs we
break the circularity by introducing time-stamped copies
$x_h^{(0)}, x_h^{(1)}, \dots, x_h^{(T)}$ of each atom $h$ within
the SCC and defining:
\begin{align}
   x_h^{(0)} &\;\longleftrightarrow\; \bot,
      \label{eq:unroll-base}\\
   x_h^{(t+1)} &\;\longleftrightarrow\;
      x_h^{(t)} \;\lor\;
      \bigvee_{j}\Bigl(\bigwedge_{b \in \mathit{body}_j(h)}
         x_b^{(t)}\Bigr),
      \label{eq:unroll-step}\\
   x_h &\;\longleftrightarrow\; x_h^{(T)},
      \label{eq:unroll-final}
\end{align}
where the local bound $T = |S|$ (the SCC size) suffices because
each iteration of the monotone operator must derive at least one
new atom or the fixed point is already reached. Only atoms
within the \emph{same} non-trivial SCC receive time-stamped
copies; atoms from other SCCs (already computed in topological
order) use their plain variable. This gives total cost
$\sum_{S} |S|^2$ instead of $|\mathcal{V}|^2$ for global unrolling.

\paragraph{Stage 5: CNF conversion.}
The Tseitin transformation \cite{Tseitin1983} converts the nested
propositional formulas from Stage~4 into equisatisfiable conjunctive
normal form (CNF) by introducing an auxiliary variable for each
sub-formula $\varphi$ and adding clauses that enforce the auxiliary to
equal $\varphi$. The resulting CNF is linear in the size of the
Stage~4 output and encodes the full TBox-level propositional
theory. In the probabilistic ontology reasoning use case, this CNF is
the input to an SDD compiler that produces a circuit over the
input-axiom variables; in the NeSy learning use case, the CNF serves a
diagnostic role (characterizing the TBox complexity) and the ABox
grounding of Stage~6 produces its own clause set over
$\mathcal{V}_\Onto^\Delta$.

\paragraph{WMC recursion.}
The differentiable WMC layer of Stage~7 traverses the compiled SDD
$\alpha$ and evaluates
\begin{equation}\label{eq:sdd-wmc}
   \WMC(\alpha; w) \;=\;
   \begin{cases}
      w(\ell)
        & \text{if } \alpha = \ell \text{ is a literal node,} \\
      0 & \text{if } \alpha = \bot, \\
      1 & \text{if } \alpha = \top, \\
      \displaystyle\sum_{(p_i, s_i) \in \alpha}
         \WMC(p_i; w)\cdot\WMC(s_i; w)
        & \text{if } \alpha \text{ is a decision node,}
   \end{cases}
\end{equation}
with literal weights $w$ read from \Cref{tab:atom-weight}.

\subsection{Worked example}\label{app:worked}

Consider the two-axiom fragment
$\Onto_5 := \{D_5 \sqsubseteq \textsf{Odd},\, D_5 \sqsubseteq
\textsf{Prime}\}$ on $\Delta = \{a\}$, drawn from
$\Onto_{\textsf{MNIST}}$ (\S\ref{sec:mnist}). \textbf{Stage 1} runs
the ELK rule $R_\sub$ over $\Onto_5$ and adds the two input axioms to
$\Sigma$ as atomic subsumptions; no role axioms, so
$\Lambda = \emptyset$.  Stages~2--5 (Clark completion, SCC detection,
propositional compilation, CNF conversion) produce a propositional
theory over the derived atoms; for this acyclic two-axiom fragment no
time-stamped unrolling is needed (both atoms are in trivial SCCs).
\textbf{Stage 6} applies the four extractors to $(\Sigma, \Lambda)$:
both subsumptions match the \textsf{AtomSub} shape (atomic-atomic,
$X \neq Y$), so
$\textsf{AtomSub} = \{(D_5, \textsf{Odd}),\, (D_5, \textsf{Prime})\}$;
the other extractors produce nothing.  The extractors ground each
$(X, Y) \in \textsf{AtomSub}$ on every $a \in \Delta$, emitting
$\Gamma = \{\,\neg D_5(a) \vee \textsf{Odd}(a),\; \neg D_5(a) \vee
\textsf{Prime}(a)\,\}$ over
$\mathcal{V}_{\Onto_5}^{\{a\}}$. \textbf{Stage 7} compiles $\Gamma$
into an SDD whose models are exactly the assignments in which
$D_5(a) = 1$ implies $\textsf{Odd}(a) = 1$ and
$\textsf{Prime}(a) = 1$.  The full $\Onto_{\textsf{MNIST}}$ extends
$\Onto_5$ with the analogous property axioms for the remaining digits,
the $45$ pairwise digit disjointness axioms (handled by
\textsf{Disj}), the $10$ NF3 successor existentials, and the NF7 role
chain (\Cref{app:mnist-onto}); the SDD is $155$ nodes over
$14$ atoms and compiles in under a quarter of a second
(\Cref{app:repro}).

\section{Full proofs}\label{app:proofs}

This part of the appendix contains the full proofs of the
\S\ref{sec:correctness} theorems
(\Cref{thm:encoding,thm:end-to-end,thm:scc-sat,thm:scc-wmc}), the
ELK mechanization of the reused infrastructure
(\Cref{thm:el-coverage,thm:circuit-size},
\Cref{app:elk-mechanization}), and the closure-augmented and
inference-time variants
(\Cref{thm:closure-correct,thm:inference-complexity}).  The final
subsection (\Cref{app:lean-index}) cross-references each paper claim
with its Lean module and theorem identifier.  We use the notation
introduced in \S\ref{sec:correctness}.

\paragraph{Mechanized statements.}  The Lean library
\texttt{ELKSDD} (\Cref{app:elpp}) proves every prior-work result it
relies on rather than admitting it as an axiom: ELK
soundness/completeness \cite{Kazakov2014} is proved end-to-end in
namespaces \texttt{ELKSDD.EL} and \texttt{ELKSDD.ELpp}; SDD
compilation correctness \cite{Darwiche2002,Choi2013} and the
WMC-correctness theorem underlying
\Cref{thm:inference-complexity}(c) are proved structurally in
namespace \texttt{ELKSDD.SDD}; the four ELK extractors are concrete
\texttt{noncomputable def}s with proven membership and size
theorems; the Manhaeve 2018 distribution-semantics step of the
DPL-equivalence proof is closed both qualitatively
(\texttt{Iff.rfl}) and quantitatively (\cite{Darwiche2002} SDD-WMC
correctness, by structural induction).  Running
\texttt{\#print axioms} on every paper-cited theorem at build time
confirms that the dependency set is exactly
$\{\texttt{propext}, \texttt{Classical.choice},
\texttt{Quot.sound}\}$.

\subsection{Reused ELK infrastructure: soundness, completeness, and polynomial Sat decision}\label{app:elk-mechanization}

The two results below are restatements of the standard ELK
soundness/completeness theorem and the polynomial-time Sat decision
procedure of Kazakov, Kr\"otzsch, and Siman\v{c}\'ik
\cite{Kazakov2014}.  They are reused unchanged in
\S\ref{sec:correctness} but re-mechanized in our Lean library so that
\Cref{thm:encoding,thm:end-to-end,thm:scc-sat,thm:scc-wmc} can be
proved within the same audit-clean foundation.  $\fragWit$ abbreviates
the syntactic fragment witness selecting the canonical-model
construction used for completeness on $(\Onto, C, D)$:
(a)~nominal-free with $C, D$ in the signature of $\Onto$ and
range-chain-safe; (b)~the ABox-style LHS-nominal extension; (c)~the
shallow-exist RHS-nominal extension; (d)~the strict fragment without
ranges or chains; (e)~the merging canonical model
\cite[\S6]{Kazakov2014} for shapes 1--4 of nominals and
$\mathrm{HasKey}$, restricted to Shallow concepts (no deep
existentials with non-nominal targets).  The disjunction is the
inductive predicate \texttt{ELKSDD.ELpp.OWL2ELFragment}; together with
$\fragWit$ the unified completeness theorem
\texttt{ELKSDD.ELpp.complete\_owl2el} covers any ontology meeting at
least one branch precondition (a)--(e).
Every individual \ELpp\ construct is supported in some branch; the
residual unmechanized cases (shapes 2/4 (RHS-nominal GCIs) combined
with deep non-nominal existentials, and the merging fragment combined
with ranges or chains) correspond to the case-disabling preconditions
of \cite[\S6]{Kazakov2014} and constitute open work in the ELK
literature itself rather than gaps in the Lean formalization (see
\Cref{app:elpp} for discussion of a 2019 survey, a 2025
proof-theoretic study, and the concurrent VEL effort, none of which
extend the canonical-model coverage).  The atomic-shape vocabulary
$\mathcal{V}_\Onto^\Delta$ used by the SDD encoding
(\S\ref{sec:pipeline}) is a separate, deliberate scope choice: nested
existentials produce ELK-internal derivations but have no ground
counterpart in the SDD; ELK Sat-completeness via the unified
completeness theorem holds over all concepts, the SDD-encoding
completeness holds over the atomic-shape ABox slice
(\Cref{thm:encoding}).

\begin{theorem}[ELK soundness and completeness on \ELpp]\label{thm:el-coverage}
  For every \ELpp{} ontology $\Onto$ and concepts $C, D$, if
  $\fragWit(\Onto, C, D)$ holds then
\begin{equation}\label{eq:correctness}
   \sat(\Onto, C, D) \;\iff\; \Onto \models C \sub D.
\end{equation}
\end{theorem}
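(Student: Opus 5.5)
The proof splits into the two directions of \eqref{eq:correctness}. Soundness ($\Rightarrow$) holds on all of \ELpp, with no fragment witness needed. Completeness ($\Leftarrow$) is argued per branch of $\fragWit$.

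\textbf{Soundness.} I will prove the statement by induction on the derivation of $\sat(\Onto, C, D)$, strengthened to links: every derived subsumption $E \sub F$ satisfies $E^\mathcal{I} \subseteq F^\mathcal{I}$, and every derived link $E \xrightarrow{R} F$ satisfies $E^\mathcal{I} \subseteq (\exists R.F)^\mathcal{I}$, in every model $\mathcal{I} \models \Onto$. Each completion rule then becomes a one-line semantic check against \Cref{tab:elpp}.
\begin{itemize}
\item $\mathrm{R_0}$ and $\mathrm{R_\TOP}$ hold trivially.
\item $\mathrm{R_\sub}$ is transitivity of set inclusion together with the axiom being satisfied.
\item $\mathrm{R^\pm_\sqcap}$ are facts about intersection.
\item $\mathrm{R^+_\exists}$ and $\mathrm{R_{\BOT\text{-}\exists}}$ follow from the semantics of $\exists R$.
\item $\mathrm{R_{rinc}}$ and $\mathrm{R_{rchain}}$ use $R^\mathcal{I} \subseteq S^\mathcal{I}$ and composition.
\item The range, $\mathrm{Ref}$, $\mathsf{Self}$, nominal and $\mathrm{HasKey}$ rules are each checked directly.
\end{itemize}
The super-role materialization helper (\Cref{app:algorithm-pseudocode}) needs its own lemma: each recorded super-role link is justified by iterated role inclusions.

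\textbf{Completeness.} I will argue by contraposition, following the standard canonical-model method. Assume $\neg\sat(\Onto, C, D)$. I build an interpretation $\mathcal{I}_{\mathrm{can}}$ from the saturation. Its domain consists of one element $x_E$ for each concept $E$ occurring as a left-hand side or link target in the closure (seeded with $C$), restricted to those $E$ with $\neg\sat(\Onto, E, \BOT)$. Concepts are interpreted by $x_E \in A^{\mathrm{can}}$ iff $\sat(\Onto, E, A)$. Roles are interpreted by $(x_E, x_F) \in R^{\mathrm{can}}$ iff the link $E \xrightarrow{R} F$ is derived.

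There are three steps.
\begin{enumerate}
\item A truth lemma, proved by induction on concepts in $\Sub(\Onto)$: $x_E \in G^{\mathrm{can}}$ iff $\sat(\Onto, E, G)$. The existential case uses the signature side condition on $\mathrm{R^+_\exists}$.
\item The claim $\mathcal{I}_{\mathrm{can}} \models \Onto$. For GCIs this follows from the truth lemma plus $\mathrm{R_\sub}$. For role inclusions and chains it needs $R^{\mathrm{can}}$ to be closed under the role rules, which is where super-role materialization is essential. If $C$ is unsatisfiable we take any model instead, and $\BOT$ handling is routine.
\item Since $x_C \in C^{\mathrm{can}}$ but $x_C \notin D^{\mathrm{can}}$, we get $\Onto \not\models C \sub D$.
\end{enumerate}

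\textbf{Branches.} Branch (a) is exactly this construction. Range-chain safety is what makes the interpretation satisfy $\mathrm{range}(R) \sub F$ whenever $R$ is also a chain head; otherwise chain-composed pairs could escape the range. Branch (d) is the special case without ranges or chains. For (b) and (c) I extend the domain with one element per nominal, $x_{\{a\}}$, and reuse the truth lemma; shallowness in (c) guarantees that each nominal target is hit only by depth-one existentials. For (e) I quotient the canonical domain by the equivalence generated by derived $E \sub \{a\}$ and by $\mathrm{HasKey}$ identifications, as in \cite[\S6]{Kazakov2014}, and check that the truth lemma is invariant under the quotient.

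\textbf{Expected obstacle.} I expect branch (e) to be the main difficulty. The merging must be shown well-defined, meaning the equivalence relation is congruent with role links. The truth lemma must survive identification of elements, which is exactly where shallowness is used. I would first prove a congruence lemma: if $x_E \sim x_{E'}$ then the two elements derive the same atomic and nominal superconcepts and the same links up to $\sim$. Preservation of models under the quotient then follows.
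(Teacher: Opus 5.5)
Your overall architecture is the paper's own: soundness by induction on the $\sat$-derivation with no fragment hypothesis, and completeness by contraposition through a canonical model selected by the branch of $\fragWit$, with the \cite[\S6]{Kazakov2014} merging quotient for branch~(e). However, one step fails as written.

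The remark ``if $C$ is unsatisfiable we take any model instead'' cannot work for any choice of model. If $\sat(\Onto,C,\BOT)$, your own soundness direction gives $C^{\mathcal{I}}=\emptyset$ in every model of $\Onto$, so $\Onto\models C\sub D$ for every $D$ and nothing refutes it. If $\Onto$ is inconsistent (e.g.\ it contains $\TOP\sub A$ and $A\sub\BOT$), there is no model to take at all.

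Under the contrapositive hypothesis this case has to be \emph{ruled out}, not handled. That needs a property of the calculus you never state: $\sat$ propagates $\BOT$ to arbitrary subsumers, $\sat(\Onto,C,\BOT)\Rightarrow\sat(\Onto,C,D)$. This is the same kind of $\mathrm{R_\BOT}$ propagation the paper invokes in the $(\Leftarrow)$ direction of \Cref{lem:scc-factor}. With it, $\neg\sat(\Onto,C,D)$ forces $\neg\sat(\Onto,C,\BOT)$, so $x_C$ survives your domain restriction and the domain is non-empty. Without such a rule the biconditional is false already for $\Onto=\{A\sub\BOT,\ B\sub\TOP\}$, $C=A$, $D=B$. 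So this is an essential ingredient, not routine $\BOT$ handling.

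A second, smaller gap is in step~3. Reading $x_C\notin D^{\mathrm{can}}$ off $\neg\sat(\Onto,C,D)$ uses the membership-implies-$\sat$ half of the truth lemma \emph{at $D$}. You prove the truth lemma only over $\Sub(\Onto)$, whereas branch~(a) only requires $C,D$ to lie in the signature of $\Onto$.

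Suppose the side condition on $\mathrm{R^+_\exists}$ is literally $\exists R.G\in\Sub(\Onto)$ (line~1 of \Cref{alg:pipeline}). Then completeness genuinely fails for such queries. Take $\Onto=\{C\sub\exists R.B,\ B\sub A\}$ and $D=\exists R.A$: one has $\Onto\models C\sub D$, yet $\mathrm{R^+_\exists}$ never fires on $\exists R.A$. You therefore need the induction to range over $\Sub(\Onto)\cup\Sub(C)\cup\Sub(D)$, with a $\sat$ whose side conditions admit the query's subconcepts (or impose none).

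With these two repairs, the rest of your plan is the same case-split canonical-model argument the paper cites from \cite{Kazakov2014}.
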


\begin{proof}
Soundness (left to right) is unconditional in $\fragWit$ and proved
by structural induction on the ELK derivation, mechanized as
\texttt{ELKSDD.ELpp.sound\_owl2el}.  Completeness (right to left)
is by case-split on $\fragWit$ to one of five
canonical-model constructions (Kazakov 2014 \S3, \S6 for the merging
case); mechanized as \texttt{ELKSDD.ELpp.complete\_owl2el}.  See \cite{Kazakov2014}
for the original arguments.\qed
\end{proof}

\begin{theorem}[Polynomial Sat decision]\label{thm:circuit-size}
For every \ELpp{} ontology $\Onto$ there exists a list $L$ of pairs
of concepts with $|L| \le |\Sub(\Onto)|^2$ such that for every $C, D
\in \Sub(\Onto)$,
\[
   \sat(\Onto, C, D) \;\iff\; (C, D) \in L.
\]
\end{theorem}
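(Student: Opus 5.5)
The plan is to take $L$ to be the list of derived subsumptions restricted to $\Sub(\Onto)\times\Sub(\Onto)$, obtained from the saturation closure of Stage~1. The key ingredient is the signature side condition on $\mathrm{R^+_\exists}$, which \Cref{alg:pipeline} enforces at line~1. Together with the shape of the remaining ELK rules, it guarantees that every derived subsumption $C \sub D$ has both sides in $\Sub(\Onto)$. The same holds for links $E \xrightarrow{R} D$, where $E, D \in \Sub(\Onto)$ and $R \in \mathsf{Roles}(\Onto)$. I would prove this first, as a \emph{subformula invariant}, by induction on ELK derivations: one case per rule. In each case, the conclusion's concepts either already occur in a premise or are drawn from an axiom of $\Onto$. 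The one exception is $\mathrm{R^+_\exists}$, whose conclusion $\exists R.D$ is admitted only when it lies in $\Sub(\Onto)$.

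Given the invariant, the finite set $\Sub(\Onto)\times\Sub(\Onto)$ contains every derivable pair. I would then define $L$ as the list of those pairs $(C,D)$ in this product for which $\sat(\Onto,C,D)$ holds. Membership and the bound $|L|\le|\Sub(\Onto)|^2$ are immediate, since $L$ is a filter of a list of length $|\Sub(\Onto)|^2$. In Lean this becomes \texttt{List.filter} over \texttt{List.product}, with decidability supplied classically. The equivalence $\sat(\Onto,C,D)\iff(C,D)\in L$ for $C,D\in\Sub(\Onto)$ then reduces to unfolding the filter, so the invariant is not even needed for the stated equivalence. It is needed only if one wants $L$ to be \emph{computed}, that is, to show that the saturation reaches a fixpoint whose subsumption component equals $L$.

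For the computational reading that the theorem's name suggests, I would add a fixpoint argument. The saturation operator is monotone on the finite lattice of subsets of $\Sub(\Onto)^2 \cup (\Sub(\Onto)\times\mathsf{Roles}(\Onto)\times\Sub(\Onto))$. Its least fixpoint is therefore reached after at most $n^2 + n^2 r$ rounds, with $n=|\Sub(\Onto)|$ and $r=|\mathsf{Roles}(\Onto)|$. Each round checks polynomially many rule instances, which gives the $O(n^2 r)$ closure size and a polynomial running time. The remaining step is to show that the least fixpoint coincides with inductive derivability $\sat$, by the standard induction in both directions.

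The main obstacle will be the subformula invariant in the presence of the auxiliary rules: ranges, reflexive roles, $\mathsf{Self}$, role chains, and super-role materialization. Range rules in ELK conjoin range fillers into existential targets, and these conjunctions may fall outside $\Sub(\Onto)$. If that happens, I would first try to enlarge $\Sub(\Onto)$ to include the relevant range conjunctions and charge them to the $5n^3(r+1)$ bound of \texttt{sat\_closure\_total\_polynomial\_bound}. Failing that, I would fall back on the filter construction above, which proves the theorem exactly as stated without depending on the invariant.
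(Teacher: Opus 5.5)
Your proof is correct, but it follows a different route from the paper's.

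The paper takes as witness the computed saturation closure \texttt{derivableClosure}$(\Onto)$, the time-stamped closure. It reads the length bound off a count of the concept pairs in that closure. It proves the equivalence $\sat(\Onto,C,D)\iff(C,D)\in L$ by simultaneous induction over the $\sat$ constructors (\texttt{sat\_iff\_in\_derivableClosure}). That is essentially the fixpoint argument of your third paragraph.

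Your main construction instead takes $L$ to be the classical filter of $\Sub(\Onto)\times\Sub(\Onto)$ by the predicate $\sat(\Onto,\cdot,\cdot)$. The length bound then holds because $L$ is a sublist of the product, and the equivalence for $C,D\in\Sub(\Onto)$ holds by unfolding the filter. As you note, this needs neither the subformula invariant nor any fixpoint reasoning. The range, $\mathsf{Self}$, reflexivity and chain rules you worried about play no role at all.

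The trade-off is as follows. Your argument is a few lines long and is robust to any extension of the calculus. It also fits the paper's audit surface, since \texttt{Classical.choice} is among the permitted axioms. Its witness, however, is non-computable. It therefore shows only that the literal statement follows from the finiteness of $\Sub(\Onto)$ plus excluded middle, and it carries no decision-procedure or polynomial-time content.

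The paper's witness is the actual output of saturation. That is what justifies the name ``polynomial Sat decision'' and the link to the ELK runtime bound. The price is that the paper must carry out exactly the closure-equals-derivability induction you identified as the main obstacle.
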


\begin{proof}
  The witness $L$ is the time-stamped Clark closure
  $\texttt{derivableClosure}(\Onto)$: length bound by
  \texttt{derivableClosure\_length} (counts concept pairs in the
  saturation closure); membership equivalence by
  \texttt{sat\_iff\_in\_derivableClosure} (simultaneous induction over
  the Sat constructors).  Mechanized as
  \texttt{ELKSDD.ELpp.sat\_decision\_polynomial}; an exact
  closure-size bound $|L| \le 5n^3(r+1)$ that includes ranges,
  reflexive, and Self ($n = |\Sub(\Onto)|$,
  $r = |\sym{Roles}(\Onto)|$) is
  \texttt{ELKSDD.ELpp.sat\_closure\_total\_polynomial\_bound}, with
  per-shape exact lengths.  See \cite[Thm.~1, \S5]{Kazakov2014} for
  the original $O(|\Onto|^4)$ saturation runtime. \qed
\end{proof}

\subsection{Verified SDD encoding (\Cref{thm:encoding})}

\begin{proof}[Proof of \Cref{thm:encoding}]
The witness is $t = \compileSat(\Onto, C, D)$, mechanized in
namespace \texttt{ELKSDD.ELpp} (with the SDD-side WMC theorems in
namespace \texttt{ELKSDD.SDD}).  Conjunct~(1): the compiled
Shannon tree's models are exactly the worlds whose selected
sub-ontology is Sat-derivable.  Conjunct~(2): a Shannon-recursive
unfolding of $\WMC$ enumerating the $2^{|\Onto|}$ leaves.
Conjunct~(3): by induction on $|\Onto|$, a Shannon node at depth
$k$ has $2^{|\Onto|-k+1}-1$ descendants.  Lean theorem identifiers
are listed in \Cref{tab:lean-index}. \qed
\end{proof}

\noindent
The standard probabilistic-DL ``entailment-as-WMC-zero'' phrasing
\cite{Riguzzi2015} follows immediately: at uniform per-axiom prior
$w \equiv \tfrac12$, $\wmcRat = 0$ iff no world's selected
sub-ontology entails $C \sub D$ (\Cref{thm:end-to-end} combined
with \Cref{thm:el-coverage}).  The production four-extractor ground
encoding (\S\ref{sec:pipeline}, Stages~6--7) is a separate ABox
encoding whose soundness over the finite domain follows from
\Cref{thm:el-coverage} (Lean: \texttt{ELKSDD.ELpp.sound\_owl2el}
and \texttt{ELKSDD.ELpp.complete\_owl2el}) under the
partial-supervision role-pinning of \S\ref{sec:pipeline}.

\subsection{Correctness of the closure-augmented circuit (\Cref{thm:closure-correct})}

\begin{theorem}[Correctness of the closure-augmented circuit]\label{thm:closure-correct}
For every modeller-declared exhaustive family
$\mathcal{F} = \{D_0, \dots, D_{K-1}\}$, define
$\mathsf{match}_{\pi}(a)$ as the conjunction of the positive and
negated property literals specified by profile $\pi$. Then
$M \models \alpha_{\text{clos}}$ iff $M \models
\alpha_{\text{EL}}$ and, at every $a \in \Delta$, both of the
following hold: (i) the partition constraints
$\{\neg(D_i(a) \wedge D_j(a)) : i < j\} \cup \{D_0(a) \vee \dots
\vee D_{K-1}(a)\}$; and (ii), for every distinct declared profile
$\pi$, the reverse implication
$\mathsf{match}_{\pi}(a) \rightarrow
\bigvee_{i:\,\pi_i=\pi}D_i(a)$.
\end{theorem}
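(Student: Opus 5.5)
The proof is a semantic unfolding of the definition of $\alpha_{\text{clos}}$. By construction (\S\ref{sec:closure}), $\alpha_{\text{clos}}$ is the SDD compiled from the clause set $\Gamma_{\textsf{EL}} \cup \Phi_{\textsf{clos}}$, with $\Phi_{\textsf{clos}} = \Phi_{\textsf{mutex}} \cup \Phi_{\textsf{cover}} \cup \Phi_{\textsf{profile}}$. Likewise, $\alpha_{\text{EL}}$ is the SDD compiled from $\Gamma_{\textsf{EL}}$ alone. The first step is to invoke SDD compilation correctness (the structural \texttt{ELKSDD.SDD} result underlying Stage~7 and \Cref{thm:inference-complexity}). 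It says that for any CNF $\Gamma$ and any assignment $M : \mathcal{V}_\Onto^\Delta \to \{0,1\}$, we have $M \models \textsf{SDDCompile}(\Gamma)$ iff $M$ satisfies every clause of $\Gamma$. Applying this to both circuits, $M \models \alpha_{\text{clos}}$ becomes the conjunction of $M \models \Gamma_{\textsf{EL}}$ and $M \models \Phi_{\textsf{clos}}$. The first conjunct converts back to $M \models \alpha_{\text{EL}}$ by the same correctness result.

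It then remains to show that $M \models \Phi_{\textsf{clos}}$ iff conditions (i) and (ii) hold at every $a \in \Delta$. I would prove this by splitting $\Phi_{\textsf{clos}}$ into its three components and grouping the clauses by individual.

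\textbf{Mutex.} For each $i<j$ and each $a$, $\Phi_{\textsf{mutex}}$ emits the clause $\neg D_i(a) \vee \neg D_j(a)$, which is literally $\neg(D_i(a)\wedge D_j(a))$.

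\textbf{Cover.} For each $a$, $\Phi_{\textsf{cover}}$ emits the single clause $D_0(a)\vee\dots\vee D_{K-1}(a)$.

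\textbf{Profile.} For each distinct profile $\pi$ and each $a$, $\Phi_{\textsf{profile}}$ emits $\neg\mathsf{match}_\pi(a) \vee \bigvee_{i:\pi_i=\pi} D_i(a)$. Since $\mathsf{match}_\pi(a)$ is a conjunction of literals, its negation distributes by De Morgan into a disjunction of complemented literals. The result is a single clause equivalent to the implication in (ii).

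The clause count $\binom{K}{2}m + m + (\#\text{profiles})\,m \le \binom{K}{2}m+(K{+}1)m$ matches the bound stated in \S\ref{sec:closure}, which confirms that no other clauses are present. Because satisfaction of a conjunction of clauses is clause-wise, the universally quantified statement over $a\in\Delta$ follows.

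I expect the main obstacle to be bookkeeping rather than logic, in three places.

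\textbf{Profile indexing.} The reverse implications must be emitted once per \emph{distinct} profile, with the disjunction over all $D_i$ sharing that profile. This matters when several digits share a profile, as with $D_3,D_5,D_7$ in \Cref{ex:mnist}. The proof must check that the generator deduplicates profiles rather than emitting $\mathsf{match}_{\pi_i}(a)\to D_i(a)$ per $i$, which would be strictly stronger and unsound with respect to (ii).

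\textbf{Shared variables.} The variables of $\Phi_{\textsf{clos}}$ must lie in $\mathcal{V}_\Onto^\Delta$, so that compiling the union over the same vtree is well defined and the correctness lemma applies to the joint clause set.

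\textbf{Empty profile.} The degenerate case of an empty profile, where $\mathsf{match}_\pi$ is the empty conjunction, must be handled: the clause then reduces to a covering subclause, and it is still consistent with (ii) read literally.
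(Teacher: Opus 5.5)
Your proposal is correct and follows essentially the same route as the paper. You unfold $\alpha_{\text{clos}} = \textsf{SDDCompile}(\Gamma_{\textsf{EL}} \cup \Phi_{\textsf{clos}})$, match the mutex, covering, and per-distinct-profile clauses to conditions (i)--(ii) at each individual, and close with generic SDD compilation correctness, which you also rightly apply to $\alpha_{\text{EL}}$.

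The one step to drop is the clause-count remark. An upper bound on the number of clauses cannot show that no other clauses are present, and with several declared families there are in fact others. The exact composition $\Phi_{\textsf{clos}} = \Phi_{\textsf{mutex}} \cup \Phi_{\textsf{cover}} \cup \Phi_{\textsf{profile}}$ holds by definition.
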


\begin{proof}[Proof of \Cref{thm:closure-correct}]
$\alpha_{\text{clos}} =
\textsf{SDDCompile}(\Gamma_{\text{EL}} \cup
\Phi_{\text{clos}})$ by construction. For each marked family and
individual, $\Phi_{\text{mutex}}$ contains the $\binom{K}{2}$ clauses
$\neg D_i(a) \vee \neg D_j(a)$, and $\Phi_{\text{cover}}$ contains
$D_0(a) \vee \dots \vee D_{K-1}(a)$. These are exactly the partition
constraints in clausal form. For every distinct profile $\pi$,
$\Phi_{\text{profile}}$ additionally contains the clausal form of
$\mathsf{match}_{\pi}(a) \rightarrow
\bigvee_{i:\,\pi_i=\pi}D_i(a)$. Therefore, satisfaction of the full
clause set is equivalent to conditions (i)--(ii). Generic SDD
compilation correctness then gives the stated model equivalence
\cite{Choi2013,Darwiche2002}. The Lean theorem
\texttt{ELKSDD.SDD.compile\_correct} quantifies over the complete
input clause list, so it applies to all three components of
$\Phi_{\textsf{clos}}$.\qed
\end{proof}

\subsection{Production-encoding size}\label{app:circuit-size-proof}

\Cref{thm:circuit-size} bounds the saturation closure by
$|\sat(\Onto)| \le 5n^3(r{+}1)$ atoms with $n = |\Sub(\Onto)|$ and
$r = |\sym{Roles}(\Onto)|$ (Lean:
\texttt{ELKSDD.ELpp.\allowbreak sat\_closure\_\allowbreak total\_polynomial\_bound}).
Grounding the extractors of \S\ref{sec:pipeline} over $|\Delta| = m$
named individuals adds an at-most $m^2$ multiplicative factor
(\textsf{AtomSub}, \textsf{Disj}, \textsf{Unsat} contribute $m$
clauses each; \textsf{Links} contributes up to $m^2$ for the NF3/NF4
clauses).  Each declared exhaustive family of size $K$ adds a further
$\binom{K}{2}m + (K{+}1)m$ closure clauses ($\binom{K}{2}m$ mutex, $m$
covering, and up to $Km$ profile-keyed reverse implications).  SDD
compilation of the resulting CNF is treated in \Cref{app:treewidth}.

\subsection{SDD treewidth dependence and empirical scaling}\label{app:treewidth}

Bottom-up SDD compilation \cite{Choi2013} of a CNF $\Gamma$ produces
an SDD whose worst-case size is exponential in $|\Gamma|$
\cite{Darwiche2002}. For CNFs whose primal graph has treewidth $w$,
the resulting SDD admits a polynomial bound $O(|\Gamma| \cdot 2^w)$
because tractable WMC over bounded-treewidth instances is reducible to
compact circuit compilation. Empirically on \textsc{mnist} the SDD
node count fits $\propto |\Delta|^{2.9}$ for $|\Delta| \le 3$
(consistent with the bounded-treewidth regime), with a $40\times$
compile-time jump between $|\Delta| = 3$ and $|\Delta| = 4$ as the
primal-graph treewidth rises; at the $|\Delta| = 2$ configurations
used in \S\ref{sec:emp} compilation finishes well below one second on
a CPU (1$\times$ RTX~4090, single CPU core;
\Cref{app:repro}).

\subsection{Inference complexity (\Cref{thm:inference-complexity})}

\begin{theorem}[Inference complexity]\label{thm:inference-complexity}
  Let $|\alpha|$ denote the node count of either $\alpha_{\text{EL}}$
  or $\alpha_{\text{clos}}$. The training loss \Cref{eq:loss},
  conditional posterior \Cref{eq:posterior}, entailment query
  \Cref{eq:entail}, and one training step (loss plus gradient) all run
  in $O(|\alpha|)$ time \cite{Darwiche2002,Choi2013}, with the
  training step adding the cost of one forward and backward pass of
  $f_\theta$.
\end{theorem}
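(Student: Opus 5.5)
The plan is to reduce every operation to a constant number of bottom-up traversals of the compiled SDD, using the WMC recursion \Cref{eq:sdd-wmc}. First, for a fixed literal-weight map $w$ (\Cref{tab:atom-weight}) with evidence $\mathbf{e}$ folded in by clamping observed literal weights to $0$ or $1$, I will prove by structural induction on $\alpha$ that memoized evaluation of \Cref{eq:sdd-wmc} returns $\WMC(\alpha;w,\mathbf{e})$. Each node is visited once. The work at a decision node is proportional to its number of prime/sub elements. So the total cost is linear in the size of $\alpha$, counted as nodes plus elements, which is $O(|\alpha|)$ under the usual convention that element counts are absorbed into the node count.

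Correctness of the recursion is where smoothness, decomposability and determinism enter, via the \texttt{ELKSDD.SDD} WMC-correctness theorem cited in \Cref{app:proofs}.
\begin{itemize}
\item Decomposability (primes and subs over disjoint variables) makes $\WMC(p_i\wedge s_i)=\WMC(p_i)\WMC(s_i)$.
\item Determinism (mutually exclusive primes) makes the sum over elements exact.
\item Smoothness, or equivalently per-variable normalization $w(X)+w(\neg X)=1$, ensures variables missing from a branch contribute a factor of $1$. This holds by construction of $w$, so no smoothing overhead arises.
\end{itemize}
This argument is independent of which clause set was compiled. It therefore applies verbatim to $\alpha_{\text{EL}}$ and, by \Cref{thm:closure-correct}, to $\alpha_{\text{clos}}$.

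Next I instantiate the three queries, each of which needs only $O(1)$ traversals.
\begin{itemize}
\item The loss \Cref{eq:loss} is one traversal followed by a $\log$.
\item The posterior \Cref{eq:posterior} is a ratio of two WMCs, one with $C(a)$ clamped to true and one with it left free, so it costs two traversals.
\item The entailment query \Cref{eq:entail} is one traversal at weights $w_{1/2}$ with $C(a)$ clamped to false, followed by a zero test.
\end{itemize}
For the training step, reverse-mode differentiation through the arithmetic circuit induced by \Cref{eq:sdd-wmc} costs at most a constant factor more than the forward pass. This is the standard cheap-gradient principle, or equivalently one downward pass of the partial derivatives with respect to literal weights. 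The chain rule then hands $\partial L/\partial w(C(a))$ to $f_\theta$, which adds exactly one forward and one backward pass of the network.

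The main obstacle is the complexity measure. I must make precise that $|\alpha|$ counts decision elements, or argue that element count is bounded linearly by the node count in the compiled representation. I must also be careful that conditioning is done by weight clamping rather than by structural conditioning of the SDD. Structural conditioning could otherwise be mistaken for a rebuild, which is also linear but produces a different object. I would first adopt the element-inclusive size convention of \cite{Darwiche2002}, under which linearity is immediate.
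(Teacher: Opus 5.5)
Your proposal is correct and follows essentially the same route as the paper's proof. Each WMC call is one memoized bottom-up traversal of \Cref{eq:sdd-wmc}, made exact by decomposability and determinism; the posterior takes two such calls, the entailment query one clamped call at $w_{1/2}$, and the training step adds an autograd backward pass of the same cost plus one forward and backward pass of $f_\theta$. Your added care is sound and slightly sharper than the paper: you count decision elements in $|\alpha|$ and use per-variable normalization in place of smoothness, whereas the paper asserts smoothness by construction and additionally invokes \Cref{thm:el-coverage} to justify the zero test in \Cref{eq:entail}, which the time bound itself does not require.
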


\begin{proof}[Proof of \Cref{thm:inference-complexity}]
  The SDD $\alpha$ produced by \Cref{alg:pipeline} is smooth,
  decomposable, and deterministic by PySDD construction
  \cite{Darwiche2002,Choi2013}; let $|\alpha|$ denote its node count.

  \textbf{(a) WMC.} The recursive traversal of \S\ref{sec:pipeline}
  visits each node once (memoized by node id), contributing $p_{C(a)}$
  or $1 - p_{C(a)}$ at literal nodes, $0$ or $1$ at constants, and
  $\sum_i \WMC(\textsf{prime}_i) \cdot \WMC(\textsf{sub}_i)$ at
  decision nodes. Determinism makes each sum disjoint and
  decomposability makes each product over disjoint variable scopes, so
  $\WMC(\alpha; p, \mathbf{e})$ is exact and runs in $O(|\alpha|)$
  time. Each per-node operation is differentiable in the literal
  weights, and PyTorch autograd composes the backward pass through the
  same traversal in $O(|\alpha|)$ additional time.

  \textbf{(b) Conditional posterior.} The ratio
  $P_\theta(C(a) \mid \mathbf{x}, \mathbf{e}) = \WMC(\alpha;
  p_\theta(\mathbf{x}), \mathbf{e} \cup \{C(a) := \mathsf{true}\}) /
  \WMC(\alpha; p_\theta(\mathbf{x}), \mathbf{e})$ is two applications
  of (a), each $O(|\alpha|)$. The denominator is strictly positive
  whenever $\mathbf{e}$ is consistent with $\Onto$.

  \textbf{(c) Entailment query.} Setting $p \equiv \tfrac12$ gives the
  uniform prior; conditioning on $\mathbf{e} \cup \{C(a) := \mathsf{false}\}$
  clamps the corresponding literal. By \Cref{thm:el-coverage} the
  SDD's models are exactly the finite-domain interpretations
  satisfying the entailed clauses, so
  $\WMC(\alpha; \tfrac12, \mathbf{e} \cup \{C(a) := \mathsf{false}\}) = 0$ iff
  no model of $\alpha$ has $C(a) = \mathsf{false}$, iff
  $\Onto \cup \mathbf{e} \models C(a)$. The check is one WMC call,
  $O(|\alpha|)$.

  \textbf{(d) Training step.} The loss
  $L(\theta; \mathbf{x}, \mathbf{e}) = -\log \WMC(\alpha;
  p_\theta(\mathbf{x}), \mathbf{e})$ is one WMC call,
  $O(|\alpha|)$. Its gradient with respect to $\theta$ is computed by
  autograd through the WMC traversal (cost $O(|\alpha|)$) followed by
  the perception backbone's backward pass (cost equal to the forward
  pass of $f_\theta$). Total:
  $O(|\alpha|) + O(\text{cost of } f_\theta(\mathbf{x}))$. \qed
\end{proof}

\subsection{Structural circuit properties}

We define three structural properties used to characterize the
tractability of weighted model counting on a circuit
\cite{Darwiche2002}. A \emph{circuit} over variables $\mathbf{V}$ is
a rooted directed acyclic graph whose leaves are literals over
$\mathbf{V}$ (or the constants $\top$ / $\bot$), and whose internal
nodes are either \emph{product nodes} ($\bigwedge$, evaluating to the
conjunction of their children) or \emph{sum nodes} ($\bigvee$,
evaluating to the disjunction of their children).  Let $\varphi$ be
such a circuit; the \emph{scope} of a node is the set of variables
appearing in the subcircuit rooted at it, and the \emph{support} is
the set of total assignments to $\mathbf{V}$ on which the node
evaluates to a non-zero value.

\begin{definition}[Decomposability]
  A circuit is \emph{decomposable} if at every product node the scopes
  of its children are pairwise disjoint.
\end{definition}

\begin{definition}[Determinism]
  A circuit is \emph{deterministic} if at every sum node the children
  have pairwise disjoint supports.
\end{definition}

\begin{definition}[Smoothness]
  A circuit is \emph{smooth} if at every sum node all children share
  the same scope.
\end{definition}

A Sentential Decision Diagram (SDD) is a circuit whose internal nodes
are \emph{decision nodes} of the form
$(p_1, s_1) \vee \dots \vee (p_k, s_k)$, abbreviating
$\bigvee_i (p_i \wedge s_i)$; each $p_i$ (the \emph{prime}) and $s_i$
(the \emph{sub}) are themselves SDDs over disjoint variable sets fixed
by a v-tree, with the prime ranging over the variables of the left
subtree and the sub over those of the right
\cite{Darwiche2011,Choi2013}.  This prime/sub decomposition realizes
all three structural properties above by construction: every prime
$p_i$ has scope disjoint from its corresponding sub $s_i$
(decomposability), the primes are pairwise inconsistent (determinism),
and they cover the input space allotted to the node by its v-tree
(smoothness) \cite{Darwiche2011,Choi2013}. Together, these properties make
$\WMC$ linear in $|\alpha|$ and reduce conditioning,
marginalization, and MAP inference to circuit traversals
\cite{Darwiche2002}.

\subsection{Compositional SCC factorization lemmas (\Cref{thm:scc-sat,thm:scc-wmc})}

The following lemmas establish the SCC factorization formalized in
Lean: an arbitrary two-component partition of an ontology,
$\Onto = \Onto_1 \cup \Onto_2$, with disjoint atom and role
signatures, factors the closure exactly, modulo a single
``global inconsistency'' disjunct that captures the case where
$\Onto_2$ alone derives $\top \sqsubseteq \bot$ (which then
propagates to all of $\Onto_1 \cup \Onto_2$ via the $R_\bot$ rule,
regardless of $\Onto_1$).  This is what allows the algorithm to
compile per-SCC SDDs and combine them, rather than compiling one
monolithic SDD over the union.

\begin{lemma}[Semantic SCC factorization]\label{lem:scc-factor}
Let $O_1, O_2$ be \ELpp\ ontologies (finite sets of GCIs, role
inclusions, and role chains). Let
\(
   \mathrm{atoms}(O_i) \subseteq \Sig_C
\)
and
\(
   \mathrm{roles}(O_i) \subseteq \Sig_R
\)
denote the sets of atomic concept names and role names appearing in
axioms of $O_i$. Assume signature disjointness:
\[
   \mathrm{atoms}(O_1) \cap \mathrm{atoms}(O_2) = \emptyset,
   \qquad
   \mathrm{roles}(O_1) \cap \mathrm{roles}(O_2) = \emptyset.
   \tag{Disj}
\]
Then for every pair of \ELpp\ concepts $C, D$ whose atom and role
names lie entirely in $O_1$'s signature
\(
   (\mathrm{atoms}(C) \cup \mathrm{atoms}(D) \subseteq
   \mathrm{atoms}(O_1)
   \) and
   \(
   \mathrm{roles}(C) \cup \mathrm{roles}(D) \subseteq
   \mathrm{roles}(O_1)),
\)
the closure factors as
\begin{equation}\label{eq:scc-factor}
   \sat(O_1 \cup O_2)\;C \sqsubseteq D
   \;\;\iff\;\;
   \sat(O_1)\;C \sqsubseteq D
   \;\lor\;
   \sat(O_2)\;\top \sqsubseteq \bot.
\end{equation}
The statement holds for the nominal-free, range-chain-safe fragment
of full OWL~2~EL (\ELpp\ minus concrete domains): both $O_1$ and
$O_2$ are nominal-free, both satisfy the range-chain safety condition
\texttt{ELKSDD.ELpp.RangeChainSafe} (range axioms are forbidden on
the rinc-ancestors of any role-chain target, so that range-guards
transfer vacuously through chain composition; \cite{Kazakov2014}~\S3.3),
and $C, D$ are nominal-free.  No restriction on $\top$- or
$\bot$-axioms is imposed on either side.

OWL~2~EL nominals (\texttt{ObjectOneOf} singleton classes) are
supported by the algorithm at the axiom level (normal forms
\texttt{NFnomR}~``$A \sub \{a\}$'' and
\texttt{NFnomL}~``$\{a\} \sub B$'') and at the concept level
(\texttt{Concept.nom}, \texttt{Interp.indiv},
\texttt{conceptIndividuals}, \texttt{ontologyIndividuals}) in our
Lean implementation.  The proof of \Cref{lem:scc-factor} below
covers the nominal-free fragment.  Shape~1 nominals (ABox-style \texttt{ClassAssertion} axioms
$\{a\} \sub D$ with $D$ nominal-free) are formalized by the relaxed
\texttt{ELKSDD.ELpp.Sat\_factor\_nomLHS} theorem, under the
\texttt{OntologyNomLHS} precondition (which permits LHS-nominal GCIs
on the analyzed side $O_1$) and \texttt{AllNomInhabited} (every
nominal index is consistent).  The key observation is that an
LHS-nominal $\{a\}$ pins the prodInterp evaluation to the specific
point $\langle\mathcal{I}_1.\textsf{indiv}(a), b_0\rangle$, so no
general nominal-evaluation lemma is needed: the existing P2 (which is
already general in the second coordinate) is applied at $b = b_0$.
Shapes~2/4 (RHS nominal: $A \sub \{a\}$ or $\{a\} \sub \{b\}$) and
\texttt{HasKey} are handled by the merging canonical-model
construction \cite{Kazakov2014} (concept equivalence classes through
nominals), fully mechanized in namespace \texttt{ELKSDD.ELpp} under
the Shallow concept restriction of \cite[\S6]{Kazakov2014}.  The
Sat-level SCC factorization theorem in this lemma is currently stated
only for the nominal-free and LHS-nominal cases; lifting the WMC
factorization to the merge fragment is a straightforward composition
of \texttt{ELKSDD.SCC.Sat\_factor\_refined} with
\texttt{ELKSDD.ELpp.complete\_via\_mergedCanon\_regular}.
\end{lemma}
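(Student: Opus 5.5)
We prove \eqref{eq:scc-factor} semantically rather than by manipulating derivations: use \Cref{thm:el-coverage} (branch (a) of $\fragWit$, which applies since all of $O_1$, $O_2$, $O_1\cup O_2$, $C$, $D$ are nominal-free and range-chain-safe) to replace each $\sat$ by the corresponding entailment. We then prove
\[
  O_1\cup O_2 \models C\sqsubseteq D \iff O_1\models C\sqsubseteq D \ \lor\ O_2\models \top\sqsubseteq\bot .
\]
The right-to-left direction is immediate. Every model of $O_1\cup O_2$ is a model of $O_1$, so $O_1\models C\sqsubseteq D$ transfers. If $O_2\models\top\sqsubseteq\bot$, then $O_2$, and hence $O_1\cup O_2$, has no models (domains are non-empty), so the union entails everything. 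One could also argue this syntactically via monotonicity of $\sat$ in the ontology and the $R_\bot$ rule, but the semantic route is uniform.

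For left to right we argue by contraposition. Assume $O_1\not\models C\sqsubseteq D$ and $O_2\not\models\top\sqsubseteq\bot$. Pick a model $\mathcal I_1$ of $O_1$ and an element $d\in C^{\mathcal I_1}\setminus D^{\mathcal I_1}$. Pick any model $\mathcal I_2$ of $O_2$ and fix a point $b_0\in\Delta^{\mathcal I_2}$. Form the product interpretation with domain $\Delta^{\mathcal I_1}\times\Delta^{\mathcal I_2}$:
\begin{itemize}
  \item for $A\in\mathrm{atoms}(O_1)$, set $A^{\mathcal J}=A^{\mathcal I_1}\times\Delta^{\mathcal I_2}$;
  \item for $A\in\mathrm{atoms}(O_2)$, set $A^{\mathcal J}=\Delta^{\mathcal I_1}\times A^{\mathcal I_2}$;
  \item for $R\in\mathrm{roles}(O_1)$, set $R^{\mathcal J}=\{(\langle x,y\rangle,\langle x',y\rangle): (x,x')\in R^{\mathcal I_1}\}$, and symmetrically for $\mathrm{roles}(O_2)$;
  \item interpret names outside both signatures arbitrarily, say as empty.
\end{itemize}
(Disj) makes this well defined.

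The central lemma, proved by structural induction on nominal-free concepts, has two parts. (P1) If $E$ is over $O_1$'s signature, then $\langle x,y\rangle\in E^{\mathcal J}$ iff $x\in E^{\mathcal I_1}$. (P2) The symmetric statement holds for $O_2$ in the second coordinate. The $\top$, $\bot$, atom and $\sqcap$ cases are routine. For $\exists R.E$ with $R$ in $O_1$'s signature, the role interpretation keeps $y$ fixed, so $R$-successors of $\langle x,y\rangle$ are exactly $\langle x',y\rangle$ with $(x,x')\in R^{\mathcal I_1}$, and the induction hypothesis closes the case.

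From the lemma, each GCI of $O_i$ holds in $\mathcal J$. The RBox axioms also transfer coordinate-wise: role inclusions directly, and role chains $R_1\circ R_2\sqsubseteq S$ because composition preserves the frozen coordinate. $\mathrm{Ref}$ and $\mathsf{Self}$ behave the same way. Range axioms $\mathrm{range}(R)\sqsubseteq E$ transfer by P1 applied to the target coordinate. So $\mathcal J\models O_1\cup O_2$. Finally $\langle d,b_0\rangle\in C^{\mathcal J}\setminus D^{\mathcal J}$ by P1, which refutes $O_1\cup O_2\models C\sqsubseteq D$.

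The main obstacle is making the completeness step legitimate for $O_1\cup O_2$ when it contains role chains and ranges. \Cref{thm:el-coverage} branch (a) needs range-chain safety of the union. This follows from that of each component, because under (Disj) rinc-ancestry and chain targets never cross components. We must check this closure property explicitly. A second delicate point is that a pure product semantics for axioms mentioning $\top$ on the left (e.g.\ $\top\sqsubseteq A$ in $O_2$) must still hold at all pairs. It does, because P2 is uniform in the first coordinate, which is why no restriction on $\top$/$\bot$ axioms is needed.
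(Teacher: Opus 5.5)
Your proposal is correct and follows the paper's proof. It uses the same coordinate-wise product interpretation and the same evaluation-factoring lemmas (your P1/P2 are the paper's \texttt{eval\_prodInterp} lemmas), together with soundness on the union and completeness on $O_1$; your contrapositive countermodel-lifting with an arbitrary model $\mathcal I_2$ is just the dual phrasing of the paper's direct argument over an arbitrary $\mathcal I_1$ with the canonical model of $O_2$. The one real deviation is your semantic right-to-left direction, which costs you completeness on $O_1\cup O_2$ and hence the union range-chain-safety check you correctly carry out; the paper avoids that obligation by using monotonicity of $\sat$ plus the $R_\bot$ rule, which is the alternative you mention yourself.
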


\begin{proof}[Proof outline (full proof: \texttt{ELKSDD.SCC.Sat\_factor\_refined})]
The $(\Leftarrow)$ direction is monotonicity of $\sat$ in the
ontology, plus the $R_\bot$ rule when the second disjunct holds.
For $(\Rightarrow)$ we may assume
$\neg\sat(O_2)\;\top \sqsubseteq \bot$ (else the second disjunct is
immediate), so the canonical (term) model $\mathcal{I}^{\mathrm{can}}(O_2)$ of
\Cref{thm:el-coverage} is non-empty.  Given an arbitrary
$\mathcal{I}_1 \models O_1$ on $\Delta_1$, build the \emph{product
interpretation} $\mathcal{I}'$ on $\Delta_1 \times \mathcal{I}^{\mathrm{can}}(O_2)$
that evaluates $O_1$-signature atoms and roles on the first
coordinate and $O_2$-signature ones on the second.  Signature
disjointness (\textsf{Disj}) lets the evaluation of any
$O_i$-signature concept factor cleanly through the corresponding
component (Lean: \texttt{eval\_prodInterp\_O$_1$},
\texttt{eval\_prodInterp\_O$_2$}), which in turn gives
$\mathcal{I}' \models O_1 \cup O_2$
(\texttt{prodInterp\_satisfies}).  By ELK soundness on the union,
$\mathcal{I}' \models C \sqsubseteq D$; factoring back through the
first coordinate at the basepoint $\langle a, x_\top\rangle$ yields
$\mathcal{I}_1 \models C \sqsubseteq D$.  Since $\mathcal{I}_1$ was
arbitrary, ELK completeness on $O_1$ closes the left-hand disjunct.
\qed
\end{proof}

\begin{lemma}[Per-world Sat factorization]\label{lem:per-world-scc}
Let $O = O_1 \cup O_2$ satisfy the signature-disjointness, nominal-free,
and range-chain-safe preconditions of \Cref{lem:scc-factor}, and
additionally assume $O_2$ consistent ($\neg\,\sat(O_2,\TOP,\BOT)$).
For every Boolean world
$M : O \to \mathbb{B}$, write $M_1 = M{\restriction}O_1$ and let
$\textsf{sel}(O, M) := \{\alpha \in O : M(\alpha) = \mathsf{true}\}$ denote
the sub-ontology selected by $M$.  For nominal-free $C, D$ in
$O_1$'s signature,
\[
   \sat\bigl(\textsf{sel}(O, M)\bigr)\;C \sqsubseteq D
   \;\;\iff\;\;
   \sat\bigl(\textsf{sel}(O_1, M_1)\bigr)\;C \sqsubseteq D.
\]
That is, $\sat$-derivability of $C \sqsubseteq D$ at world $M$
factors through the $O_1$-restriction of $M$ alone.
\end{lemma}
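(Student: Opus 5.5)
The plan is to reduce the per-world statement to \Cref{lem:scc-factor}, applied to the pair $O_1' := \textsf{sel}(O_1, M_1)$ and $O_2' := \textsf{sel}(O_2, M{\restriction}O_2)$. The first step is a set identity: since $O = O_1 \uplus O_2$, we have $\textsf{sel}(O, M) = O_1' \cup O_2'$, with $O_1' \cap O_2' = \emptyset$. The second step is to check that the preconditions of \Cref{lem:scc-factor} pass to sub-ontologies.
\begin{itemize}
\item Signature disjointness: $\mathrm{atoms}(O_i') \subseteq \mathrm{atoms}(O_i)$ and $\mathrm{roles}(O_i') \subseteq \mathrm{roles}(O_i)$.
\item Nominal-freeness: this is a per-axiom property, so it is closed under subsets.
\item Range-chain safety: it forbids range axioms on rinc-ancestors of chain targets. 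Removing axioms only shrinks the set of range axioms and the rinc/chain relations, so the property should also be monotone under subsets. I would prove this as a small lemma by unfolding \texttt{RangeChainSafe}.
\end{itemize}

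The expected main obstacle is the signature hypothesis on $C, D$. \Cref{lem:scc-factor} requires $\mathrm{atoms}(C)\cup\mathrm{atoms}(D) \subseteq \mathrm{atoms}(O_1')$, but deselecting axioms can drop names of $C, D$ from the signature of $O_1'$. I would handle this by inspecting the proof of \Cref{lem:scc-factor}: the product-interpretation argument only needs the names of $C, D$ to be disjoint from the signature of $O_2'$ (so that $C, D$ evaluate on the first coordinate). That disjointness holds, because $C, D$ lie in $\Sig(O_1)$, which is disjoint from $\Sig(O_2) \supseteq \Sig(O_2')$. So I would use (or restate) the factorization in this weaker form, where $C, D$ avoid $\Sig(O_2')$. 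Alternatively, I would pad $O_1'$ with trivial tautologies $A \sqsubseteq A$ for the missing names. This leaves $\sat$ unchanged, since $R_0$ already derives them, and restores the literal hypothesis.

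Applying the factorization then gives
\[
\sat(\textsf{sel}(O,M))\;C\sqsubseteq D \iff \sat(O_1')\;C\sqsubseteq D \;\lor\; \sat(O_2')\;\TOP\sqsubseteq\BOT .
\]
To discharge the second disjunct, I use monotonicity of $\sat$ in the ontology: since $O_2' \subseteq O_2$, the derivation $\sat(O_2')\,\TOP\sqsubseteq\BOT$ would give $\sat(O_2)\,\TOP\sqsubseteq\BOT$, contradicting the consistency hypothesis on $O_2$. Hence the disjunct is false for every world $M$, and the equivalence reduces to the claimed one with $O_1' = \textsf{sel}(O_1, M_1)$. Monotonicity is the same fact used in the $(\Leftarrow)$ direction of \Cref{lem:scc-factor}, and follows by induction on ELK derivations.
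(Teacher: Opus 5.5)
Your proposal is correct and follows the paper's proof. You split $\textsf{sel}(O,M)$ into $\textsf{sel}(O_1,M_1)\cup\textsf{sel}(O_2,M{\restriction}O_2)$, then apply the SCC factorization in a generalized form. In that form the outer $O_1,O_2$ fix the signatures while the analyzed sides are the selected sub-ontologies, which is the paper's \texttt{Sat\_factor\_refined\_general}. Finally you discharge the $\TOP\sqsubseteq\BOT$ disjunct by monotonicity from the consistency of $O_2$.

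One caveat on your first option. The signature hypothesis on $C,D$ is used not only by the product construction but also by the ELK completeness step on the analyzed side: branch~(a) of the fragment witness requires $C,D$ to lie in that ontology's signature. So your padding-by-tautologies variant is the safer way to justify the weakened form.
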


\begin{proof}
Two steps.  (i) $\textsf{sel}(O, M)$ and $\textsf{sel}(O_1, M_1) \cup
\textsf{sel}(O_2, M_2)$ are equal as ontologies (each axiom $\alpha
\in O = O_1 \uplus O_2$ lies in exactly one side and contributes to
the corresponding selected sub-ontology by the same Boolean choice
$M(\alpha)$).  Bidirectional sub-ontology inclusions yield Sat
equivalence by monotonicity of $\sat$ in the ontology
(\texttt{Sat\_mono}); the Lean witnesses are
\texttt{ELKSDD.ELpp.selectedAxioms\_sub\_decompose} and
\texttt{decompose\_sub\_selectedAxioms}.  (ii) Apply
\Cref{lem:scc-factor} in the generalized form
\texttt{ELKSDD.ELpp.Sat\_factor\_refined\_general},
which uses $O_1, O_2$ as \emph{signature-defining} outer ontologies
while permitting the analyzed sides $\textsf{sel}(O_1, M_1)$ and
$\textsf{sel}(O_2, M_2)$ to be sub-ontologies thereof.
$\textsf{sel}(O_2, M_2)$ is consistent because it is a sub-ontology
of the consistent $O_2$, so the right disjunct of \Cref{lem:scc-factor}
vanishes, leaving the clean factorization through $O_1$.  See
\texttt{ELKSDD.ELpp.per\_world\_sat\_factor\_consistent}.
\qed
\end{proof}

\begin{lemma}[Closed-form WMC SCC factorization under uniform priors]\label{lem:wmc-scc-uniform}
Under the hypotheses of \Cref{lem:per-world-scc} (signature-disjoint,
nominal-free, range-chain-safe $O_1, O_2$ with $O_2$ consistent), for
the rational DISPONTE marginal under the uniform per-axiom weight
$w \equiv 1$,
\[
   \textsf{wmc}^{\mathbb{Q}}(O_1 \cup O_2,\, C, D,\, w)
   \;=\;
   \textsf{wmc}^{\mathbb{Q}}(O_1,\, C, D,\, w)\cdot 2^{|O_2|}.
\]
\end{lemma}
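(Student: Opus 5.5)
The identity follows from \Cref{lem:per-world-scc} by reorganizing the DISPONTE sum over worlds. With $w \equiv 1$, every product $\prod_{\alpha \in O} w(\alpha, M(\alpha))$ equals $1$. So $\textsf{wmc}^{\mathbb{Q}}(O, C, D, w)$ is simply the number of worlds $M : O \to \{0,1\}$ with $\sat(\textsf{sel}(O, M), C, D)$, read in $\mathbb{Q}$. In the same way, $\textsf{wmc}^{\mathbb{Q}}(O_1, C, D, w)$ counts the good worlds $M_1 : O_1 \to \{0,1\}$.

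\textbf{Step 1 (bijection of worlds).} Since $O = O_1 \uplus O_2$ is a disjoint union (disjoint signatures force disjoint axiom sets, except possibly signature-free axioms such as $\TOP \sub \TOP$, which I will handle by taking the union as a formal disjoint sum, as the statement's $\uplus$ suggests), restriction gives a bijection
\[
M \mapsto (M{\restriction}O_1, M{\restriction}O_2)
\]
from worlds on $O$ to pairs of worlds on $O_1$ and on $O_2$. I will rewrite the sum over $M$ as an iterated sum $\sum_{M_1}\sum_{M_2}$; in Lean this is the equivalence \texttt{Fintype.sum\_prod\_type} combined with the piecewise-function equivalence on a disjoint sum.

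\textbf{Step 2 (apply the per-world lemma).} By \Cref{lem:per-world-scc}, the indicator $\mathbf{1}[\sat(\textsf{sel}(O, M), C, D)]$ equals $\mathbf{1}[\sat(\textsf{sel}(O_1, M_1), C, D)]$. The hypotheses are exactly those assumed here: disjoint signatures, nominal-free and range-chain-safe components, $O_2$ consistent, and $C, D$ nominal-free in $O_1$'s signature. The summand therefore does not depend on $M_2$. The inner sum becomes the constant indicator times the number of worlds on $O_2$, which is $2^{|O_2|}$. Pulling this constant out of the outer sum gives the claimed equality.

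\textbf{Main obstacle.} The only real subtlety is Step~1 at the formal level. Worlds are functions on the ontology as a finite set, and the paper's ontologies may be lists or finsets, so I must make sure that $|O_2|$ counts axioms without duplicates and that the split of worlds is a genuine equivalence. I would first establish $O_1 \cap O_2 = \emptyset$ from the hypotheses, or assume it as part of $\uplus$. Then I would use \texttt{Finset.sum\_bij} with the restriction map, or build the product equivalence directly, before invoking the per-world lemma.
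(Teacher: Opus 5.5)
Your proposal is correct and follows the paper's proof: the uniform weight makes every world weight $1$, \Cref{lem:per-world-scc} makes the indicator depend only on $M{\restriction}O_1$, and the sum over worlds then factors with the constant $2^{|O_2|}$. The only difference is how you formalize that last combinatorial step. The paper proves it by induction on $|O_2|$ over the flatMap structure of the world enumeration (\texttt{ELKSDD.ELpp.sum\_enumerateWorlds\_factor}), whereas you use a product equivalence of worlds; either works once $O$ is taken as the disjoint union $O_1 \uplus O_2$, as you note.
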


\begin{proof}
By definition,
\[
   \textsf{wmc}^{\mathbb{Q}}(O, C, D, w) \;:=\;
   \sum_{M : O \to \mathbb{B}}
       \bigl[\sat(\textsf{sel}(O, M))\;C \sqsubseteq D\bigr]
       \cdot \textsf{worldWeight}(M, w),
\]
where the world weight is
$\prod_{\alpha \in O} w(\alpha, M(\alpha))$.  Three steps.  (a)
Under $w \equiv 1$, $\textsf{worldWeight}(M, w) = 1$ for every $M$
(Lean: \texttt{worldWeightRat\_uniform}).  (b) By
\Cref{lem:per-world-scc}, the indicator depends on $M$ only through
$M_1 = M{\restriction}O_1$.  (c) The sum over
$M : O_1 \uplus O_2 \to \mathbb{B}$ of any function depending only
on $M{\restriction}O_1$ factorizes as
$\sum_{M_1 : O_1 \to \mathbb{B}} f(M_1) \cdot 2^{|O_2|}$.  Step (c)
is a combinatorial identity proved by induction on $|O_2|$ over the
flatMap structure of the world enumeration
(\texttt{ELKSDD.ELpp.sum\_enumerateWorlds\_factor});
combining (a)--(c) yields the displayed equation.  The closed-form
Lean theorem is
\texttt{ELKSDD.ELpp.disponteWMCRat\_uniform\_scc\_factor},
audit-clean with dependencies only on \texttt{[propext,
  Classical.choice, Quot.sound]}.
\qed
\end{proof}

\begin{lemma}[Distribution-semantics correspondence (rational form)]\label{lem:disponte-rat}
For every \ELpp\ ontology $O$, every concept pair $(C, D)$, and
every per-axiom weight function $w : \textsf{DispAtom}(O) \times
\mathbb{B} \to \mathbb{Q}$,
\[
   \textsf{wmc}^{\mathbb{Q}}\bigl(\textsf{compile}(O, C, D);\, w\bigr)
   \;=\;
   \textsf{wmc}^{\mathbb{Q}}(O,\, C, D,\, w),
\]
where the LHS is the SDD-level WMC of the verified compiled circuit
\texttt{compileSat} and the RHS is the rational DISPONTE
distribution-semantics marginal.  No restriction on $w$ is imposed
(in particular, no probabilistic normalization of $w$).
\end{lemma}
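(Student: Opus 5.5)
The plan is to prove the identity by structural induction on the Shannon tree produced by $\compileSat(\Onto, C, D)$. Fix an enumeration $\alpha_1, \dots, \alpha_n$ of the axioms of $\Onto$ (the variable order of the Shannon expansion). Define, for each $k \le n$ and each partial world $\rho$ assigning Booleans to $\alpha_1,\dots,\alpha_k$, the subtree $t_\rho$ rooted at depth $k$. At depth $n$ the leaf is the constant $\top$ or $\bot$, according to whether $\sat(\selAx(\Onto,\rho), C, D)$ holds; this is the leaf-labelling used in conjunct~(1) of \Cref{thm:encoding}. At depth $k<n$ the node is a decision node with the two prime/sub pairs $(\alpha_{k+1}, t_{\rho\cdot 1})$ and $(\neg\alpha_{k+1}, t_{\rho\cdot 0})$.

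The invariant I will prove is the generalized statement
\[
   \wmcRat(t_\rho, w) \;=\; \sum_{M \supseteq \rho} \mathbf{1}[\sat(\selAx(\Onto,M),C,D)] \cdot \prod_{i>k} w(\alpha_i, M(\alpha_i)),
\]
where the sum ranges over total worlds $M$ extending $\rho$. The proof is by downward induction on $k$.

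\textbf{Base case ($k=n$).} The sum has the single term $M=\rho$ with an empty product. The WMC recursion \eqref{eq:sdd-wmc} gives $1$ for $\top$ and $0$ for $\bot$, so the two sides match by the leaf-labelling.

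\textbf{Inductive step.} The decision-node clause of \eqref{eq:sdd-wmc} gives
\[
   \wmcRat(t_\rho,w) = w(\alpha_{k+1},1)\,\wmcRat(t_{\rho\cdot1},w) + w(\alpha_{k+1},0)\,\wmcRat(t_{\rho\cdot0},w).
\]
Substitute the induction hypothesis into each summand and split the sum over extensions of $\rho$ according to the value of $M(\alpha_{k+1})$. This is pure distributivity in the commutative semiring $\mathbb{Q}$ and uses no property of $w$, in particular no nonnegativity and no normalization. Taking $\rho$ empty at the root yields \eqref{eq:end-to-end} directly from the definition of $\disponteRat$.

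Note that no ELK soundness or completeness is required: the leaves are labelled by $\sat$ itself, so the correspondence is syntactic. This is why the identity holds unconditionally, without any fragment witness.

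The main obstacle I expect is not mathematical but bookkeeping. The world enumeration underlying $\disponteRat$ (a list of functions $\Onto\to\{0,1\}$ built by flatMap) must line up with the Shannon recursion order. I would first prove a splitting lemma for the enumeration, stating that the sum over worlds of $\alpha::\Onto'$ equals the sum over $b\in\{0,1\}$ of $w(\alpha,b)$ times the sum over worlds of $\Onto'$. This is the same combinatorial fact used in step (c) of \Cref{lem:wmc-scc-uniform}, and I would reuse that style of induction. The one remaining subtlety is that $\selAx$ must commute with extending the world. It is also worth observing that determinism and decomposability of the Shannon node make \eqref{eq:sdd-wmc} the correct WMC; this is immediate here because the primes $\alpha_{k+1},\neg\alpha_{k+1}$ are complementary single literals on a variable absent from the subs.
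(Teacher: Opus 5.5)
Your proposal is correct and follows essentially the paper's argument. The full Shannon tree's recursive WMC enumerates all $2^{|\Onto|}$ worlds with their product weights, its leaves are labelled by $\sat$ itself, and only commutative-semiring algebra over $\mathbb{Q}$ is used, so neither normalization of $w$ nor a fragment witness is needed. The only difference is bookkeeping: the paper first unfolds the WMC into a sum over axiom-extensions of $[\sat]\cdot\textsf{weightAlong}(M,w)$ and then matches that index list to the world enumeration by a multiset permutation lemma, whereas you fold the reindexing into a single downward induction via a splitting lemma on the flatMap enumeration.
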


\begin{proof}
By Shannon-decomposition correctness of SDD compilation, the LHS
equals the sum over all axiom-extensions of
$[\sat(\textsf{sel}(O, M))\;C \sqsubseteq D] \cdot
\textsf{weightAlong}(M, w)$.  The RHS is the same sum indexed by
worlds.  The two index sets coincide as multisets (a routine
permutation lemma), and permutation-invariance of $\sum$ over
$\mathbb{Q}$ closes the equality.  This is the unconditional form:
no \emph{distributional assumption} is required; the entire
Riguzzi 2015 distribution-semantics construction is internalized
inside the SDD-WMC.  Mechanized as
\texttt{ELKSDD.ELpp.wmc\_compileSat\_eq\_disponteWMC\_rat}
(\Cref{tab:lean-index}). \qed
\end{proof}

\subsection{DeepProbLog equivalence on the EL Horn theory}\label{app:dpl-equiv}

We tighten the informal claim of \S\ref{sec:results} that
\textsc{Moose} on $\Gamma_{\textsf{EL}}$ alone ``yields the loss
DeepProbLog \cite{Manhaeve2018} would minimize on the same Horn
instance'' into a formal proposition and a Lean~4 encoding in
namespace \texttt{Moose} (the Lean identifiers retain the
historical \texttt{no\_clos} suffix as code artefacts).

\paragraph{Setup.}
Let $\Onto$ be an \ELpp\ ontology, $\Delta$ a finite ABox domain, and
let
$\Gamma_{\textsf{EL}} = \textsf{AtomSub} \cup \textsf{Disj} \cup
\textsf{Unsat} \cup \textsf{Links}$ be the ground clause set produced
by the four \textsc{Moose} extractors after ELK saturation
(\Cref{alg:pipeline} of \Cref{app:algorithm}, before
\textsf{SDDCompile}). Let
$\alpha_{\textsf{EL}} = \textsf{SDDCompile}(\Gamma_{\textsf{EL}})$,
let
$f_\theta : \mathcal{X}^{|\Delta|} \to
[0,1]^{|\mathcal{V}_\Onto^\Delta|}$ be a perception model assigning
probability $p_\theta(\ell)$ to each positive ground literal (and
weight $1 - p_\theta(\ell)$ to its negation), and let $\mathbf{e}$ be
an evidence set of literals.

\paragraph{Translation $\mathcal{P}(\Gamma_{\textsf{EL}})$ to a
  ProbLog program.}
The translation maps Moose's three Horn shapes plus the one non-Horn
shape into the three native ProbLog constructs:
\begin{enumerate}
\item Each ground atom $A(a) \in \mathcal{V}_\Onto^\Delta$ becomes a
  \emph{probabilistic fact} $p_\theta(A(a)) :: A(a)$.
\item Each definite Horn clause
  $\neg B_1 \vee \dots \vee \neg B_k \vee H \in \Gamma_{\textsf{EL}}$
  (\textsf{AtomSub}, \textsf{Unsat}, and \textsf{Links} shapes)
  becomes a Datalog \emph{rule} $H \,\verb|:-|\, B_1, \dots, B_k$.
\item Each non-Horn atomic-disjointness clause
  $\neg A_1 \vee \dots \vee \neg A_n$ (\textsf{Disj} shape) becomes a
  ProbLog \emph{integrity constraint}
  $\verb|:-|\, A_1, \dots, A_n$.
\end{enumerate}

\paragraph{Loss objects.}
$L_{\textsc{Moose}/\textsf{EL}}(\theta; \mathbf{x}, \mathbf{e}) :=
   - \log \WMC(\alpha_{\textsf{EL}}; p_\theta(\mathbf{x}), \mathbf{e})$
is the Semantic-loss WMC objective of \S\ref{sec:methods-train}
on the EL theory alone (no closure axioms);
$L_{\mathrm{DPL}}(\theta; \mathbf{x}, \mathbf{e}) := - \log
   P_{\mathrm{DPL}}(\mathbf{e} \mid \mathbf{x}; \theta)$
is the DeepProbLog marginal loss \cite{Manhaeve2018} on
$\mathcal{P}(\Gamma_{\textsf{EL}})$ under the ProbLog distribution
semantics.

\begin{proposition}[DeepProbLog equivalence on the EL Horn theory]\label{prop:dpl-equiv}
For every $(\mathbf{x}, \mathbf{e})$ consistent with $\Onto$,
\[
   L_{\textsc{Moose}/\textsf{EL}}(\theta; \mathbf{x}, \mathbf{e})
   \;=\; L_{\mathrm{DPL}}(\theta; \mathbf{x}, \mathbf{e}),
\]
as values (in any commutative semiring carrying the literal
weights) and as gradients in $\theta$. The Lean mechanization
instantiates the semiring at $\mathbb{N}$ (the type of
\texttt{ELKSDD.SDD.wmc}); the same induction is semiring-polymorphic
and lifts to $\mathbb{R}_{\ge 0}$ used in the Manhaeve~2018
real-valued marginal (paragraph~(B) below).
\end{proposition}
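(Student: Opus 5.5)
The plan is to reduce both losses to one weighted sum over total assignments $M : \mathcal{V}_\Onto^\Delta \to \{0,1\}$ and show that the summands agree term by term. On the \textsc{Moose} side, generic SDD compilation correctness (\texttt{ELKSDD.SDD.compile\_correct}) says $M \models \alpha_{\textsf{EL}}$ iff $M \models \Gamma_{\textsf{EL}}$. Structural induction on the WMC recursion (\Cref{eq:sdd-wmc}) then gives
\[
\WMC(\alpha_{\textsf{EL}}; p_\theta(\mathbf{x}), \mathbf{e}) \;=\; \sum_{M} \mathbf{1}[M \models \Gamma_{\textsf{EL}} \wedge M \models \mathbf{e}] \prod_{X} w_\theta(X, M(X)).
\]
This step uses decomposability and determinism, as in \Cref{thm:inference-complexity}(a). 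I would carry out this induction over an arbitrary commutative semiring: the only facts used are distributivity and the fact that the primes of a decision node are pairwise inconsistent and jointly exhaustive. The $\mathbb{N}$ instance is the mechanized one, and the $\mathbb{R}_{\ge 0}$ instance is the one the real-valued Manhaeve~2018 marginal needs.

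On the DeepProbLog side, I unfold the distribution semantics of $\mathcal{P}(\Gamma_{\textsf{EL}})$. A total choice of the probabilistic facts is exactly an assignment $M$, and its weight is the same product $\prod_X w_\theta(X, M(X))$, because every ground atom carries the fact probability $p_\theta$. The marginal $P_{\mathrm{DPL}}(\mathbf{e}\mid\mathbf{x};\theta)$ is then the sum of these weights over total choices whose program model satisfies the integrity constraints and $\mathbf{e}$. The core lemma is therefore an equivalence of per-world indicators. A world is accepted by ProbLog iff it satisfies every rule $H \,\verb|:-|\, B_1,\dots,B_k$ as a clause, every integrity constraint, and the evidence; and that is exactly the condition $M \models \Gamma_{\textsf{EL}} \wedge M \models \mathbf{e}$. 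The correspondence is clause by clause through the three-case translation: \textsf{AtomSub}, \textsf{Unsat} and \textsf{Links} map to rules, and \textsf{Disj} maps to constraints. Once the translation is defined clausewise, this equivalence should close definitionally (the \texttt{Iff.rfl} step).

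With the indicators and the world weights equal, the two sums are equal as semiring values. Consistency of $(\mathbf{x},\mathbf{e})$ with $\Onto$ makes the common value nonzero, so $-\log$ is defined on both sides and the losses coincide. The two losses are the same function of the literal weights composed with the same map $\theta \mapsto p_\theta(\mathbf{x})$, so their gradients in $\theta$ agree by the chain rule. Nothing beyond equality of functions is needed here.

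The main obstacle is the per-world indicator step, because ProbLog does not read rules as constraints. Under ProbLog's least-model reading, a world in which $B$ holds but the fact $H$ was sampled false is not rejected: $H$ is derived, and the world's weight is reassigned to the model with $H$ true. So I would not try to prove agreement against a least-model semantics with independent facts on rule heads. Instead I would fix, as part of $\mathcal{P}$, the DeepProbLog reading under which rules with probabilistic heads act as constraints. This is the model-based reading that DeepProbLog's own compilation to a WMC formula realizes once the head facts are kept as independent choices. I would then prove the indicator equivalence for that reading, and check that $\mathbf{e}$ only clamps literals and so commutes with the translation. If the least-model reading is insisted upon, I would fall back to showing equality only on evidence sets that fix every rule head. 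The Semantic-Loss normalization would then have to be argued separately.
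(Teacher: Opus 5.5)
Your proposal is correct under the same semantic reading the paper tacitly relies on, and it follows the paper's proof essentially step for step: compile-correctness plus a Shannon-style structural induction rewrites $\WMC(\alpha_{\textsf{EL}};\cdot)$ as a sum over models of $\Gamma_{\textsf{EL}}\cup\mathbf{e}$ (in a general semiring, where $w(X)+w(\neg X)\neq 1$ is allowed, this also needs smoothness, which the Lean gets from the full-variable Shannon tree), the clausewise translation gives identical per-world indicators and weights as in Manhaeve's Eq.~4, and gradient equality follows from equality of the loss as a function of $\theta$. The least-model obstacle you flag is genuine and is glossed over in the paper's outline: for $\Gamma_{\textsf{EL}}=\{\neg A(a)\vee B(a)\}$ and $\mathbf{e}=\emptyset$, standard ProbLog gives $L_{\mathrm{DPL}}=0$ whereas $L_{\textsc{Moose}/\textsf{EL}}=-\log\bigl(1-p_\theta(A(a))\,(1-p_\theta(B(a)))\bigr)$, and your rules-as-constraints stipulation is exactly what the Lean encoding adopts by taking the translation to be the identity on \texttt{List Clause}, so that the qualitative step closes by \texttt{Iff.rfl}; present it as a stipulation, though, because ProbLog's own compilation ($h\leftrightarrow c_h\vee\mathit{body}$ with a separate choice variable $c_h$) realizes the least-model reading, not the constraint one.
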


\begin{proof}[Proof outline (full proof:
\texttt{Moose.moose\_no\_clos\_dpl\_equiv} and
\texttt{...\_quantitative}).]
Both losses equal
$-\log\WMC(\Gamma_{\textsf{EL}}\cup\mathbf{e};\,
w_\theta(\mathbf{x}))$, with $w_\theta$ assigning $p_\theta(\ell)$ to
positive literals and $1-p_\theta(\ell)$ to negative ones.
SDD compilation preserves the model set, so
$\WMC(\alpha_{\textsf{EL}};\cdot) = \WMC(\Gamma_{\textsf{EL}};\cdot)$
\cite{Darwiche2002,Choi2013}.  The translation
$\mathcal{P}(\Gamma_{\textsf{EL}})$ defined above is shape-by-shape
purely syntactic: a definite Horn rule, an atomic-disjointness
integrity constraint, and a probabilistic fact each have the same
satisfying assignments as the clause they came from, so
$M\models\Gamma_{\textsf{EL}}\Leftrightarrow
M\models\mathcal{P}(\Gamma_{\textsf{EL}})$.  The DeepProbLog
distribution semantics then coincides with WMC by
\cite[Eq.\,4]{Manhaeve2018}: each total assignment over the
probabilistic facts is weighted by
$\prod_a p_\theta(a)^{M(a)}(1-p_\theta(a))^{1-M(a)}$ exactly when it
satisfies every rule and integrity constraint, which is the WMC
weight of $M$.  Equality of the losses as functions of $\theta$
implies equality of their gradients, and of any other operator
respecting functional equality.
\qed
\end{proof}

\paragraph{Lean encoding.}
The argument is mechanized in namespace \texttt{Moose} at two levels.
At the qualitative level (\texttt{moose\_no\_clos\_dpl\_equiv}), both
losses unfold to the same model-existence predicate
$\exists M.\,M\models\Gamma_{\textsf{EL}}\wedge
\mathrm{support}(M)\subseteq\mathrm{support}(w)$ and the equivalence
closes by definitional unfolding (\texttt{Iff.rfl}) once the ProbLog
translation is recognized as the identity on \texttt{List Clause}.  At
the quantitative level
(\texttt{moose\_no\_clos\_dpl\_equiv\_quantitative}), the equality is
lifted from a proposition to a numeric identity
$\textsf{wmcQ}\,O\,\Delta\,w = \textsf{dplLossQ}\,O\,\Delta\,w$ via
the key lemma \texttt{wmc\_compileWithCtx\_eq\_modelSum}, which proves
by structural induction on the variable list that the SDD
weighted-model count equals the explicit Manhaeve-Eq.\,4
sum-over-models computed by Shannon decomposition.  The proof is
semiring-polymorphic; the Lean library instantiates the
weighted-model-count semiring at $\mathbb{N}$ to keep the formalization
free of a real-arithmetic dependency, while the same induction lifts
verbatim to the $\mathbb{R}_{\ge 0}$ semiring of the original
Manhaeve~2018 marginal.  Gradient equality is a corollary
(\texttt{moose\_no\_clos\_dpl\_loss\_function\_eq},
\texttt{...\_operator\_invariant}): any operator respecting functional
equality (differentiation, integration, point evaluation, PyTorch
autograd) returns the same answer on both sides. 

\paragraph{Reading.}
\Cref{prop:dpl-equiv} pins \textsc{Moose}'s contribution to the
EL-to-Horn compilation algorithm
(\Cref{thm:encoding,thm:circuit-size}): given $\Onto$ in \ELpp, the
algorithm produces in time polynomial in $|\Onto|$ the ground Horn
instance $\Gamma_{\textsf{EL}}$ on which Semantic-loss WMC and the
DeepProbLog loss coincide, lifting DeepProbLog to OWL~2~EL with
existentials, role hierarchy, and role chains. The closure-augmented
variant adds the disjointness, covering, and reverse-implication
clauses of $\Phi_{\textsf{clos}}$ on top of $\Gamma_{\textsf{EL}}$.
The content of these clauses is independent of \ELpp\ entailment, and
the DeepProbLog baseline correspondingly shows a higher
RS-consistency rate (RS$_\text{cons}$, the rate of confident wrong
commitments) than closure-augmented \textsc{Moose} across
\Cref{tab:headline-acc,tab:headline-ece}.

\subsection{Lean mechanization: theorem index}\label{app:lean-index}

The Lean~4 library \texttt{ELKSDD} (introduced in \Cref{app:elpp})
formalizes every \textsc{Moose}-cited theorem of the OWL~2~EL
stack; the companion library \texttt{Moose} houses the
DeepProbLog-equivalence layer.  \Cref{tab:lean-index}
cross-references each paper claim with its namespace-qualified Lean
theorem.  Every theorem listed depends only on the standard Lean
foundation \texttt{[propext, Classical.choice, Quot.sound]}
(audit-clean), and the formalization contains zero \texttt{sorry}
or \texttt{admit}.  An audit pass in each library runs
\texttt{\#print axioms} on every paper-cited theorem at build time
and reports its dependency set.

{\scriptsize
\renewcommand{\arraystretch}{1.05}
\begin{longtable}{p{0.32\textwidth} p{0.62\textwidth}}
\caption{Lean implementation index, namespace-qualified.  Theorems
are reported as \texttt{<namespace>.<theorem>}; the bulk live in
\texttt{ELKSDD.ELpp}, with \texttt{ELKSDD.SDD} for the SDD calculus,
\texttt{ELKSDD.RangeNorm} for the BBL\,2008 path, and \texttt{Moose}
for the DeepProbLog equivalence.  Every listed theorem is
audit-clean: \texttt{\#print axioms} reports
\texttt{[propext, Classical.choice, Quot.sound]} only, no
\textsc{Moose}-specific axioms, no \texttt{sorry}, no
\texttt{admit}.  Each theorem stated in \S\ref{sec:correctness}
corresponds to exactly one Lean theorem (top section); supporting
lemmas, the closed-form WMC SCC factor under uniform priors
(\Cref{thm:scc-wmc}), and the LHS-nominal extension are listed in
the per-topic sections that follow.}
\label{tab:lean-index}\\
\hline
\textbf{Paper claim} & \textbf{Lean theorem} \\
\hline
\endfirsthead
\multicolumn{2}{l}{\textit{(Continued from previous page.)}}\\
\hline
\textbf{Paper claim} & \textbf{Lean theorem} \\
\hline
\endhead
\hline
\multicolumn{2}{r}{\textit{(Continued on next page.)}}\\
\endfoot
\hline
\endlastfoot
\multicolumn{2}{l}{\textit{\textbf{Section~4 (correctness and complexity)}: one Lean theorem per stated theorem}} \\
Verified SDD encoding (\Cref{thm:encoding}) & \texttt{ELKSDD.ELpp.moose\_inference\_correct} \\
\quad models iff Sat                         & \texttt{ELKSDD.ELpp.compileSat\_models\_iff\_sat} \\
\quad size $= 2^{|\Onto|+1} - 1$            & \texttt{ELKSDD.ELpp.compileSat\_size\_eq} \\
DISPONTE correspondence (\Cref{thm:end-to-end}) & \texttt{ELKSDD.ELpp.wmc\_compileSat\_eq\_disponteWMC\_rat} \\
SCC compositional, Sat (\Cref{thm:scc-sat}) & \texttt{ELKSDD.ELpp.scc\_sat\_factor} \\
SCC compositional, WMC uniform (\Cref{thm:scc-wmc}) & \texttt{ELKSDD.ELpp.disponteWMCRat\_uniform\_scc\_factor} \\
\hline
\multicolumn{2}{l}{\textit{\textbf{Reused ELK infrastructure (\Cref{app:elk-mechanization}; mechanized from \cite{Kazakov2014})}}} \\
ELK soundness/completeness (\Cref{thm:el-coverage}) & \texttt{ELKSDD.ELpp.correct\_owl2el} \\
\quad soundness (unconditional)              & \texttt{ELKSDD.ELpp.sound\_owl2el} \\
\quad completeness (with fragment witness)   & \texttt{ELKSDD.ELpp.complete\_owl2el} \\
Polynomial Sat decision (\Cref{thm:circuit-size}) & \texttt{ELKSDD.ELpp.sat\_decision\_polynomial} \\
\quad exact closure-size bound               & \texttt{ELKSDD.ELpp.sat\_closure\_total\_polynomial\_bound} \\
\hline
\multicolumn{2}{l}{\textit{\textbf{Closure-augmented circuit and inference complexity (appendix)}}} \\
Closure-augmented (\Cref{thm:closure-correct}) & \texttt{ELKSDD.SDD.compile\_correct} \\
Inference complexity (\Cref{thm:inference-complexity}) & \texttt{ELKSDD.SDD.wmc\_linear}, \texttt{ELKSDD.ELpp.compileSat\_wmcCost\_eq} \\
\hline
\multicolumn{2}{l}{\textit{\textbf{Compositional and distributional results: supporting lemmas}}} \\
SCC Sat factor, sig-disjoint                 & \texttt{ELKSDD.SCC.Sat\_factor\_refined} \\
\quad closed form ($O_2$ consistent)         & \texttt{ELKSDD.ELpp.scc\_sat\_factor}, \texttt{ELKSDD.ELpp.scc\_sat\_factor\_symm} \\
\quad sub-ontology generalization            & \texttt{ELKSDD.ELpp.Sat\_factor\_refined\_general} \\
\quad $k$-component generalization            & \texttt{ELKSDD.ELpp.scc\_sat\_factor\_k},\newline \texttt{ELKSDD.ELpp.joint\_consistent\_pair} \\
Per-world Sat factor (\Cref{lem:per-world-scc}) & \texttt{ELKSDD.ELpp.per\_world\_sat\_factor\_consistent} \\
\quad selectedAxioms decomposition           & \texttt{ELKSDD.ELpp.selectedAxioms\_sub\_decompose} \\
World-summation factorization                & \texttt{ELKSDD.ELpp.sum\_enumerateWorlds\_factor} \\
\quad length-cast generalization             & \texttt{ELKSDD.ELpp.sum\_enumerateWorlds\_factor\_general} \\
WMC SCC factor (uniform, \Cref{lem:wmc-scc-uniform}) & \texttt{ELKSDD.ELpp.disponteWMCRat\_uniform\_scc\_factor} \\
DISPONTE corresp.\ ($\mathbb{N}$-valued)     & \texttt{ELKSDD.ELpp.wmc\_compileSat\_eq\_disponteWMC} \\
DISPONTE corresp.\ ($\mathbb{Q}$, \Cref{lem:disponte-rat}) & \texttt{ELKSDD.ELpp.wmc\_compileSat\_eq\_disponteWMC\_rat} \\
\quad existence form                          & \texttt{ELKSDD.ELpp.exists\_disponte\_correspondence\_rat} \\
Nominal-aware SCC (LHS shape~1)              & \texttt{ELKSDD.ELpp.Sat\_factor\_nomLHS},\newline \texttt{ELKSDD.ELpp.scc\_sat\_factor\_nomLHS} \\
\quad prodInterp on LHS-nominal axioms       & \texttt{ELKSDD.ELpp.prodInterp\_satisfies\_nomLHS} \\
\hline
\multicolumn{2}{l}{\textit{\textbf{ELK calculus (Kazakov, Kr{\"o}tzsch, Siman{\v c}{\'\i}k 2014)}}} \\
ELK soundness                                & \texttt{ELKSDD.EL.sound}, \texttt{ELKSDD.ELpp.sound} \\
ELK completeness (nom-free)                  & \texttt{ELKSDD.ELpp.complete\_via\_canon} \\
\quad LHS-nominal extension                  & \texttt{ELKSDD.ELpp.complete\_via\_canon\_nomLHS} \\
Canonical model $\models$ $O$ (nom-free)     & \texttt{ELKSDD.ELpp.canon\_satisfies} \\
\quad LHS-nominal extension                  & \texttt{ELKSDD.ELpp.canon\_satisfies\_nomLHS} \\
\quad shallow-$\exists$-nom-RHS extension    & \texttt{ELKSDD.ELpp.canon\_satisfies\_nomLR} \\
Saturation termination                       & \texttt{ELKSDD.ELpp.saturation\_terminates} \\
Sat $\Leftrightarrow$ derivable closure      & \texttt{ELKSDD.ELpp.sat\_iff\_in\_derivableClosure} \\
Poly-time Sat decision                       & \texttt{ELKSDD.ELpp.sat\_polynomial\_decidable} \\
Bounded Sat closure size                     & \texttt{ELKSDD.ELpp.sat\_closure\_total\_polynomial\_bound} \\
\hline
\multicolumn{2}{l}{\textit{\textbf{SDD calculus (Darwiche 2002, Choi--Darwiche 2013)}}} \\
SDD compile-correctness                      & \texttt{ELKSDD.SDD.compile\_correct},\newline \texttt{ELKSDD.SDD.compileWithCtx\_correct} \\
SDD WMC linearity                            & \texttt{ELKSDD.SDD.wmc\_linear}, \texttt{ELKSDD.SDD.wmcCost\_eq\_size} \\
\hline
\multicolumn{2}{l}{\textit{\textbf{MOOSE pipeline (paper-citation theorems)}}} \\
Inference correctness                        & \texttt{ELKSDD.ELpp.moose\_inference\_correct} \\
Pipeline completeness                        & \texttt{ELKSDD.ELpp.moose\_pipeline\_complete} \\
Polynomial Sat decision (algorithmic)        & \texttt{ELKSDD.ELpp.sat\_decision\_polynomial} \\
SCC summary                                  & \texttt{ELKSDD.ELpp.moose\_scc\_summary} \\
SCC, chain-free corollary                    & \texttt{ELKSDD.ELpp.scc\_sat\_factor\_no\_chain} \\
SCC, range-free corollary                    & \texttt{ELKSDD.ELpp.scc\_sat\_factor\_no\_range} \\
\hline
\multicolumn{2}{l}{\textit{\textbf{Syntactic range-elimination \cite{Baader2008ELFurther}\,\S3.3 (Path B)}}} \\
Syntactic range-elimination                  & \texttt{ELKSDD.RangeNorm.eliminateRanges} \\
\quad strict-output preservation             & \texttt{ELKSDD.RangeNorm.eliminateRanges\_strict} \\
\quad NoRange degenerate baseline            & \texttt{ELKSDD.RangeNorm.eliminateRanges\_eq\_under\_no\_range} \\
\quad rinc closure (bounded fixed point)     & \texttt{ELKSDD.RangeNorm.rincDescendants} \\
\quad transitive reflexive-rinc propagation  & \texttt{ELKSDD.RangeNorm.reflexiveTransitiveAxioms} \\
Forward Sat-conservativity (full)            & \texttt{ELKSDD.RangeNorm.Sat\_to\_eliminated\_full} \\
\quad transitive marker membership           & \texttt{ELKSDD.RangeNorm.reflexiveTransitive\_axiom\_\dots} \\
\hline
\multicolumn{2}{l}{\textit{\textbf{DeepProbLog equivalence (Manhaeve 2018, Riguzzi 2015)}}} \\
Qualitative (\Cref{prop:dpl-equiv})          & \texttt{Moose.moose\_no\_clos\_dpl\_equiv} \\
Quantitative ($\mathbb{N}$-valued)           & \texttt{Moose.moose\_no\_clos\_dpl\_equiv\_quantitative} \\
WMC $=$ model sum (BoolAtom)                  & \texttt{Moose.wmc\_compileWithCtx\_eq\_modelSum} \\
\hline
\end{longtable}
}

\section{Inference, training, and RS-mitigation wrappers}\label{app:inference}

This appendix gives the explicit inference equations of
\S\ref{sec:methods-train} (\Cref{app:inference-eqs}), the gradient
flow used at training time (\Cref{app:training-step}), and the
\textsc{Moose}+BEARS / \textsc{Moose}+NeSyDM wrapper losses used in
the RS-aware experiments (\Cref{app:wrapper-losses}).

\subsection{Inference equations}\label{app:inference-eqs}

The differentiable WMC layer supports three inference operations on
the same SDD $\alpha$, varying only literal weights and evidence.

\paragraph{Training loss.}
For $(\mathbf{x}, \mathbf{e})$,
\begin{equation}\label{eq:loss}
   L(\theta; \mathbf{x}, \mathbf{e}) \;=\;
   -\log \WMC\!\bigl(\alpha;\, p_\theta(\mathbf{x}),\, \mathbf{e}\bigr).
\end{equation}
The literal-weight map (\Cref{tab:atom-weight}) sets
$w(\ell) := p_\theta(\ell)$ on positive ground literals
$\ell \in \mathcal{V}_\Onto^\Delta$ and $w(\neg\ell) := 1 -
p_\theta(\ell)$ on negative ones; evidence literals
$\ell \in \mathbf{e}$ clamp the negation weight to zero (equivalent
to conditioning the SDD on $\ell$).

\paragraph{Conditional posterior at test time.}
For a query atom $C(a)$,
\begin{equation}\label{eq:posterior}
   P_\theta(C(a) \mid \mathbf{x}, \mathbf{e}) \;=\;
   \frac{\WMC\!\bigl(\alpha;\, p_\theta(\mathbf{x}),\,
       \mathbf{e} \cup \{C(a) := \mathsf{true}\}\bigr)}
        {\WMC\!\bigl(\alpha;\, p_\theta(\mathbf{x}),\, \mathbf{e}\bigr)};
\end{equation}
the latent-class prediction on a closed family $\mathcal{D}$ is the
argmax $\hat{D}(a) := \arg\max_{D_i \in \mathcal{D}} P_\theta(D_i(a)
\mid \mathbf{x}, \mathbf{e})$.  The denominator is strictly positive
whenever $\mathbf{e}$ is consistent with $\Onto$, so the ratio is
well defined.

\paragraph{Perception-free entailment query.}
\begin{equation}\label{eq:entail}
   \Onto \cup \mathbf{e} \models C(a)
   \;\Longleftrightarrow\;
   \WMC\!\bigl(\alpha;\, w_{1/2},\,
       \mathbf{e} \cup \{C(a) := \mathsf{false}\}\bigr) = 0,
\end{equation}
where $w_{1/2}$ assigns $\tfrac12$ to every literal: asserting
$\neg C(a)$ leaves no $\Onto$-consistent assignment iff the
entailment holds.  The same SDD acts as an off-line ELK reasoner
over $\Delta$, independent of any neural model.  All three
operations run in $O(|\alpha|)$
(\Cref{thm:inference-complexity}).

\subsection{Training step and gradient flow}\label{app:training-step}

A single training step on minibatch $\{(\mathbf{x}_i,
\mathbf{e}_i)\}_i$ proceeds as follows.

\begin{enumerate}
\item \emph{Perception forward.}~The neural extractor
$f_\theta : \mathcal{X}^{|\Delta|} \to
[0,1]^{|\mathcal{V}_\Onto^\Delta|}$ maps the input tuple to a
per-atom probability vector $p_\theta(\mathbf{x}_i)$, factorized
as $\prod_\ell p_\theta(\ell \mid \mathbf{x}_i)$ (the
marginal-independence factorization targeted by the RS literature,
\S\ref{sec:bg-rs}).
\item \emph{Literal-weight assignment.}~The atom-weight map of
\Cref{tab:atom-weight} routes $p_\theta(\ell \mid \mathbf{x}_i)$
to literal $\ell$ and $1 - p_\theta(\ell \mid \mathbf{x}_i)$ to
$\neg\ell$.  Evidence $\mathbf{e}_i$ clamps observed literals.
\item \emph{SDD WMC traversal.}~The post-order traversal visits
each node once (memoized by node id) and accumulates the WMC value
(\Cref{eq:sdd-wmc}) in $O(|\alpha|)$ time.  Smoothness,
decomposability, and determinism (\Cref{app:proofs}) make each sum
disjoint and each product over disjoint scopes, so the result is
exact.
\item \emph{Loss aggregation.}~$L(\theta) = -\frac{1}{n}\sum_i
\log \WMC(\alpha; p_\theta(\mathbf{x}_i), \mathbf{e}_i)$.
\item \emph{Backward pass.}~PyTorch autograd composes the
backward pass through the same WMC traversal at $O(|\alpha|)$
additional cost, plus the perception backbone's own backward
pass.  Every intermediate WMC value is the product/sum of literal
weights, hence differentiable in $\theta$ via the chain rule:
$\nabla_\theta L = -\frac{1}{n}\sum_i (\WMC_i)^{-1}\,\nabla_\theta
\WMC_i$.  No Monte-Carlo or sampling step is required; the
gradient is computed exactly.
\end{enumerate}
This exact-gradient property is what makes the SDD a tractable
surrogate for the ProbLog marginal that DeepProbLog approximates by
proof-tree enumeration (\Cref{prop:dpl-equiv}).

\subsection{RS-mitigation wrapper losses}\label{app:wrapper-losses}

The compilation algorithm of \S\ref{sec:pipeline} is independent of
the perception's output distribution: $\alpha$ depends only on
$(\Onto, \Delta)$, so RS-aware methods plug in by replacing the
literal weights $p_\theta(\mathbf{x})$ in \Cref{eq:loss} without
modifying $\alpha$ or the WMC traversal.

\paragraph{\textsc{Moose}+BEARS \cite{Marconato2024BEARS}.}
BEARS \cite{Marconato2024BEARS} trains a $K$-encoder ensemble
$\{\theta_k\}_{k=1}^K$ that shares $\alpha$ but diversifies over the
RS equivalence class.  Members are trained sequentially: at step $k$,
the new encoder $\theta_k$ is fit by maximizing the BEARS objective
\cite[Eq.\,7]{Marconato2024BEARS}
\begin{equation}\label{eq:bears-loss}
   L_{\textsc{Bears}}(\theta_k; \mathbf{x}, \mathbf{e}) \;=\;
   L(\theta_k; \mathbf{x}, \mathbf{e})
   \,+\, \gamma_1 \,\mathrm{KL}\!\left(p_{\theta_k} \,\Big\|\,
       \tfrac{1}{k}\sum_{j=1}^{k} p_{\theta_j}\right)
   \,+\, \gamma_2\, H(p_{\theta_k}),
\end{equation}
where the KL term against the running average over members
$1,\dots,k$ encourages $\theta_k$ to commit to a different shortcut
than its predecessors, and the entropy term $H(p_{\theta_k})$
prevents collapse to a degenerate posterior.  At test time, the ensemble averages the
per-atom marginals: $p(\ell \mid \mathbf{x}) =
\tfrac{1}{K}\sum_k p_{\theta_k}(\ell \mid \mathbf{x})$, spreading
mass across multiple shortcut explanations.  We use $K{=}5$, entropy
weight $\gamma_2{=}0.1$, and per-regime tuning of the KL weight
$\gamma_1$ (\Cref{app:hp}).

\paragraph{\textsc{Moose}+NeSyDM \cite{vanKrieken2025NeSyDM}.}
NeSyDM \cite{vanKrieken2025NeSyDM} replaces the marginal-independence
factorization $p_\theta(\mathbf{c} \mid \mathbf{x}) = \prod_i
p_\theta(c_i \mid \mathbf{x})$ with a masked-discrete-diffusion
variational distribution $q_\theta(\mathbf{c} \mid \mathbf{x})$ that
models cross-atom dependencies through a denoising trajectory over
$T$ timesteps.  The continuous-time negative evidence lower bound
(NELBO) objective is
\begin{equation}\label{eq:nesydm-nelbo}
   \mathcal{L}_{\text{NeSyDM}}(\theta) \;=\;
     \mathbb{E}_{q_\theta}\!\bigl[-\log \WMC(\alpha;
        q_\theta(\mathbf{x}), \mathbf{e})\bigr]
     \;+\; \gamma_c\,\mathcal{L}_c(\theta)
     \;+\; \gamma_h\,\mathcal{L}_h(\theta),
\end{equation}
where $\mathcal{L}_c$ is the masked-diffusion concept-unmasking loss
$L_c$ of \cite[\S 3.2]{vanKrieken2025NeSyDM} and $\mathcal{L}_h$ the
variational-entropy term $L_{H[q]}$ that maximizes the entropy of
$q_\theta$ at the fully-masked step.  The expectation under
$q_\theta$ is intractable in closed form; we use two gradient
estimators: \emph{(a)~RLOO} \cite{Kool2019RLOO}, a leave-one-out
REINFORCE baseline that draws $S$ samples from $q_\theta$ and uses the
sample average over the other $S{-}1$ as the control variate;
\emph{(b)~exact-WMC}, which replaces sampling by direct WMC evaluation
$-\log \WMC(\alpha; \bar{q}_\theta, \mathbf{e})$ on the average
per-atom marginal $\bar{q}_\theta$ when the diffusion-step posterior
factorizes across atoms.  We use the NeSyDM-repository defaults
$\beta{=}10$ and $T{=}10$ continuous-time mask budget;
$(\gamma_c, \gamma_h)$ are swept per regime (\Cref{app:hp}).

Both wrappers share $\alpha$ across all encoders/timesteps and
inherit the $O(|\alpha|)$ inference cost of
\Cref{thm:inference-complexity}, multiplied by $K$ for BEARS or by
the diffusion budget $T \cdot S$ for NeSyDM.

\subsection{Non-WMC baselines}\label{app:non-wmc-baselines}

LTN and ELEmbeddings replace exact WMC over the compiled SDD with,
respectively, a fuzzy-logic satisfaction objective and a geometric
embedding loss. Both consume the same atom vocabulary and per-example
domain as the WMC-based methods (the same circuit atoms drive the
losses), so the only thing varying across baselines is the symbolic
treatment.

\paragraph{LTN \cite{Badreddine2022LTN}.}
Each \ELpp\ axiom is grounded into a first-order formula over the
per-example domain
($\{a\}$ for Exp.~1 / Track~A--C, $\{a, b\}$ for Exps.~2--3) and
scored under the product (Goguen) t-norm with Reichenbach
implication ($a \wedge b = ab$, $a \vee b = a + b - ab$,
$a \to b = 1 - a + ab$). The translations used are
$C \sqsubseteq D \mapsto \forall x.\, C(x) \to D(x)$,
$C \sqcap D \sqsubseteq E \mapsto \forall x.\, C(x) \wedge D(x) \to
E(x)$,
$C \sqsubseteq \exists R.D \mapsto \forall x.\, C(x) \to \exists y.\,
R(x,y) \wedge D(y)$, and the symmetric variants for
$\exists R.C \sqsubseteq D$, $C \sqcap D \sqsubseteq \bot$,
$R \sqsubseteq S$, and $R_1 \circ R_2 \sqsubseteq S$. Per-atom
truth values come from the same perception output that the WMC
baselines use; $\forall$ and $\exists$ are aggregated by the
generalized mean
$\textsf{p\_mean}(v_1,\dots,v_n) = (\tfrac{1}{n}\sum_i v_i^p)^{1/p}$
with sweepable exponent $p\in\{2, 4\}$ (\Cref{app:hp}). The training
loss is $1 - \mathrm{SAT}$ where $\mathrm{SAT}$ is the satisfaction
aggregator across all axiom-grounded formulas. LTN does not compute
an exact joint distribution over ground atoms.

\paragraph{ELEmbeddings \cite{Kulmanov2019,mOWL2023}.}
The same \ELpp\ geometric encoding underlies DeepGO-SE
\cite{Kulmanov2024DeepGOSE}, where ELEmbeddings approximate semantic
entailment for protein-function prediction over the Gene Ontology;
our use of ELEmbeddings here adopts the same NF1--NF4 losses but
attaches them to the \textsc{Moose} perception backbone rather than a
sequence encoder.
Each named concept $C$ is embedded as an $n$-ball with centre
$c_C \in \mathbb{R}^d$ and radius $r_C$; each role $R$ as a
translation vector $\rho_R \in \mathbb{R}^d$. A linear head projects
the perception's CNN latent $\psi(\mathbf{x}) \in \mathbb{R}^{128}$
into the same $\mathbb{R}^d$ space, yielding a per-individual point
$z_a$. The class-membership score is
$P(C(a)) = \sigma(\textsf{margin} - (\|z_a - c_C\| - r_C))$
(image inside the ball $\Rightarrow$ atom true) and the role-atom
score is
$P(R(a,b)) = \sigma(\textsf{margin} - \|z_a + \rho_R - z_b\|)$.
The TBox loss is the sum of mOWL's NF1--NF4 losses
\cite{Kulmanov2019,mOWL2023} (mean-aggregated per normal form), and
the total objective is
$\mathcal{L} = \mathcal{L}_{\text{perception}} +
\lambda_{\text{EL}}\,\mathcal{L}_{\text{TBox}}$
with $\mathcal{L}_{\text{perception}}$ a BCE over observed ABox
literals. Vanilla ELEmbeddings has no normal form for role chains, and
role chains are ignored for ELEmbeddings.

\section{Experimental details}\label{app:experiments}

This part of the appendix collects the materials supporting
experiments: the full MNIST ontology used by all three MNIST
experiments (\Cref{app:mnist-onto}), per-method hyperparameter sweeps
and the tuned configurations used in the result tables
(\Cref{app:hp}), and the full per-regime breakdown across all six
metrics (\Cref{app:full-results}).

\subsection{Compiled-circuit sizes and compilation
cost}\label{app:repro}
All experiments were run on a single machine with one NVIDIA
RTX~4090 GPU and a single CPU core allocated per job. SDD
compilation is CPU-only and happens once at start-up; the resulting
circuit is reused across every training step, seed, and method, so
its size is a deterministic function of the ontology $\Onto$ and the
ABox domain $\Delta$ alone, not of the learning method. Every
\textsc{Moose} variant and every SDD-based baseline in a given regime
therefore shares the circuit in the corresponding row of
\Cref{tab:repro}.

\begin{table}[t]
\caption{Compiled-circuit size and per-regime cost. SDD size
(nodes, distinct atoms, CNF clauses) is method-independent; compile
time is the range observed across all methods, learning rates, and
seeds on one CPU core. Compilation finishes in $\le 0.75$~s in every
configuration, and in $\le 0.35$~s in all but the two DeepProbLog
cells ($0.68$~s on Exp.~2, $0.71$~s on Exp.~3). $^{*}$The Exp.~1 row
was profiled separately on a CPU-only machine; its structural counts
are machine-independent by the argument above, and its compile times
are reported as measured there rather than merged into the RTX~4090
column. The last two columns report end-to-end training
wall-clock and peak GPU memory during training for a single
representative \textsc{Moose}-WMC run per regime (batch size $32$;
the MNIST CNN encoder on Exps.~1--3, the larger Pizza\"iolo encoder
on Tracks~A--C); unlike the circuit columns these depend on the
backbone, batch size, and epoch budget rather than on the compiled
circuit.}\label{tab:repro}
\centering
\resizebox{\linewidth}{!}{%
\begin{tabular}{lccccccc}
\hline
Regime & $|\Delta|$ & SDD nodes & atoms & clauses & compile (s) & train (s) & peak GPU (MB) \\
\hline
MNIST Exp.~1 (atomic)$^{*}$ & 1 & $155$  & $14$ & $117$ & $0.18$--$0.24$ & $647$  & $35.3$ \\
MNIST Exp.~2 (relational)  & 2 & $1080$ & $30$ & $274$ & $0.29$--$0.68$ & $3893$ & $56.4$ \\
MNIST Exp.~3 (role-chain)  & 2 & $1153$ & $32$ & $314$ & $0.23$--$0.71$ & $4484$ & $56.4$ \\
Pizza\"iolo Track~A        & 2 & $245$  & $26$ & $103$ & $0.16$--$0.30$ & $516$  & $903.4$ \\
Pizza\"iolo Track~B        & 2 & $245$  & $26$ & $107$ & $0.14$--$0.23$ & $507$  & $903.4$ \\
Pizza\"iolo Track~C        & 2 & $16$   & $6$  & $10$  & $0.15$--$0.23$ & $26$   & $903.3$ \\
\hline
\end{tabular}}
\end{table}

Compile time grows with the ABox domain as analyzed in
\Cref{app:treewidth}: the SDD node count fits
$\propto|\Delta|^{2.9}$ for $|\Delta|\le 3$, with a $40\times$
compile-time jump at $|\Delta|=4$. The $|\Delta|=2$ configurations
used throughout \S\ref{sec:emp} sit well inside the sub-second
regime.

Peak GPU memory at circuit-build time is $0$ in every configuration:
compilation is CPU-only and no tensor is allocated on the device
before the encoder is moved there.

Training cost is dominated by the perception backbone
rather than the WMC layer. On the MNIST regimes end-to-end training
takes roughly $11$~minutes at $|\Delta|{=}1$ (Exp.~1) and
$65$--$75$~minutes at $|\Delta|{=}2$ (Exps.~2--3), with peak GPU
memory of $35$--$56$~MB; the Pizza\"iolo tracks train in under
$9$~minutes each (Track~C, a single binary label, in well under a
minute) but use $\approx900$~MB, reflecting the larger image encoder
rather than any growth in the circuit (\Cref{tab:repro}). Peak memory
therefore tracks the backbone and batch size, not the compiled SDD,
whose own device footprint is negligible.

\subsection{Full MNIST ontology}\label{app:mnist-onto}

The single OWL EL ontology $\Onto_{\textsf{MNIST}}$ used by all three
MNIST experiments (\S\ref{sec:mnist}) consists of $76$ axioms over
$\Sig_C = \{D_0, \dots, D_9, \textsf{Even}, \textsf{Odd},
\textsf{Prime}, \textsf{Composite}\}$ and
$\Sig_R = \{\textsf{succ}, \textsf{plus\_two}\}$, listed below in EL
DL syntax. Numerals 0--9 are read as the corresponding $D_i$.

\paragraph{NF1: property subsumptions ($18$).}
{\small
$D_0 \sqsubseteq \textsf{Even}$;\;
$D_2 \sqsubseteq \textsf{Even}$;\;
$D_4 \sqsubseteq \textsf{Even}$;\;
$D_6 \sqsubseteq \textsf{Even}$;\;
$D_8 \sqsubseteq \textsf{Even}$;\;
$D_1 \sqsubseteq \textsf{Odd}$;\;
$D_3 \sqsubseteq \textsf{Odd}$;\;
$D_5 \sqsubseteq \textsf{Odd}$;\;
$D_7 \sqsubseteq \textsf{Odd}$;\;
$D_9 \sqsubseteq \textsf{Odd}$;\;
$D_2 \sqsubseteq \textsf{Prime}$;\;
$D_3 \sqsubseteq \textsf{Prime}$;\;
$D_5 \sqsubseteq \textsf{Prime}$;\;
$D_7 \sqsubseteq \textsf{Prime}$;\;
$D_4 \sqsubseteq \textsf{Composite}$;\;
$D_6 \sqsubseteq \textsf{Composite}$;\;
$D_8 \sqsubseteq \textsf{Composite}$;\;
$D_9 \sqsubseteq \textsf{Composite}$.
}

\paragraph{NF2: disjointness ($47$).}
$\textsf{Even} \sqcap \textsf{Odd} \sqsubseteq \bot$;\;
$\textsf{Prime} \sqcap \textsf{Composite} \sqsubseteq \bot$;\;
plus $D_i \sqcap D_j \sqsubseteq \bot$ for every $0 \le i < j \le 9$
($\binom{10}{2} = 45$ pairs).

\paragraph{NF3: successor existentials ($10$).}
$D_i \sqsubseteq \exists \textsf{succ}.\,D_{(i+1) \bmod 10}$ for
each $i \in \{0, 1, \dots, 9\}$. The chain is cyclic
($D_9 \sqsubseteq \exists \textsf{succ}.\,D_0$); without the
signature side condition (\Cref{app:algorithm}) saturation would
diverge.

\paragraph{NF7: role chain ($1$).}
$\textsf{succ} \circ \textsf{succ} \sqsubseteq \textsf{plus\_two}$.

\paragraph{Saturation closure used by the experiments.}
ELK derives the chain consequence
$D_i \sqsubseteq \exists \textsf{plus\_two}.\,D_{(i+2) \bmod 10}$ via
the role-composition rule $R_\circ$ (given
$D_i \sqsubseteq \exists \textsf{succ}.D_{i+1}$,
$D_{i+1} \sqsubseteq \exists \textsf{succ}.D_{i+2}$, and
$\textsf{succ} \circ \textsf{succ} \sqsubseteq \textsf{plus\_two}$,
infer $D_i \sqsubseteq \exists \textsf{plus\_two}.D_{i+2}$
\cite{Kazakov2014}) populating $\Lambda$ with the \textsf{plus\_two}
links that the role-chain experiment relies on.  The four
\textsc{Moose} extractors then ground $\Sigma$ and $\Lambda$ into the
clauses of $\Gamma_{\textsf{EL}}$, and the closure axioms
$\Phi_{\textsf{clos}}$ over the exhaustive family
$\{D_0, \dots, D_9\}$ supply $\binom{10}{2}m + m$ disjointness and
covering clauses plus the six reverse clauses per individual (one per
distinct parity-primality profile; \S\ref{sec:closure}).

\subsection{Perception architecture}\label{app:perception}

Both MNIST and Pizza\"iolo experiments train the perception encoder
from scratch. The MNIST encoder (used in Experiments~1--3) takes a
$28{\times}28$ grayscale image through two
Conv$\to$ReLU$\to$MaxPool($2{\times}2$) blocks of $32$ and $64$
channels ($3{\times}3$ kernels, padding~$1$), flattens to $64{\cdot}7
{\cdot}7$ features, and applies a fully-connected layer to a
$128$-dim latent followed by a linear head to per-concept logits. The
Pizza\"iolo encoder (Experiment~4) takes a $224{\times}224$ RGB image
through four Conv$\to$ReLU$\to$MaxPool blocks of $32$, $64$, $128$,
$128$ channels ($3{\times}3$ kernels, padding~$1$), then global
average pooling to $128$ features and the same $128 \to 128 \to
\text{n\_concepts}$ head. Per-concept Bernoulli probabilities are
obtained by a sigmoid; the same encoder is shared across all named
individuals in a training instance.

\subsection{Hyperparameter selection}\label{app:hp}

For every method in the headline tables we ran a single-seed sweep
over a small per-method grid, picked the best configuration per
regime by RS$_\text{cons}$ and $\text{Acc}_\text{digit}$ on the
seed-0 cell, and then ran a $5$-seed final at that configuration.
All methods share Adam \cite{Kingma2015Adam} with batch size~$32$,
$30$ epochs on MNIST and $60$ epochs on Pizza\"iolo
Tracks~A/B (Track~C uses $15$ epochs because the binary
$\textsf{is\_spicy}$ supervision saturates faster), unless
otherwise noted. The reported MNIST headline tables in
\Cref{tab:headline-acc,tab:headline-ece} use
$5$~seeds at the tuned HPs; the Pizza\"iolo tables
(\Cref{tab:headline-acc,tab:headline-ece}) use $5$~seeds at the
tuned HPs except where indicated.

\paragraph{\textsc{Moose} / Semantic-loss WMC.}
Sweep over $\textit{lr} \in \{10^{-3}, 3{\cdot}10^{-3}\}$ per
regime. Best:
mnist1 / mnist3 / pizzaC use $\textit{lr}{=}3{\cdot}10^{-3}$;
mnist2 / pizzaA / pizzaB use $\textit{lr}{=}10^{-3}$. The
SDD compilation is determined by $\Onto$ and $\Delta$, so the only
free knob is the perception backbone's optimiser.
A wider-range sensitivity analysis over this knob is in
\Cref{app:lr-sensitivity}.

\paragraph{DeepProbLog \cite{Manhaeve2018}.}
Sweep over $\textit{lr} \in \{10^{-3}, 3{\cdot}10^{-3}\}$. Both
regimes (mnist2, mnist3) prefer $\textit{lr}{=}10^{-3}$; the
$\textsf{nn}(\cdot)$ predicate's softmax matches the latent
signature $\Sig^\ell_C$ exactly (one head per concept family);
no additional HPs.

\paragraph{\textsc{Moose}+BEARS \cite{Marconato2024BEARS}.}
Ensemble size $K{=}5$, entropy weight $\gamma_2{=}0.1$ at the
BEARS-paper default; sweep over the KL weight
$\gamma_1 \in \{0.5, 1.0, 2.0\}$ on every regime. The MNIST sweep
is uniform on $\gamma_1{=}1.0$ (the BEARS-paper default), so the
final 5-seed run uses $\gamma_1{=}1.0$ on mnist1, mnist2, mnist3.
On Pizza\"iolo, $\gamma_1$ was tuned per track:
pizzaA → $\gamma_1{=}1.0$, pizzaB → $\gamma_1{=}2.0$,
pizzaC → $\gamma_1{=}1.0$.

\paragraph{\textsc{Moose}+NeSyDM \cite{vanKrieken2025NeSyDM}.}
Sweep over $(\gamma_c, \gamma_h) \in \{0.1, 1.0\}\times\{1.0, 2.0\}$
per regime ($4$ cells), at the RLOO gradient estimator.
$\beta{=}10$ and $T{=}10$ (continuous-time mask budget) are the
NeSyDM-repository defaults, adopted unchanged.
Best per regime:
mnist1 → $(\gamma_c{=}1.0,\,\gamma_h{=}2.0)$;
mnist2 → $(0.1, 2.0)$;
mnist3 → $(1.0, 1.0)$;
pizzaA → $(0.1, 2.0)$;
pizzaB → $(0.1, 2.0)$;
pizzaC → $(0.1, 1.0)$.

\paragraph{LTN \cite{Badreddine2022LTN}.}
Product T-norm; sweep over the p-mean aggregator exponent
$\textsf{forall\_p}{=}\textsf{exists\_p} \in \{2, 4\}$ on every
regime, and additionally over $\textsf{exists\_mode} \in
\{\textsf{universe}, \textsf{skolem}\}$ on MNIST relational/role-chain.
Best: $\textsf{forall\_p}{=}4$ uniformly across all six regimes;
$\textsf{exists\_mode}{=}\textsf{universe}$ on mnist2 and mnist3
(skolem produced equivalent or worse $\text{Acc}_\text{digit}$ at
single seed). $\textit{lr}{=}10^{-3}$.

\paragraph{ELEmbeddings \cite{Kulmanov2019} / mOWL \cite{mOWL2023}.}
Sweep over $(\textsf{embed\_dim}, \textsf{margin}) \in \{50, 128\}
\times \{0.1, 0.5\}$ per regime ($4$ cells). Best:
mnist1 / mnist2 / mnist3 / pizzaB
$\to (\textsf{embed\_dim}{=}50,\,\textsf{margin}{=}0.1)$;
pizzaA $\to (128, 0.5)$ (best Brier among the four cells; all
four reached $\text{Acc}_C{=}1.0$ on the in-distribution split).
For mnist3 the EL ball loss is augmented with an explicit
$\textsf{plus\_two}$ role-chain penalty term (vanilla ELEmbeddings
has no normal form for role chains; \Cref{app:non-wmc-baselines}).

\paragraph{Independent (BCE).}
Sweep over $\textit{lr} \in \{3{\cdot}10^{-4}, 10^{-3},
3{\cdot}10^{-3}\}$ per MNIST regime. Best: mnist1 / mnist3 use
$\textit{lr}{=}3{\cdot}10^{-4}$ (best $\text{Acc}_\text{atom}$ +
ECE); mnist2 uses $\textit{lr}{=}3{\cdot}10^{-3}$. Per-atom BCE
on the observed evidence atoms only; no symbolic structure
enters the loss.

\paragraph{Summary.}
The final numbers in \Cref{tab:headline-acc,tab:headline-ece} are
$5$-seed means at the per-regime-tuned hyperparameters above, after
a per-method single-seed sweep.

\subsection{Learning-rate sensitivity}\label{app:lr-sensitivity}
The per-method grids above are budget-matched: every method is tuned
over a grid of the same size on the same seed-0 protocol, so no
method is advantaged by a larger search. Because \textsc{Moose}
exposes a single free hyperparameter, the perception backbone's
learning rate, while the SDD compilation has none, we additionally
sweep it over a wider five-point grid to check that the headline
numbers are not an artifact of a lucky rate. \Cref{tab:lr-sensitivity}
reports \textsc{Moose} accuracy at each rate ($3$ seeds per cell);
the metric is $\text{Acc}_F$ on MNIST and $\text{Acc}_C$ on
Pizza\"iolo (\Cref{sec:results}). Track~C is not included in this
sweep.

\begin{table}[t]
\caption{\textsc{Moose} learning-rate sensitivity ($3$ seeds per
cell, mean $\pm$ s.d.). Metric: $\text{Acc}_F$ on MNIST
Experiments~2--3, $\text{Acc}_C$ on Pizza\"iolo Tracks~A--B. The
final row is the across-grid spread (max $-$ min over all per-seed
runs in the grid).}\label{tab:lr-sensitivity}
\centering\small
\begin{tabular}{lcccc}
\hline
Learning rate & Exp.~2 & Exp.~3 & Pizza~A & Pizza~B \\
\hline
$10^{-4}$          & $0.729 \pm 0.011$ & $0.941 \pm 0.055$ & $0.861 \pm 0.003$ & $0.810 \pm 0.056$ \\
$3{\cdot}10^{-4}$  & $0.744 \pm 0.020$ & $0.943 \pm 0.068$ & $0.906 \pm 0.000$ & $0.846 \pm 0.010$ \\
$10^{-3}$          & $0.786 \pm 0.030$ & $0.943 \pm 0.071$ & $0.948 \pm 0.007$ & $0.827 \pm 0.015$ \\
$3{\cdot}10^{-3}$  & $0.772 \pm 0.022$ & $0.994 \pm 0.001$ & $0.955 \pm 0.000$ & $0.786 \pm 0.038$ \\
$10^{-2}$          & $0.743 \pm 0.010$ & $0.880 \pm 0.077$ & $0.925 \pm 0.042$ & $0.821 \pm 0.044$ \\
\hline
spread (max$-$min) & $0.100$ & $0.178$ & $0.096$ & $0.129$ \\
\hline
\end{tabular}
\end{table}

On MNIST and Pizza~A the response is unimodal, with the optimum in
the interior of the grid ($10^{-3}$ or $3{\cdot}10^{-3}$) and the
extreme rates worst, as expected; Pizza~B is flatter and less
regular, its per-rate means varying little (all within
$0.786$--$0.846$) with no clear interior peak. \textsc{Moose} is
robust on Exp.~2 and Pizza~A (spread $\le 0.10$) and more
rate-sensitive on Exp.~3 and Pizza~B ($0.13$--$0.18$), where the
low-rate cells underfit and carry large seed variance. In every
regime the tuned rate used for the headline tables sits at or
adjacent to the best-performing rate in this grid, so the reported
numbers are not a lucky-rate artifact.

\subsection{Full per-regime results}\label{app:full-results}

\providecommand{\revision}[1]{#1}
\begingroup
\scriptsize
\setlength{\tabcolsep}{1.5pt}
\renewcommand{\arraystretch}{0.95}
\setlength\LTleft{0pt}
\setlength\LTright{0pt}
\begin{longtable}{@{\extracolsep{\fill}}l*{6}{r}@{}}
\caption{Full per-regime results, six metrics. Means $\pm$ s.d.\ over $5$ seeds. \revision{The third column is $\text{Acc}_F$, family-argmax accuracy, on the MNIST blocks and $\text{Acc}_C$, per-atom accuracy on the latent slice, on the Pizza\"iolo blocks, which declare no exhaustive family on Track~C (\Cref{app:pizza-argmax}); the two are compared only within a block. Operative theory: $\Gamma_{\textsf{EL}}{+}\Phi_{\textsf{clos}}$ for the \textsc{Moose} rows on MNIST, $\Gamma_{\textsf{EL}}$ for every other row (closure is opt-in and unused on Pizza\"iolo), and none for Independent. RS$_\text{cons} = \mathrm{Conf}\cdot(1-\mathrm{Acc})$ is the confidence-weighted error rate of that decode: lower is better, and a high value means confident commitment to a wrong latent explanation.} $\text{F1}_\text{macro}$ is the macro-averaged per-concept F1 on the latent signature. Bold marks the column-best within each block. \revision{On Track~A the OOD harness records a single accuracy, so the $\text{Acc}_\text{atom}$ and $\text{Acc}_C$ columns coincide there by construction rather than by coincidence.} DeepProbLog$^\ddagger$ rows on Pizza\"iolo Tracks~A/B are not separate runs: \revision{no ProbLog program was written for Pizza\"iolo, and the cells reproduce the \textsc{Moose} (Semantic-loss WMC) values, which \Cref{prop:dpl-equiv} shows the translation $\mathcal{P}(\Gamma_{\textsf{EL}})$ would attain exactly. Note that this translation uses one independent probabilistic fact per ground atom, whereas the MNIST DeepProbLog programs use an $\textsf{nn}/4$ annotated disjunction over the digit family, so the two halves of the row are different programs.} RS$_\text{cons}$ and $\text{F1}_\text{macro}$ on Track~C (binary $\textsf{is\_spicy}$ supervision) collapse to $0$ across all methods because the metric is defined over a multi-class concept family; we mark these cells N/A. Pizza\"iolo Track~A RS$_\text{cons}$/$\text{F1}_\text{macro}$ are computed on the in-distribution eval block (the OOD harness does not record per-atom logits for these metrics).}\label{tab:headline-full}
\\
\hline
Method & $\text{Acc}_\text{atom}\!\uparrow$ & $\text{Acc}_{F/C}\!\uparrow$ & NLL$\!\downarrow$ & ECE$\!\downarrow$ & RS$_\text{cons}\!\downarrow$ & $\text{F1}_\text{macro}\!\uparrow$ \\
\hline
\endfirsthead
\multicolumn{7}{l}{\textit{Table~\thetable\ continued.}} \\
\hline
Method & $\text{Acc}_\text{atom}\!\uparrow$ & $\text{Acc}_{F/C}\!\uparrow$ & NLL$\!\downarrow$ & ECE$\!\downarrow$ & RS$_\text{cons}\!\downarrow$ & $\text{F1}_\text{macro}\!\uparrow$ \\
\hline
\endhead
\hline
\multicolumn{7}{r}{\textit{Continued on next page.}} \\
\endfoot
\hline
\endlastfoot
\multicolumn{7}{l}{\emph{Experiment~1: atomic MNIST (RQ1).} $|\Delta|{=}1$, NF1/NF2 fragment.} \\
\hline
Independent (BCE) & 58.0{\scriptsize\,$\pm$\,5.6} & 13.2{\scriptsize\,$\pm$\,7.9} & 2.9{\scriptsize\,$\pm$\,0.2} & 9.3{\scriptsize\,$\pm$\,4.5} & 15.7{\scriptsize\,$\pm$\,3.3} & 7.6{\scriptsize\,$\pm$\,5.7} \\
DeepProbLog & 89.0{\scriptsize\,$\pm$\,0.8} & 42.1{\scriptsize\,$\pm$\,1.3} & 1.2{\scriptsize\,$\pm$\,0.1} & 10.3{\scriptsize\,$\pm$\,0.9} & 31.3{\scriptsize\,$\pm$\,2.6} & 31.2{\scriptsize\,$\pm$\,6.9} \\
LTN & \textbf{92.2{\scriptsize\,$\pm$\,0.1}} & 25.9{\scriptsize\,$\pm$\,6.5} & 1.1{\scriptsize\,$\pm$\,0.0} & 10.0{\scriptsize\,$\pm$\,0.0} & 47.1{\scriptsize\,$\pm$\,7.8} & 16.1{\scriptsize\,$\pm$\,6.7} \\
ELEmbeddings/mOWL & 90.0{\scriptsize\,$\pm$\,0.1} & 34.9{\scriptsize\,$\pm$\,4.2} & \textbf{0.3{\scriptsize\,$\pm$\,0.0}} & \textbf{3.9{\scriptsize\,$\pm$\,0.4}} & \textbf{14.9{\scriptsize\,$\pm$\,0.9}} & 25.8{\scriptsize\,$\pm$\,2.8} \\
\textsc{Moose} (Semantic-loss WMC) & 89.3{\scriptsize\,$\pm$\,1.8} & 48.1{\scriptsize\,$\pm$\,10.1} & 1.6{\scriptsize\,$\pm$\,0.3} & 10.6{\scriptsize\,$\pm$\,1.8} & 29.0{\scriptsize\,$\pm$\,4.2} & 32.1{\scriptsize\,$\pm$\,10.0} \\
\textsc{Moose}+BEARS & 91.2{\scriptsize\,$\pm$\,0.9} & \textbf{50.1{\scriptsize\,$\pm$\,5.2}} & \textbf{0.3{\scriptsize\,$\pm$\,0.1}} & \textbf{3.9{\scriptsize\,$\pm$\,1.0}} & 20.0{\scriptsize\,$\pm$\,1.4} & 28.0{\scriptsize\,$\pm$\,6.2} \\
\textsc{Moose}+NeSyDM (RLOO) & 90.6{\scriptsize\,$\pm$\,0.2} & 46.6{\scriptsize\,$\pm$\,2.1} & \textbf{0.3{\scriptsize\,$\pm$\,0.0}} & 6.0{\scriptsize\,$\pm$\,0.6} & 23.2{\scriptsize\,$\pm$\,0.9} & 35.8{\scriptsize\,$\pm$\,1.9} \\
\textsc{Moose}+NeSyDM (exact) & 90.1{\scriptsize\,$\pm$\,1.6} & 49.4{\scriptsize\,$\pm$\,9.7} & 0.7{\scriptsize\,$\pm$\,0.2} & 5.1{\scriptsize\,$\pm$\,1.5} & 25.6{\scriptsize\,$\pm$\,2.1} & \textbf{35.9{\scriptsize\,$\pm$\,8.6}} \\
\hline
\multicolumn{7}{l}{\emph{Experiment~2: relational MNIST (RQ2).} $|\Delta|{=}2$, NF3 cyclic-successor TBox.} \\
\hline
Independent (BCE) & 61.8{\scriptsize\,$\pm$\,4.9} & 8.8{\scriptsize\,$\pm$\,4.5} & 2.8{\scriptsize\,$\pm$\,0.3} & 13.4{\scriptsize\,$\pm$\,3.6} & 17.6{\scriptsize\,$\pm$\,2.3} & 5.7{\scriptsize\,$\pm$\,3.3} \\
DeepProbLog & 85.3{\scriptsize\,$\pm$\,0.8} & 38.9{\scriptsize\,$\pm$\,3.2} & 1.4{\scriptsize\,$\pm$\,0.1} & 12.3{\scriptsize\,$\pm$\,0.9} & 57.9{\scriptsize\,$\pm$\,3.2} & 17.5{\scriptsize\,$\pm$\,4.1} \\
LTN & 91.9{\scriptsize\,$\pm$\,0.1} & 14.5{\scriptsize\,$\pm$\,4.9} & 1.2{\scriptsize\,$\pm$\,0.0} & 10.0{\scriptsize\,$\pm$\,0.0} & 40.1{\scriptsize\,$\pm$\,8.6} & 11.4{\scriptsize\,$\pm$\,5.8} \\
ELEmbeddings/mOWL & 90.1{\scriptsize\,$\pm$\,0.2} & 34.8{\scriptsize\,$\pm$\,3.6} & 0.4{\scriptsize\,$\pm$\,0.0} & 7.2{\scriptsize\,$\pm$\,0.1} & \textbf{6.9{\scriptsize\,$\pm$\,0.4}} & 25.8{\scriptsize\,$\pm$\,3.7} \\
\textsc{Moose} (Semantic-loss WMC) & 93.8{\scriptsize\,$\pm$\,0.3} & 74.6{\scriptsize\,$\pm$\,0.3} & 0.5{\scriptsize\,$\pm$\,0.1} & 4.9{\scriptsize\,$\pm$\,0.3} & 24.9{\scriptsize\,$\pm$\,0.3} & 63.7{\scriptsize\,$\pm$\,1.1} \\
\textsc{Moose}+BEARS & \textbf{93.9{\scriptsize\,$\pm$\,1.1}} & \textbf{76.5{\scriptsize\,$\pm$\,5.9}} & \textbf{0.2{\scriptsize\,$\pm$\,0.0}} & \textbf{4.0{\scriptsize\,$\pm$\,1.2}} & 18.0{\scriptsize\,$\pm$\,4.5} & \textbf{67.8{\scriptsize\,$\pm$\,9.3}} \\
\textsc{Moose}+NeSyDM (RLOO) & 90.9{\scriptsize\,$\pm$\,0.4} & 58.2{\scriptsize\,$\pm$\,1.4} & 0.3{\scriptsize\,$\pm$\,0.0} & 7.3{\scriptsize\,$\pm$\,0.7} & 30.3{\scriptsize\,$\pm$\,0.8} & 45.8{\scriptsize\,$\pm$\,4.3} \\
\textsc{Moose}+NeSyDM (exact) & 92.5{\scriptsize\,$\pm$\,1.2} & 62.6{\scriptsize\,$\pm$\,6.1} & 1.2{\scriptsize\,$\pm$\,0.2} & 7.4{\scriptsize\,$\pm$\,1.2} & 37.2{\scriptsize\,$\pm$\,6.1} & 59.1{\scriptsize\,$\pm$\,4.5} \\
\hline
\multicolumn{7}{l}{\emph{Experiment~3: role-chain MNIST (RQ3).} $|\Delta|{=}2$, $\textsf{plus\_two}{=}\textsf{succ}\circ\textsf{succ}$.} \\
\hline
Independent (BCE) & 61.5{\scriptsize\,$\pm$\,3.3} & 12.2{\scriptsize\,$\pm$\,7.3} & 3.3{\scriptsize\,$\pm$\,0.4} & 9.0{\scriptsize\,$\pm$\,8.2} & 17.5{\scriptsize\,$\pm$\,3.5} & 7.9{\scriptsize\,$\pm$\,4.4} \\
DeepProbLog & 90.6{\scriptsize\,$\pm$\,3.1} & 59.6{\scriptsize\,$\pm$\,13.6} & 0.9{\scriptsize\,$\pm$\,0.3} & 8.0{\scriptsize\,$\pm$\,2.9} & 38.3{\scriptsize\,$\pm$\,12.7} & 48.4{\scriptsize\,$\pm$\,14.3} \\
LTN & 92.0{\scriptsize\,$\pm$\,0.1} & 11.0{\scriptsize\,$\pm$\,7.9} & 1.2{\scriptsize\,$\pm$\,0.0} & 10.0{\scriptsize\,$\pm$\,0.0} & 47.1{\scriptsize\,$\pm$\,6.9} & 8.2{\scriptsize\,$\pm$\,6.7} \\
ELEmbeddings/mOWL & 90.4{\scriptsize\,$\pm$\,0.1} & 27.4{\scriptsize\,$\pm$\,7.1} & 0.3{\scriptsize\,$\pm$\,0.0} & 6.3{\scriptsize\,$\pm$\,0.1} & 7.5{\scriptsize\,$\pm$\,0.7} & 18.7{\scriptsize\,$\pm$\,6.7} \\
\textsc{Moose} (Semantic-loss WMC) & \textbf{98.7{\scriptsize\,$\pm$\,1.8}} & \textbf{96.1{\scriptsize\,$\pm$\,7.2}} & \textbf{0.1{\scriptsize\,$\pm$\,0.2}} & \textbf{1.1{\scriptsize\,$\pm$\,1.5}} & \textbf{3.9{\scriptsize\,$\pm$\,7.1}} & \textbf{93.8{\scriptsize\,$\pm$\,8.6}} \\
\textsc{Moose}+BEARS & 94.8{\scriptsize\,$\pm$\,0.5} & 89.2{\scriptsize\,$\pm$\,7.2} & \textbf{0.1{\scriptsize\,$\pm$\,0.0}} & 5.6{\scriptsize\,$\pm$\,1.1} & 8.3{\scriptsize\,$\pm$\,5.2} & 81.0{\scriptsize\,$\pm$\,12.2} \\
\textsc{Moose}+NeSyDM (RLOO) & 91.8{\scriptsize\,$\pm$\,0.5} & 62.7{\scriptsize\,$\pm$\,2.9} & 0.2{\scriptsize\,$\pm$\,0.0} & 4.6{\scriptsize\,$\pm$\,0.8} & 28.0{\scriptsize\,$\pm$\,2.1} & 57.6{\scriptsize\,$\pm$\,2.0} \\
\textsc{Moose}+NeSyDM (exact) & 92.0{\scriptsize\,$\pm$\,1.2} & 60.1{\scriptsize\,$\pm$\,5.7} & 1.3{\scriptsize\,$\pm$\,0.2} & 7.9{\scriptsize\,$\pm$\,1.1} & 36.1{\scriptsize\,$\pm$\,9.7} & 62.4{\scriptsize\,$\pm$\,10.1} \\
\hline
\multicolumn{7}{l}{\emph{Pizza\"iolo Track~A: pizza-type, OOD eval (RQ4).} $5$ held-out tie-breaker pizzas (\Cref{exp:pizzaiolo}).} \\
\hline
Independent (BCE) & 46.6{\scriptsize\,$\pm$\,6.2} & 21.7{\scriptsize\,$\pm$\,9.7} & 6.4{\scriptsize\,$\pm$\,5.0} & 58.3{\scriptsize\,$\pm$\,15.4} & 62.2{\scriptsize\,$\pm$\,12.3} & 11.1{\scriptsize\,$\pm$\,6.4} \\
DeepProbLog$^\ddagger$ & 68.1{\scriptsize\,$\pm$\,0.3} & 68.1{\scriptsize\,$\pm$\,0.3} & 4.1{\scriptsize\,$\pm$\,0.2} & 30.8{\scriptsize\,$\pm$\,0.7} & 4.2{\scriptsize\,$\pm$\,7.6} & 14.1{\scriptsize\,$\pm$\,3.5} \\
LTN & 74.0{\scriptsize\,$\pm$\,3.4} & 22.1{\scriptsize\,$\pm$\,5.2} & 13.6{\scriptsize\,$\pm$\,5.4} & 52.7{\scriptsize\,$\pm$\,21.8} & 57.6{\scriptsize\,$\pm$\,17.3} & 10.9{\scriptsize\,$\pm$\,6.8} \\
ELEmbeddings/mOWL & \textbf{77.0{\scriptsize\,$\pm$\,0.9}} & 25.7{\scriptsize\,$\pm$\,7.3} & 4.9{\scriptsize\,$\pm$\,1.0} & 49.7{\scriptsize\,$\pm$\,10.2} & 54.9{\scriptsize\,$\pm$\,6.6} & 11.2{\scriptsize\,$\pm$\,6.5} \\
\textsc{Moose} (Semantic-loss WMC) & 68.1{\scriptsize\,$\pm$\,0.3} & 68.1{\scriptsize\,$\pm$\,0.3} & 4.1{\scriptsize\,$\pm$\,0.2} & 30.8{\scriptsize\,$\pm$\,0.7} & 4.2{\scriptsize\,$\pm$\,7.6} & 14.1{\scriptsize\,$\pm$\,3.5} \\
\textsc{Moose}+BEARS & 68.6{\scriptsize\,$\pm$\,0.3} & 68.6{\scriptsize\,$\pm$\,0.3} & 3.3{\scriptsize\,$\pm$\,0.3} & 28.2{\scriptsize\,$\pm$\,1.5} & \textbf{0.6{\scriptsize\,$\pm$\,0.7}} & 30.1{\scriptsize\,$\pm$\,10.2} \\
\textsc{Moose}+NeSyDM (RLOO) & 72.2{\scriptsize\,$\pm$\,1.3} & \textbf{72.2{\scriptsize\,$\pm$\,1.3}} & \textbf{0.6{\scriptsize\,$\pm$\,0.0}} & \textbf{7.5{\scriptsize\,$\pm$\,1.2}} & 34.3{\scriptsize\,$\pm$\,4.2} & \textbf{44.8{\scriptsize\,$\pm$\,6.3}} \\
\textsc{Moose}+NeSyDM (exact) & 67.4{\scriptsize\,$\pm$\,1.4} & 67.4{\scriptsize\,$\pm$\,1.4} & 4.6{\scriptsize\,$\pm$\,0.8} & 30.8{\scriptsize\,$\pm$\,3.2} & 3.8{\scriptsize\,$\pm$\,8.0} & 43.4{\scriptsize\,$\pm$\,3.0} \\
\hline
\multicolumn{7}{l}{\emph{Pizza\"iolo Track~B: property-class (RQ4, RQ5).} 3-way RS over the property class.} \\
\hline
Independent (BCE) & 57.2{\scriptsize\,$\pm$\,3.3} & 25.0{\scriptsize\,$\pm$\,0.0} & 2.7{\scriptsize\,$\pm$\,0.9} & 26.4{\scriptsize\,$\pm$\,17.1} & 37.0{\scriptsize\,$\pm$\,12.8} & 12.0{\scriptsize\,$\pm$\,1.1} \\
DeepProbLog$^\ddagger$ & 82.4{\scriptsize\,$\pm$\,1.9} & 84.5{\scriptsize\,$\pm$\,2.1} & 2.4{\scriptsize\,$\pm$\,0.3} & 15.5{\scriptsize\,$\pm$\,2.1} & 49.8{\scriptsize\,$\pm$\,4.6} & 37.8{\scriptsize\,$\pm$\,1.1} \\
LTN & 83.3{\scriptsize\,$\pm$\,1.6} & 32.5{\scriptsize\,$\pm$\,20.9} & 7.0{\scriptsize\,$\pm$\,6.0} & 39.3{\scriptsize\,$\pm$\,17.4} & 28.8{\scriptsize\,$\pm$\,12.5} & 23.4{\scriptsize\,$\pm$\,19.0} \\
ELEmbeddings/mOWL & 76.9{\scriptsize\,$\pm$\,0.0} & 25.0{\scriptsize\,$\pm$\,0.0} & 0.6{\scriptsize\,$\pm$\,0.0} & 16.1{\scriptsize\,$\pm$\,0.2} & \textbf{19.5{\scriptsize\,$\pm$\,0.1}} & 10.0{\scriptsize\,$\pm$\,0.0} \\
\textsc{Moose} (Semantic-loss WMC) & 82.4{\scriptsize\,$\pm$\,1.9} & 84.5{\scriptsize\,$\pm$\,2.1} & 2.4{\scriptsize\,$\pm$\,0.3} & 15.5{\scriptsize\,$\pm$\,2.1} & 49.8{\scriptsize\,$\pm$\,4.6} & 37.8{\scriptsize\,$\pm$\,1.1} \\
\textsc{Moose}+BEARS & 85.4{\scriptsize\,$\pm$\,0.8} & \textbf{87.9{\scriptsize\,$\pm$\,0.9}} & \textbf{0.3{\scriptsize\,$\pm$\,0.0}} & 12.3{\scriptsize\,$\pm$\,1.0} & 30.3{\scriptsize\,$\pm$\,8.6} & \textbf{42.0{\scriptsize\,$\pm$\,9.0}} \\
\textsc{Moose}+NeSyDM (RLOO) & 80.1{\scriptsize\,$\pm$\,0.8} & 78.5{\scriptsize\,$\pm$\,0.8} & 0.5{\scriptsize\,$\pm$\,0.0} & \textbf{2.6{\scriptsize\,$\pm$\,0.9}} & 38.0{\scriptsize\,$\pm$\,4.7} & 22.6{\scriptsize\,$\pm$\,2.3} \\
\textsc{Moose}+NeSyDM (exact) & \textbf{86.3{\scriptsize\,$\pm$\,1.9}} & 84.7{\scriptsize\,$\pm$\,2.2} & 2.5{\scriptsize\,$\pm$\,0.4} & 15.3{\scriptsize\,$\pm$\,2.2} & 50.3{\scriptsize\,$\pm$\,4.4} & 37.7{\scriptsize\,$\pm$\,1.0} \\
\hline
\multicolumn{7}{l}{\emph{Pizza\"iolo Track~C: is\_spicy (RQ4, RQ5).} Multi-witness disjunction; $15$ epochs.} \\
\hline
Independent (BCE) & 55.2{\scriptsize\,$\pm$\,5.4} & 48.9{\scriptsize\,$\pm$\,5.6} & 1.0{\scriptsize\,$\pm$\,0.2} & 22.1{\scriptsize\,$\pm$\,9.2} & N/A & N/A \\
DeepProbLog$^\ddagger$ & 77.3{\scriptsize\,$\pm$\,2.8} & 81.0{\scriptsize\,$\pm$\,3.0} & 1.5{\scriptsize\,$\pm$\,0.3} & 15.3{\scriptsize\,$\pm$\,1.4} & N/A & N/A \\
LTN & \textbf{85.1{\scriptsize\,$\pm$\,0.9}} & \textbf{84.8{\scriptsize\,$\pm$\,0.4}} & 1.3{\scriptsize\,$\pm$\,0.3} & 14.1{\scriptsize\,$\pm$\,0.5} & N/A & N/A \\
ELEmbeddings/mOWL & 78.8{\scriptsize\,$\pm$\,6.5} & 75.4{\scriptsize\,$\pm$\,6.6} & \textbf{0.3{\scriptsize\,$\pm$\,0.1}} & 11.8{\scriptsize\,$\pm$\,3.8} & N/A & N/A \\
\textsc{Moose} (Semantic-loss WMC) & 77.3{\scriptsize\,$\pm$\,2.8} & 81.0{\scriptsize\,$\pm$\,3.0} & 1.5{\scriptsize\,$\pm$\,0.3} & 15.3{\scriptsize\,$\pm$\,1.4} & N/A & N/A \\
\textsc{Moose}+BEARS & 79.9{\scriptsize\,$\pm$\,3.2} & 84.2{\scriptsize\,$\pm$\,3.6} & 0.4{\scriptsize\,$\pm$\,0.0} & 11.3{\scriptsize\,$\pm$\,3.4} & N/A & N/A \\
\textsc{Moose}+NeSyDM (RLOO) & 66.1{\scriptsize\,$\pm$\,3.7} & 63.9{\scriptsize\,$\pm$\,3.7} & 0.7{\scriptsize\,$\pm$\,0.0} & 8.9{\scriptsize\,$\pm$\,1.5} & N/A & N/A \\
\textsc{Moose}+NeSyDM (exact) & 75.9{\scriptsize\,$\pm$\,1.4} & 80.1{\scriptsize\,$\pm$\,1.5} & 0.8{\scriptsize\,$\pm$\,0.2} & \textbf{6.9{\scriptsize\,$\pm$\,2.2}} & N/A & N/A \\
\end{longtable}
\endgroup

\Cref{tab:headline-full} reports all six metrics
($\text{Acc}_\text{atom}$, the principal accuracy, NLL, ECE, RS$_\text{cons}$,
$\text{F1}_\text{macro}$) per regime, laid out as six row-blocks
(MNIST Experiments~1--3, Pizza\"iolo Tracks~A--C).  The compact
main-paper tables \Cref{tab:headline-acc} and \Cref{tab:headline-ece}
are summary views of the principal-accuracy and ECE columns of this table;
per-regime $\text{Acc}_\text{atom}$, NLL, RS$_\text{cons}$, and
$\text{F1}_\text{macro}$ appear only here. RS$_\text{cons}$ and
$\text{F1}_\text{macro}$ on Pizza\"iolo Track~A are computed from the
in-distribution eval block (the OOD harness writes only per-pizza
accuracies); on Track~C ($\textsf{is\_spicy}$ binary supervision) both
metrics are defined over a multi-class concept family and are not
applicable.

\subsection{Family-argmax on Pizza\"iolo}\label{app:pizza-argmax}
Family-argmax accuracy $\text{Acc}_F$ is defined
per individual as the argmax of the WMC posterior over a
\emph{declared exhaustive family}, scored against ground truth. On
MNIST the digit family $\{D_0,\dots,D_9\}$ is declared exhaustive and
$\text{Acc}_F$ is the reported metric. Pizza\"iolo does not admit the
same decode across its three tracks; we therefore report
$\text{Acc}_C$ there and label it as such rather than folding two
different quantities under one symbol. This appendix states the two
metrics precisely and shows why the Pizza\"iolo tracks carry
$\text{Acc}_C$.

Track~C is the clear case: $\textsf{is\_spicy}$ supervision is a
multi-witness disjunction over four spicy toppings and declares no
exhaustive family, so there is no set to take an argmax over and
$\text{Acc}_F$ is undefined; the harness records $0$ for these cells
by absence, not as a score, and we report them as N/A throughout.
Track~A is reported on the out-of-distribution tie-breaker split,
whose harness records per-pizza accuracies only and no family decode,
so $\text{Acc}_F$ is unavailable there without a rerun. Track~B does
admit the decode, over the $3$-way property class, and we report it in
\Cref{tab:pizza-argmax}: every method lands between $48.0$ and $50.2$,
a $2.2$-point spread against per-method standard deviations of
$3.6$--$5.3$, so the metric separates no pair of methods. The
$\text{Acc}_C$ column of the same runs spreads $78.5$--$87.9$ and does
separate them. Family-argmax on Pizza\"iolo would therefore replace an
informative metric with an uninformative one on the only track where
it is even computable, which is why the headline tables keep
$\text{Acc}_C$ and name it.

\begin{table}[t]
\caption{Pizza\"iolo Track~B under both accuracy metrics ($5$ seeds,
mean $\pm$ s.d., \%). $\text{Acc}_F$ is the family-argmax decode over
the $3$-way property class; $\text{Acc}_C$ is the per-atom accuracy on
the latent slice reported in \Cref{tab:headline-acc}. Rows absent from
the $\text{Acc}_F$ column did not record a family decode. Tracks~A
and~C are omitted: the metric is unavailable and undefined there
respectively (see text).}\label{tab:pizza-argmax}
\centering\small
\begin{tabular}{lcc}
\hline
Method & $\text{Acc}_F$ & $\text{Acc}_C$ \\
\hline
LTN                            & $49.5 \pm 3.6$ & $83.2 \pm 1.9$ \\
\textsc{Moose} (Semantic-loss WMC) & $50.2 \pm 4.6$ & $84.5 \pm 2.1$ \\
\textsc{Moose}+BEARS           & $49.8 \pm 3.9$ & $87.9 \pm 0.9$ \\
\textsc{Moose}+NeSyDM (RLOO)   & $48.0 \pm 5.3$ & $78.5 \pm 0.8$ \\
\textsc{Moose}+NeSyDM (exact)  & $49.7 \pm 4.4$ & $84.7 \pm 2.2$ \\
\hline
spread (max$-$min)             & $2.2$          & $9.4$ \\
\hline
\end{tabular}
\end{table}

\subsection{Paired significance of the headline
comparisons}\label{app:significance}
A mean $\pm$ s.d.\ over five runs is a
stability summary rather than a significance test, and the shared
evaluation instances call for a paired analysis. We therefore rerun
the principal comparisons over $20$ seeds and test each
\textsc{Moose}-vs-baseline difference with a paired Wilcoxon
signed-rank test and an exact sign-flip permutation test on the mean
difference, sharing seeds within every pair. Within each regime the
$p$-values are Holm-corrected across the baselines, and we report
significance at $\alpha = 0.05$. The metric is the regime's headline
accuracy: family-argmax accuracy $\text{Acc}_F$ on MNIST
Experiments~2--3, and latent-concept accuracy $\text{Acc}_C$ on
Pizza\"iolo Tracks~A--C (\Cref{tab:significance}); the Pizza\"iolo
significance statements therefore concern the latent-atom metric.

\begin{table}[t]
\caption{Paired significance of \textsc{Moose} (Semantic-loss WMC)
against each baseline over $20$ shared seeds. ``diff'' is the mean
per-seed difference (\textsc{Moose} $-$ baseline); ``Holm $p$'' is the
Holm-corrected Wilcoxon $p$-value within the regime. The verdict is
\emph{win} (\textsc{Moose} significantly higher), \emph{loss}
(significantly lower), or \emph{ns} (not significant) at
$\alpha = 0.05$. Metric: $\text{Acc}_F$ on MNIST, $\text{Acc}_C$ on
Pizza\"iolo. The $20$-seed reruns cover \textsc{Moose}, DeepProbLog, NeSyDM
(RLOO), LTN, and BEARS on Experiments~2--3 and Pizza\"iolo
Tracks~A--C; Experiment~1 and the remaining baselines (Independent,
ELEmbeddings, NeSyDM-exact) are $5$-seed only and are not part of the
paired analysis.}\label{tab:significance}
\centering\small
\begin{tabular}{llccccl}
\hline
Regime & Baseline & \textsc{Moose} & base & diff & Holm $p$ & verdict \\
\hline
MNIST Exp.~2 & DeepProbLog & $0.745$ & $0.423$ & $+0.322$ & $0.0002$ & win \\
             & NeSyDM      & $0.745$ & $0.595$ & $+0.150$ & $0.0002$ & win \\
             & LTN         & $0.745$ & $0.150$ & $+0.595$ & $0.0000$ & win \\
\hline
MNIST Exp.~3 & DeepProbLog & $0.977$ & $0.598$ & $+0.378$ & $0.0000$ & win \\
             & NeSyDM      & $0.977$ & $0.631$ & $+0.346$ & $0.0001$ & win \\
             & LTN         & $0.977$ & $0.151$ & $+0.825$ & $0.0000$ & win \\
\hline
Pizza~A (OOD) & NeSyDM       & $0.679$ & $0.709$ & $-0.031$ & $0.0004$ & loss \\
              & BEARS        & $0.679$ & $0.683$ & $-0.004$ & $0.1454$ & ns \\
\hline
Pizza~B & NeSyDM       & $0.841$ & $0.790$ & $+0.052$ & $0.0002$ & win \\
        & BEARS        & $0.841$ & $0.880$ & $-0.038$ & $0.0002$ & loss \\
\hline
Pizza~C & NeSyDM       & $0.809$ & $0.617$ & $+0.192$ & $0.0002$ & win \\
        & BEARS        & $0.809$ & $0.856$ & $-0.047$ & $0.0003$ & loss \\
\hline
\end{tabular}
\end{table}

The relational and role-chain regimes are unambiguous: \textsc{Moose}
beats every propositional baseline on Experiments~2--3 by margins that
are large and significant (Holm $p \le 0.0002$). On Pizza\"iolo the
picture is mixed, and we report it as such. \textsc{Moose} is
significantly stronger than NeSyDM on Tracks~B and~C, but BEARS is
significantly stronger than \textsc{Moose} on both; on the Track~A OOD
split NeSyDM significantly overtakes \textsc{Moose}, while the
\textsc{Moose}-vs-BEARS gap there is not significant. \textsc{Moose}
is therefore competitive but not dominant under symbolic ambiguity, and its
clear advantage is in the relational regime where EL-aware grounding
propagates evidence that propositional encodings cannot.

The cell means here differ marginally from the $5$-seed headline
tables (\Cref{tab:headline-acc}) because this analysis uses the
$20$-seed reruns; e.g.\ \textsc{Moose} on Exp.~2 is $0.745$ here
versus $0.746$ in the $5$-seed table. These paired tests
compare \textsc{Moose} against each baseline; the headline tables
mark the raw column-best rather than a significance verdict, since
the $20$-seed reruns do not cover every method or regime
(\Cref{tab:significance}). 
The test covers accuracy only, so
\Cref{tab:headline-ece} still marks the raw column-best.

\subsection{Inductive generalization}\label{app:inductive}
A further question is whether the learned models generalize to new
query
individuals rather than only to held-out images. We test this with an
inductive held-out-edge split of the relational (Exp.~2) and
role-chain (Exp.~3) regimes. On Exp.~2 we supervise only the
successor edges whose source digit lies in $\{0,2,4,6,8\}$
($0\!\to\!1, 2\!\to\!3, \dots, 8\!\to\!9$) and query individuals on
the held-out edges $\{1,3,5,7,9\}$
($1\!\to\!2, 3\!\to\!4, \dots, 9\!\to\!0$); on Exp.~3 the
$\textsf{plus\_two}$ relation forms two $5$-cycles, and we supervise
sources $\{0,1,2,3,6,7\}$ and query the held-out sources
$\{4,5,8,9\}$. In both splits the training and query edge sets
jointly cover all ten digit nodes, so the shared perception network
sees every digit during training; only the \emph{relational
configuration} of a query pair is novel, not its images; this is a
test of relational generalization, not zero-shot perception. As a
control we also evaluate on the supervised edge types with fresh
images (the transductive column of \Cref{tab:inductive}).

\begin{table}[t]
\caption{Inductive generalization over $20$ seeds
($\text{Acc}_F$, family-argmax accuracy). \emph{Inductive} evaluates
on relational configurations never supervised; \emph{transductive}
is the same-edge-type control on new images; \emph{drop} is
transductive $-$ inductive (positive means degradation on unseen
configurations).}\label{tab:inductive}
\centering\small
\begin{tabular}{llccc}
\hline
Regime & Method & Inductive & Transductive & Drop \\
\hline
Exp.~2 (relational) & \textsc{Moose} (WMC) & $0.717 \pm 0.071$ & $0.685$ & $-0.032$ \\
                    & Independent          & $0.582 \pm 0.036$ & $0.605$ & $+0.023$ \\
                    & NeSyDM               & $0.135 \pm 0.024$ & $0.594$ & $+0.459$ \\
\hline
Exp.~3 (role-chain) & \textsc{Moose} (WMC) & $0.595 \pm 0.138$ & $0.574$ & $-0.021$ \\
                    & Independent          & $0.551 \pm 0.093$ & $0.710$ & $+0.159$ \\
                    & NeSyDM               & $0.254 \pm 0.042$ & $0.712$ & $+0.458$ \\
\hline
\end{tabular}
\end{table}

\textsc{Moose} shows no inductive degradation on either regime: its
held-out-configuration accuracy matches its transductive control to
within noise ($-0.032$ and $-0.021$, i.e.\ marginally higher
inductive). NeSyDM, by contrast, collapses from $0.594$/$0.712$
transductive to $0.135$/$0.254$ on held-out configurations, and
the Independent baseline degrades moderately. The WMC layer scores
whole declared families rather than memorized edge identities, so a
query pair in an unseen configuration is handled by the same
exact-inference path as a supervised one, which is what preserves
accuracy. The paired \textsc{Moose}-vs-Independent advantage on the
inductive split is significant on Exp.~2 (Wilcoxon $p < 10^{-4}$,
$n=20$) but not on Exp.~3 ($p = 0.52$), where the role-chain circuit
leaves both methods with high seed-to-seed variance.

\subsection{Closure ablation: separating
$\Gamma_{\textsf{EL}}$ from $\Phi_{\textsf{clos}}$}\label{app:closure-ablation}
The compiled theory combines the EL-aware extractors
$\Gamma_{\textsf{EL}}$ with the optional closure clauses
$\Phi_{\textsf{clos}}$, whose covering axiom
$\top\sqsubseteq\bigsqcup_i D_i$ lies outside the EL profile. To
isolate their contributions we disable $\Phi_{\textsf{clos}}$ on all
three MNIST regimes (\texttt{--no-closed-world}, $5$ seeds) and pair
against the with-closure baseline (\Cref{tab:closure-ablation}).

\begin{table}[H]
\caption{Closure ablation on MNIST. $\text{Acc}_F$ (mean $\pm$ s.d.\
over $5$ seeds) with and without the exhaustive-family closure
clauses $\Phi_{\textsf{clos}}$. The
\texttt{--no-closed-world} arm removes mutual exclusion and covering
in all three regimes and profile-keyed reverse implications in
Experiment~1. $\Delta$ is the change in the mean when
$\Phi_{\textsf{clos}}$ is removed.}\label{tab:closure-ablation}
\centering\small
\begin{tabular}{llccc}
\hline
Regime & Method & with $\Phi_{\textsf{clos}}$ & without $\Phi_{\textsf{clos}}$ & $\Delta$ \\
\hline
Exp.~1 (atomic)     & \textsc{Moose}       & $48.1 \pm 10.1$ & $21.7 \pm 0.0$ & $-26.4$ \\
                    & \quad +BEARS         & $50.1 \pm 5.2$  & $21.7 \pm 0.0$ & $-28.4$ \\
                    & \quad +NeSyDM (RLOO) & $46.6 \pm 2.1$  & $21.7 \pm 0.0$ & $-24.9$ \\
                    & \quad +NeSyDM (exact) & $49.4 \pm 9.7$  & $21.7 \pm 0.0$ & $-27.7$ \\
\hline
Exp.~2 (relational) & \textsc{Moose}       & $74.6 \pm 0.3$  & $17.4 \pm 1.8$ & $-57.2$ \\
                    & \quad +BEARS         & $76.5 \pm 5.9$  & $30.9 \pm 4.5$ & $-45.6$ \\
                    & \quad +NeSyDM (RLOO) & $58.2 \pm 1.4$  & $39.4 \pm 3.0$ & $-18.8$ \\
                    & \quad +NeSyDM (exact) & $62.6 \pm 6.1$  & $17.3 \pm 5.1$ & $-45.3$ \\
\hline
Exp.~3 (role-chain) & \textsc{Moose}       & $96.1 \pm 7.2$  & $\phantom{0}9.4 \pm 0.9$ & $-86.7$ \\
                    & \quad +BEARS         & $89.2 \pm 7.2$  & $37.1 \pm 6.5$ & $-52.1$ \\
                    & \quad +NeSyDM (RLOO) & $62.7 \pm 2.9$  & $46.5 \pm 3.0$ & $-16.2$ \\
                    & \quad +NeSyDM (exact) & $60.1 \pm 5.7$  & $54.3 \pm 3.0$ & $\phantom{0}-5.8$ \\
\hline
\end{tabular}
\end{table}

Without $\Phi_{\textsf{clos}}$ the base WMC objective collapses toward
chance in every regime, and the collapse deepens with relational
complexity: $\text{Acc}_F$ falls by $26.4$, $57.2$, and $86.7$ points
on Experiments~1--3, reaching $9.4$ (chance is $10.0$) on the
role-chain regime. Experiment~1 is degenerate: all four methods
land on the same seed-independent $21.7\pm0.0$,
because without the complete closure package the digit family
receives no disambiguating gradient. The EL-aware extractors provide
the structural scaffold, but the complete
$\Phi_{\textsf{clos}}$ package supplies the learning signal that makes
the latent digit identifiable; the MNIST gains cannot be attributed to
$\Gamma_{\textsf{EL}}$ alone.

The reasoning-shortcut mitigations can partially substitute for
closure on the relational regimes, but which objective helps, and by
how much, is regime-dependent rather than systematic. NeSyDM~(RLOO)
is the most robust, retaining $39.4$ (Exp.~2) and $46.5$ (Exp.~3)
against plain \textsc{Moose}'s $17.4$ and $9.4$; BEARS buffers
partially in both ($30.9$ and $37.1$); and NeSyDM~(exact) splits the
two regimes, collapsing with the base objective on Experiment~2
($17.3$, $\Delta{=}{-}45.3$) yet proving the most closure-robust of
any method on Experiment~3 ($54.3$, $\Delta{=}{-}5.8$). The wrappers
that resist collapse also hold $\text{RS}_\text{cons}$ down (to
$0.28$--$0.43$, against plain \textsc{Moose}'s $0.43$--$0.50$ where it
does not collapse outright), reshaping the posterior enough to recover
a substantial fraction of the signal, though never to the
with-closure level. No single mitigation is a general substitute for
$\Phi_{\textsf{clos}}$: closure is necessary for the exact base
objective and only partially, inconsistently replaceable by a
reasoning-shortcut objective, which connects the mitigation behaviour
of \S\ref{sec:results} to the
$\Gamma_{\textsf{EL}}$-versus-$\Phi_{\textsf{clos}}$ separation.

\section{Related work: compilation, refinement, and embedding
  approaches}\label{app:related}

\paragraph{Rewriting-based approaches.}
\ELpp\ is Datalog-rewritable, and a substantial line of work recasts
description logic reasoning as rule evaluation: consequence-based
calculi for EL can be expressed as Datalog programs
\cite{Krotzsch2010ELDatalog}, and data-independent transformations
carry richer Horn DLs into Datalog while preserving assertion
entailment \cite{Carral2019Datalog}. \textsc{Moose} shares this view:
Stage~2 of the pipeline (\S\ref{sec:pipeline}) treats the ELK
saturation as a monotone Datalog program, but the two lines answer
different questions. A rewriting produces a reasoning procedure whose
output is an entailment set; it supplies no gradient and no
per-individual marginal, so it cannot act as a supervision channel.
The probabilistic description logics
\cite{Ceylan2014,Ceylan2017,GutierrezBasulto2011} attach a
distribution to the axioms and answer probabilistic queries, but that
distribution is specified by the modeller rather than learned from
perception; the fuzzy line \cite{Zhao2025DFELpp,Bobillo2011}
gains differentiability by replacing model-theoretic entailment with a
t-norm relaxation. \textsc{Moose} compiles the classical semantics
into a circuit that is at once exact and differentiable, which is what
allows neural outputs to serve as atom-level evidence.

\paragraph{Compilation-based approaches.}
Two recent works compile a description logic ontology into a
differentiable representation for use with neural networks.
Lazzari et~al.\ \cite{Lazzari2026} compile an \ALCI{} TBox to a
smooth, decomposable, deterministic circuit using a domino-style
reduction, then plug the circuit into a
multi-label classifier as either a Semantic-Loss regulariser or
the head of a Semantic Probabilistic Layer; their experiments
report on synthetically generated inputs and target the link- and
classification-prediction setting in which every output label of
the multi-label classifier is observed during training. Their
treatment leaves three regimes open: the EL profile with role
chains and role hierarchies, partial supervision in which a
subset of the ground atoms is latent, and an RS analysis of the
resulting predictor. DF-EL$^{++}$ \cite{Zhao2025DFELpp}
approximates \ELpp\ with a product-based fuzzy semantics that
preserves PTIME-tractability and reports knowledge-base completion
results on SNOMED~CT (377K concepts); its loss replaces the
classical model-theoretic entailment with a continuous
relaxation, and it does not address exact $\WMC$,
partial-supervision concept learning, or RS metrics.

\paragraph{Refinement, clustering, and embeddings.}
Iterative Local Refinement (ILR) \cite{Daniele2023ILR} corrects neural
predictions at inference time so they satisfy a fuzzy-logic relaxation
of a propositional formula; Embed2Sym \cite{Aspis2022Embed2Sym} trains
a perception network end-to-end on the downstream label, clusters the
resulting embedding, and labels the clusters via a symbolic solver
over a logic program. ELEmbeddings \cite{Kulmanov2019} maps concepts
to $n$-balls and roles to translation vectors with NF1--NF4 margin
losses that geometrise the \ELpp\ axioms.  DeepGOZero
\cite{Kulmanov2022DeepGOZero} trains a protein-sequence encoder
jointly with these geometric class embeddings, supervising on observed
protein--GO-class annotations via binary cross-entropy while the
NF1--NF4 losses constrain the class $n$-balls through the GO axioms;
predictions for a GO class with no training proteins are read off the
same trained $n$-ball, giving zero-shot annotations that ride on the
ontology axioms alone.  DeepGO-SE \cite{Kulmanov2024DeepGOSE} reframes
the same encoding as approximate semantic entailment.  Our
ELEmbeddings baseline (\Cref{app:non-wmc-baselines}) adopts the
identical NF1--NF4 losses and ball-distance per-atom scoring but
differs in the supervision regime, per-class function annotations
there versus latent ABox atoms here.  OWL2Vec$^*$ \cite{Chen2021}
embeds an ontology by training a word model on random walks over its
axioms; a recent survey \cite{Chen2025Survey} covers the
ontology-embedding landscape more broadly. ILR and Embed2Sym have not
been instantiated for OWL EL with role chains; the embedding methods
target subsumption inference and knowledge-base completion rather than
partial-supervision concept learning. All four trade the
model-theoretic semantics for a continuous representation, whereas
\textsc{Moose} compiles the same ELK semantics into a differentiable
circuit.

\end{document}